\definecolor{DarkRed}{rgb}{0.75,0,0}
\definecolor{DarkGreen}{rgb}{0,0.5,0}
\definecolor{DarkPurple}{rgb}{0.5,0,0.5}
\definecolor{DarkBlue}{rgb}{0,0,0.7}
\setlist{itemsep=0.8mm, topsep=0mm, partopsep=0mm, parsep=0mm}
\setlist[itemize]{leftmargin=*}
\setlist[enumerate]{leftmargin=*}
\newcommand{\removelatexerror}{\let\@latex@error\@gobble}
\newcommand{\todo}[1]{}
\newcommand{\prob}{\mathbb P}
\newcommand{\Ex}{\mathbb E}
\newcommand{\statespace}{\mathcal S}
\newcommand{\actionspace}{\mathcal A}
\newcommand{\saspace}{\statespace \times \actionspace}
\newcommand{\numS}{S}
\newcommand{\numA}{A}
\newcommand{\wmin}{w_{\min}}
\newcommand{\polylog}{\operatorname{polylog}}
\newcommand{\reals}{\mathbb R}
\newcommand{\llnp}{\operatorname{llnp}}
\newcommand{\defeq}{:=}
\newcommand{\ubev}{\texttt{UBEV}\xspace}
\newcommand{\ofulsic}{\texttt{ORLC-SI}\xspace}
\newcommand{\ulcr}{\texttt{ORLC}\xspace}
\newcommand{\fp}{^{(p)}}
\newcommand{\fr}{^{(r)}}
\newcommand{\range}{\operatorname{rng}}
\newcommand{\Vub}{\tilde V}
\newcommand{\Vlb}{\underaccent{\tilde}{V}}
\newcommand{\Qub}{\tilde Q}
\newcommand{\Qlb}{\underaccent{\tilde}{Q}}
\newcommand{\psiub}{\tilde \psi}
\newcommand{\psilb}{\underaccent{\tilde}{\psi}}
\icmltitlerunning{Policy Certificates: Towards Accountable Reinforcement Learning}
\begin{document}

\twocolumn[
\icmltitle{Policy Certificates: Towards Accountable Reinforcement Learning}

\icmlsetsymbol{equal}{*}

\begin{icmlauthorlist}
\icmlauthor{Christoph Dann}{cmu}
\icmlauthor{Lihong Li}{goo}
\icmlauthor{Wei Wei}{goo}
\icmlauthor{Emma Brunskill}{stan}
\end{icmlauthorlist}

\icmlaffiliation{cmu}{Carnegie Mellon University}
\icmlaffiliation{goo}{Google Research}
\icmlaffiliation{stan}{Stanford University}

\icmlcorrespondingauthor{Christoph Dann}{cdann@cdann.net}

\icmlkeywords{certificates, reinforcement learning, accountability}

\vskip 0.3in
]

\printAffiliationsAndNotice{}

\begin{abstract}
The performance of a reinforcement learning algorithm can vary drastically during learning because of exploration. Existing algorithms provide little information about the quality of their current policy before executing it, and thus have limited use in high-stakes applications like healthcare. We address this lack of \emph{accountability} by proposing that algorithms output \emph{policy certificates}. These certificates bound the sub-optimality and return of the policy in the next episode, allowing humans to intervene when the certified quality is not satisfactory. We further introduce two new algorithms with certificates and present a new  framework for theoretical analysis that guarantees the quality of their policies and certificates. For tabular MDPs, we show that computing certificates can even improve the sample-efficiency of optimism-based exploration. As a result, one of our algorithms is the first to achieve minimax-optimal PAC bounds up to lower-order terms, and this algorithm also matches (and in some settings slightly improves upon) existing minimax regret bounds.
\end{abstract}

\section{Introduction}

There is increasing excitement around applications of machine learning, but also growing awareness and concerns about fairness, accountability and transparency. Recent research aims to address these concerns but most work focuses on supervised learning and only few results~\citep{jabbari2016fair,joseph2016fairness,kannan2017fairness,raghavan2018externalities} exist on reinforcement learning (RL). 

One challenge when applying RL in practice is that, unlike in supervised learning, the performance of an RL algorithm is typically not monotonically increasing with more data due to the trial-and-error nature of RL that necessitates exploration. Even sharp drops in policy performance during learning are common, e.g., when the agent starts to explore a new part of the state space.  Such unpredictable performance fluctuation has limited the use of RL in high-stakes applications like healthcare, and calls for more \emph{accountable} algorithms that can quantify and reveal their performance online during learning. 

To address this lack of accountability, we propose that RL algorithms output
\emph{policy certificates} in episodic RL. Policy certificates consist of (1) a confidence interval of the algorithm's expected sum of rewards (return) in the next episode (policy return certificates) and (2) a bound on how far from the optimal return the performance can be (policy optimality certificates). Certificates make the policy's performance 
more transparent and accountable, and allow designers to intervene if necessary. 
For example, in medical applications, one would need to intervene unless the policy achieves a certain minimum treatment outcome; in financial applications, policy optimality certificates can be used to assess the potential loss when learning a trading strategy. 
In addition to accountability, we also want RL algorithms to be sample-efficient and quickly achieve good performance. 
To formally quantify accountability and sample-efficiency of an algorithm, we introduce a new framework for theoretical analysis called IPOC. IPOC bounds guarantee that certificates indeed bound the algorithm's expected performance in an episode, and prescribe the rate at which the algorithm's policy and certificates improve with more data. IPOC is stronger than other 
frameworks like regret~\citep{jaksch2010near}, PAC \citep{kakade2003sample} and Uniform-PAC \citep{dann2017unifying}, that only guarantee the cumulative performance of the algorithm, but do not provide bounds for \emph{individual} episodes during learning. IPOC also provides stronger bounds and more nuanced guarantees on per episode performance than KWIK~\cite{li2008knows}. 

A natural way to create accountable and sample-efficient RL algorithms is to combine existing sample-efficient algorithms with off-policy policy evaluation approaches to estimate the return (expected sum of rewards) of the algorithm's policy before each episode.
Existing policy evaluation approaches estimate the return of a fixed policy from a batch of data \citep[e.g.,][]{thomas2015high, jiang2016doubly, thomas2016data}. They provide little to no guarantees when the policy is not fixed but computed from that same batch of data, as is here the case. They also do not reason about the return of the unknown optimal policy which is necessary for providing policy optimality certificates.
We found that by focusing on optimism-in-the-face-of-uncertainty (OFU) based RL algorithms for updating the policy and model-based policy evaluation techniques for estimating the policy returns, we can create sample-efficient algorithms that compute policy certificates on both the current policy's return and its difference to the optimal return. The main insight is that OFU algorithms compute an upper confidence bound on the optimal return from an empirical model when updating the policy. Model-based policy evaluation can leverage the same empirical model to compute a confidence interval on the policy return, even when the policy depends on the data. We illustrate this approach with new algorithms for two different episodic settings. 

Perhaps surprisingly, we show that in tabular Markov decision processes (MDPs) it can be beneficial to explicitly leverage the combination of OFU-based policy optimization and model-based policy evaluation to improve either component. Specifically, computing the certificates can directly improve the underlying OFU approach and knowing that the policy converges to the optimal policy at a certain rate improves the accuracy of policy return certificates. 
As a result, the guarantees for our new algorithm improve state-of-the-art regret and PAC bounds in problems with large horizons and are minimax-optimal up to lower-order terms.

The second setting we consider are finite MDPs with linear side information (context)~\citep{abbasi2014online,hallak2015contextual, modi2018markov}, which is of particular interest in practice. For example, in a drug treatment optimization task where each patient is one episode, context is the background information of the patient which influences the treatment outcome. While one expects the algorithm to learn a good policy quickly for frequent contexts, the performance for unusual patients may be significantly more variable due to the limited prior experience of the algorithm. Policy certificates allow humans to detect when the current policy is good for the current patient and intervene if a certified performance is deemed inadequate. For example, for this health monitoring application, a human expert could intervene to either directly specify the policy for that episode, or in the context of automated customer service, the service could be provided at reduced cost to the customer. 

To summarize, We make the following main contributions:
\begin{enumerate}
    \item We introduce policy certificates and the IPOC framework for evaluating RL algorithms with certificates. Similar to existing frameworks like PAC, it provides formal requirements to be satisfied by the algorithm, here requiring the algorithm to be an efficient learner and to quantify its performance online through policy certificates. 
    \item 
    We provide a new  RL algorithm for finite, episodic MDPs that satisfies this definition, 
    and show that it has stronger, minimax regret and PAC guarantees than prior work. Formally, 
    our sample complexity bound is $\tilde O(\numS \numA H^2 / \epsilon^2 + \numS^2 \numA H^3 / \epsilon)$ vs. prior $\tilde O(\numS \numA H^4 / \epsilon^2 + \numS^2 \numA H^3 / \epsilon)$ \citep{dann2017unifying}, and our regret bound $\tilde O(\sqrt{\numS \numA H^2 T} + \numS^2 \numA H^3)$ improves  
    prior work \citep{azar2017minimax} since it has minimax rate up to log-terms in the dominant term even for long horizons $H > \numS \numA$. 
    \item We introduce a new RL algorithm for finite, episodic MDPs with linear side information that has a cumulative IPOC bound, which is tighter than past results~\cite{abbasi2014online} by a factor of $\sqrt{\numS \numA H}$.
\end{enumerate}

\section{Setting and Notation}
We consider episodic RL problems where the agent interacts with the environment in episodes of a certain length. While the framework for policy certificates applies more breadly, we focus on finite MDPs with linear side information~\citep{modi2018markov,hallak2015contextual,abbasi2014online} for concreteness.  This setting includes tabular MDPs as a special case but is more general and can model variations in the environment across episodes, e.g., because different episodes correspond to treating different patients in a healthcare application. Unlike the tabular special case, function approximation is necessary for efficient learning.
\vspace{-3mm}

\paragraph{Tabular MDPs}
The agent interacts with the MDP in episodes indexed by $k$. Each episode is a sequence $(s_{k, 1}, a_{k, 1}, r_{k, 1}, \dots, s_{k, H}, a_{k, H}, r_{k, H})$ of $H$ states $s_{k,h} \in \statespace$, actions $a_{k,h} \in \actionspace$ and scalar rewards  $r_{k,h} \in [0,1]$. For notational simplicity, we assume that the initial state $s_{k, 1}$ is deterministic. The actions are taken as prescribed by the agent's policy $\pi_k$ and we here focus on deterministic time-dependent policies, i.e., $a_{k,h} = \pi_k(s_{k,h}, h)$ for all time steps $h \in [H] := \{1, 2, \dots H\}$. The successor states and rewards are sampled from the MDP as $s_{k,h+1} \sim P(s_{k,h}, a_{k,h})$ and $r_{k,h} \sim P_R(s_{k,h}, a_{k,h})$. In tabular MDPs the size of the state space $\numS = |\statespace|$ and action space $\numA = |\actionspace|$ are finite.
\vspace{-3mm}

\paragraph{Finite MDPs with linear side information.}
We assume that state- and action-space are finite as in tabular MDPs,
but here the agent essentially interacts with a family of infinitely many tabular MDPs that is parameterized by linear contexts.  At the beginning of episode $k$, two contexts, $x\fr_k \in \RR^{d\fr}$ and $x\fp_k \in \RR^{d\fp}$, are observed and the agent interacts in this episode with a tabular MDP, whose dynamics and reward function depend on the contexts in a linear fashion.  Specifically, it is assumed that the rewards are sampled from $P_R(s,a)$ with means
$r_k(s,a) = (x\fr_k)^\top \theta\fr_{s,a}$ and transition probabilities are $ P_k(s' | s,a) = (x\fp_k)^\top \theta\fp_{s', s,a}$ where $\theta\fr_{s,a} \in \RR^{d\fr}$ and $\theta\fp_{s', s, a} \in \RR^{d\fp}$ are unknown parameter vectors for each $s, s' \in \statespace, a \in \actionspace$.  As a regularity condition, we assume bounded parameters, i.e., $\|\theta\fr_{s,a}\|_2 \leq \xi_{\theta\fr}$ and $\|\theta\fp_{s', s, a}\|_2 \leq \xi_{\theta\fp}$ as well as bounded contexts $\|x\fr_k\|_2 \leq \xi_{x\fr}$ and  $\|x\fp_k\|_2 \leq \xi_{x\fp}$.  We allow $x_k\fr$ and $x_k\fp$ to be different, and use $x_k$ to denote $(x_k\fr, x_k\fp)$ in the following.
Note that our framework and algorithms can handle adversarially chosen contexts.
\vspace{-3mm}

\paragraph{Return and optimality gap.}
The quality of a policy $\pi$ in any episode $k$ is evaluated by the \emph{total expected reward} or \emph{return}:
$    
    \rho_k(\pi) \defeq \Ex\left[ \sum_{h=1}^H r_{k,h} \big| a_{k,1:H} \sim \pi \right]
    $,
where this notation means that all actions in the episode are taken as prescribed by a policy $\pi$. Optimal policy and return $\rho^\star_k = \max_{\pi} \rho_k(\pi)$ may depend on the episode's contexts. The difference of achieved and optimal return is called \emph{optimality gap} $\Delta_k = \rho^\star_k - \rho_k(\pi_k)$ for each episode $k$ where $\pi_k$ is the algorithm's policy in that episode. 
\vspace{-3mm}

\paragraph{Additional notation.} 
We denote the largest possible optimality gap by $\Delta_{\max} = H$, and the value functions of $\pi$ in episode $k$ by $Q^{\pi_{k}}_{h}(s,a) = \Ex[\sum_{t=h}^H r_{k,t} | a_{k,h} = a, a_{k,h+1:H} \sim \pi]$ and $V^{\pi_k}_{h}(s) = Q^{\pi_{k}}_{h}(s, \pi(s, h))$. 
Optimal versions are marked by superscript $\star$ and subscripts are omitted when unambiguous. We treat $P(s,a)$ as a linear operator, that is, $P(s,a)f = \sum_{s' \in \statespace} P(s' | s,a) f(s')$ for any $f: \statespace \rightarrow \RR$. We also use $\sigma_{q}(f) = \sqrt{q(f - qf)^2}$ for the standard deviation of $f$ with respect to a state distribution $q$ and $V^{\max}_{h} = (H-h+1)$ for all $h \in [H]$.
We also use the common short hand notation $a \vee b = \max\{a, b\}$ and $a \wedge b = \min\{a, b\}$ as well as $\tilde O(f) = O(f \cdot \operatorname{poly}(\log(f)))$.

\section{The IPOC Framework}
\label{sec:framework}

During execution, the optimality gaps $\Delta_k$ are hidden and the algorithm only observes the sum of rewards which is a sample of $\rho_k(\pi_k)$. This causes risk as one does not know whether the algorithm is playing a good or potentially bad policy.
We introduce a new learning framework that mitigates this limitation. This framework forces the algorithm to output its current policy $\pi_k$ as well as certificates $\epsilon_k \in \RR_+$ and $\Ical_k \subseteq \RR$ before each episode $k$. The \emph{return certificate} $\Ical_k$ is a confidence interval on the return of the policy, while the \emph{optimality certificate} $\epsilon_k$ informs the user how sub-optimal the policy can be for the current context, i.e., $\epsilon_k \geq \Delta_k$. Certificates allow one to intervene if needed. For example, in automated customer services, one might reduce the service price in episode $k$ if certificate $\epsilon_k$ is above a certain threshold, since the quality of the provided service cannot be guaranteed. When there is no context, an optimality certificate upper bounds the sub-optimality of the current policy in any episode which makes algorithms anytime interruptable~\citep{zilberstein1996optimal}: one is guaranteed to always know a policy with improving performance. 
Our learning framework is formalized as follows: 

\begin{definition}[Individual Policy Certificates (IPOC) Bounds]
\label{def:ipoc}
An algorithm satisfies an individual policy certificate (IPOC) bound $F$ if for a given $\delta \in (0,1)$ it outputs the current policy $\pi_k$, a \emph{return certificate}  $\Ical_k \subseteq \RR$ and an \emph{optimality certificate} $\epsilon_k$ with $\epsilon_k \geq | \Ical_k|$ before each episode $k$ (after observing the contexts) so that with probability at least $1-\delta$:
\begin{itemize}
\item[1.] all return certificates contain the return policy $\pi_k$ played in episode $k$ and
all optimality certificates are upper bounds on the sub-optimality of $\pi_k$, i.e., $\forall ~k \in \NN: \quad \epsilon_k \geq \Delta_k$ and $\rho_k(\pi_k) \in \Ical_k$ 
; and either
\item[2a.] for all number of episodes $T$ the cumulative sum of certificates is bounded $\sum_{k=1}^T \epsilon_k \leq F(W, T, \delta)$ (Cumulative Version), or
\item[2b.] for any threshold $\epsilon$, the number of times certificates can exceed the threshold is bounded as
$\sum_{k=1}^\infty \one\{ \epsilon_k > \epsilon\} \leq F(W, \epsilon, \delta)$ (Mistake Version).
\end{itemize}
Here, $W$ can be (known or unknown) properties of the environment.
If conditions 1 and 2a hold, we say the algorithm has a cumulative IPOC bound and if conditions 1 and 2b hold, we say the algorithm has a mistake IPOC bound.
\end{definition}

Condition~1 alone would be trivial to satisfy with $\epsilon_k = \Delta_{\max}$ and $\Ical_k = [0, \Delta_{\max}]$, but condition 2 prohibits this by controlling the size of $\epsilon_k$ (and therefore the size of $|\Ical_k| \leq \epsilon_k$). Condition 2a bounds the cumulative sum of optimality certificates (similar to regret bounds), 
and condition~2b bounds the size of the superlevel sets of $\epsilon_k$ (similar to PAC bounds). We allow both alternatives as condition~2b is stronger but one sometimes can only prove condition~2a (see Appendix~\ref{sec:nomistakebound}\footnote{\label{foot:appendixloc}See full version of this paper at \url{https://arxiv.org/abs/1811.03056}.}).
An IPOC bound controls simultaneously the quality of certificates (how big $\epsilon_k - \Delta_k$ and $|\Ical_k|$ are) as well as the optimality gaps  $\Delta_k$ themselves and, hence, an IPOC bound not only guarantees that the algorithm improves its policy but also becomes better at telling us how well the policy performs. Note that  the condition $\epsilon_k \geq |\Ical_k|$ in Definition~\ref{def:ipoc} is natural as any upper bound on $\rho_k^\star$ is also an upper bound on $\rho_k(\pi_k)$ and is made for notational convenience. 

We would like to emphasize that we provide certificates on the return, the \emph{expected} sum of rewards, in the next episode. Due to the stochasticity in the environment, one in general cannot hope to accurately predict the sum of rewards directly. Since return is the default optimization criteria in RL, certificates for it are a natural starting point and relevant in many scenarios. Nonetheless, certificates for other properties of the sum-of-reward distribution of a policy are an interesting direction for future work. For example, one might want certificates on properties that take into account the variability of the sum of rewards (e.g., conditional value at risk) in high-stakes applications which are often the objective in risk-sensitive RL.
\subsection{Relation to Existing Frameworks}
\label{sec:existing_frameworks}
Unlike IPOC, existing frameworks for RL only guarantee sample-efficiency of the algorithm over multiple episodes and do not provide performance bounds for single episodes during learning. The common existing frameworks are:
\begin{itemize}
    \item \emph{Mistake-style PAC bounds} \citep{strehl2006pac,strehl2009reinforcement,szita2010model,lattimore2012pac,dann2015sample}
     bound the number of $\epsilon$-mistakes, that is, the size of the set $\{k \in \NN ~:~  \Delta_k > \epsilon\}$ with high probability, but do not tell us when mistakes happen. The same is true for the stronger Uniform-PAC bounds~\citep{dann2017unifying} which hold for all $\epsilon$ jointly.
   \item \emph{Supervised-learning style PAC bounds} \citep{kearns2002near,jiang2017contextual, dann2018oracle} ensure that the algorithm outputs an $\epsilon$-optimal policy for a given $\epsilon$, i.e., they ensure $\Delta_k \leq \epsilon$ for $k$ greater than the bound. Yet, they need to know $\epsilon$ ahead of time and tell us nothing about $\Delta_k$ during learning (for $k$ smaller than the bound).
    \item \emph{Regret bounds} \citep{osband2013more, osband2016generalization,azar2017minimax,jin2018q} control the cumulative sum of optimality gaps $\sum_{k=1}^T \Delta_k$ (regret) which does not yield any nontrivial guarantee for individual $\Delta_k$ because it does not reveal which optimality gaps are small.
\end{itemize}

We show that mistake IPOC bounds are stronger than any of the above guarantees, i.e., they imply Uniform PAC, PAC, and regret bounds. Cumulative IPOC bounds are slightly weaker but still imply regret bounds. Both versions of IPOC also ensure that the algorithm is anytime interruptable, i.e., it can be used to find better and better policies that have small $\Delta_k$ with high probability $1 - \delta$. That means IPOC bounds imply supervised-learning style PAC bounds for all $\epsilon$ jointly.
These claims are formalized as follows:
\begin{proposition}
\label{prop:ipoc_properties_cum}
Assume an algorithm has a cumulative IPOC bound $F(W, T, \delta)$.
\begin{enumerate}
    \item Then it has a regret bound of same order, i.e., with probability at least $1 - \delta$, for all $T$ the regret $R(T) := \sum_{k=1}^T \Delta_k$ is bounded by $F(W, T, \delta)$.
    \item If $F$ has the form  $\sum_{p=0}^N (C_p(W, \delta) T)^{\frac{p}{p+1}}$ for appropriate functions $C_p$, then with probability at least $1-\delta$ for any $\epsilon$, it outputs a certificate $\epsilon_k \leq \epsilon$ within
    \vspace{-2mm}
    \begin{align}
        \sum_{p=0}^N \frac{C_{p}(W, \delta)^{p} (N+1)^{p+1}}{\epsilon^{p+1}}
    \end{align}
    episodes. Hence, for settings without context, the algorithm outputs an $\epsilon$-optimal policy within that number of episodes (supervised learning-style PAC bound).
\end{enumerate}
\end{proposition}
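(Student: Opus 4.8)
The plan is to prove both parts on the \emph{single} high-probability event supplied by the cumulative IPOC bound, so that no additional failure probability is incurred. Fix $\delta$ and condition throughout on the event of probability at least $1-\delta$ on which conditions~1 and~2a of Definition~\ref{def:ipoc} hold simultaneously for all $k$ and all $T$.

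For Part~1 the argument is immediate. Condition~1 gives $\Delta_k \leq \epsilon_k$ for every $k$, and condition~2a gives $\sum_{k=1}^T \epsilon_k \leq F(W, T, \delta)$ for every $T$. Chaining these on the same event yields $R(T) = \sum_{k=1}^T \Delta_k \leq \sum_{k=1}^T \epsilon_k \leq F(W, T, \delta)$ for all $T$ at once, which is exactly the claimed regret bound of the same order.

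For Part~2 I would argue by contradiction. Write $T_0 = \sum_{p=0}^N C_p(W,\delta)^p (N+1)^{p+1} \epsilon^{-(p+1)}$ for the claimed episode count and suppose that $\epsilon_k > \epsilon$ for all $k \in \{1, \dots, T_0\}$. Summing gives $\sum_{k=1}^{T_0} \epsilon_k > \epsilon T_0$, while condition~2a gives $\sum_{k=1}^{T_0} \epsilon_k \leq F(W, T_0, \delta) = \sum_{p=0}^N (C_p(W,\delta) T_0)^{p/(p+1)}$. A contradiction follows once I show $F(W, T_0, \delta) \leq \epsilon T_0$, forcing the existence of some $k \leq T_0$ with $\epsilon_k \leq \epsilon$. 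Since $\epsilon_k \geq \Delta_k$ and the certificate is observable, in the no-context setting (where $\rho_k^\star$ does not vary with $k$) this $k$ identifies an $\epsilon$-optimal policy, giving the stated supervised-learning-style PAC guarantee.

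The crux is the algebraic inequality $F(W, T_0, \delta) \leq \epsilon T_0$, which I would establish termwise by showing $(C_p T_0)^{p/(p+1)} \leq \epsilon T_0 / (N+1)$ for each $p$, so that summing the $N+1$ terms recovers the full budget $\epsilon T_0$. For $p \geq 1$ this inequality rearranges (raise to the power $(p+1)/p$ and divide by $T_0$) exactly to $T_0 \geq C_p^p (N+1)^{p+1} \epsilon^{-(p+1)}$, which holds because $T_0$ is defined as the sum of precisely these nonnegative quantities over $p$; the $p=0$ term contributes the constant $1$ to $F$ and is dominated by the $p=0$ summand $(N+1)/\epsilon$ of $T_0$ whenever $\epsilon T_0 \geq N+1$, using the convention $C_0^0 = 1$. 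The main obstacle is purely bookkeeping: verifying that splitting $\epsilon T_0$ evenly across the $N+1$ terms matches the exact exponents in the definition of $T_0$ and correctly handling the degenerate $p=0$ term. Everything else reduces to conditions~1 and~2a applied on the same event.
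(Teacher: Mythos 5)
Your proposal is correct and follows essentially the same route as the paper: Part~1 is the identical chaining of conditions~1 and~2a on the single high-probability event, and your contradiction argument for Part~2 (if all certificates exceeded $\epsilon$ then $\epsilon T_0 < \sum_k \epsilon_k \leq F(W,T_0,\delta)$) is just the contrapositive of the paper's min-is-at-most-average argument, with the same termwise verification that each of the $N+1$ summands of $F$ is at most $\epsilon T_0/(N+1)$ once $T_0$ dominates the corresponding $C_p^p(N+1)^{p+1}\epsilon^{-(p+1)}$ term.
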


\begin{proposition}
\label{prop:ipoc_properties_mis}
If an algorithm has a mistake IPOC bound $F(W, \epsilon, \delta)$, then
\begin{enumerate}
    \item it has a uniform PAC bound $F(W, \epsilon, \delta)$, i.e., with probability at least $1-\delta$, the number of episodes with $\Delta_k \geq \epsilon$ is at most $F(W, \epsilon, \delta)$ for all $\epsilon > 0$;
    \item with probability $ \geq 1-\delta$ for all $\epsilon$, it outputs a certificate $\epsilon_k \leq \epsilon$ within $F(W, \epsilon, \delta) + 1$ episodes. For settings without context, that means the algorithm outputs an $\epsilon$-optimal policy within that many episodes (supervised learning-style PAC).
    \item if $F$ has the form $\sum_{p=1}^N \frac{C_p(W, \delta)}{\epsilon^p} \left(\ln \frac{\tilde C(W, \delta)}{\epsilon}\right)^{np}$ with $C_p(W, \delta) \geq 1$ and constants $N, n \in \NN$, it also has a cumulative IPOC bound of order
    \vspace{-2mm}
    \[
        \!\!\!\!\!\!\!\tilde O\left(\sum_{p=1}^{N} C_{p}(W, \delta)^{1/p} T^{\frac{p-1}{p}} \polylog (\Delta_{\max}, \tilde C(W, \delta), T) \right).
    \]
\end{enumerate}
\vspace{-3mm}
\end{proposition}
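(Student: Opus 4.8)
The plan is to fix, once and for all, the high-probability event (probability $\ge 1-\delta$) on which both parts of the mistake IPOC bound hold, and then derive all three claims deterministically on that event; this is also exactly what is needed because the cumulative bound in part~3 requires the very same condition~1. On this event condition~1 gives $\Delta_k \le \epsilon_k$ for every $k$, and condition~2b gives $\sum_{k=1}^\infty \one\{\epsilon_k > \epsilon\} \le F(W,\epsilon,\delta)$ for every $\epsilon$. Part~1 then follows from the inclusion $\{k : \Delta_k \ge \epsilon\} \subseteq \{k : \epsilon_k \ge \epsilon\}$: counting elements and invoking condition~2b at thresholds approaching $\epsilon$ from below (to pass from the strict $>$ in the definition to the non-strict $\ge$, which is immaterial for the continuous $F$ of interest) bounds the number of $\epsilon$-mistakes by $F(W,\epsilon,\delta)$ simultaneously for all $\epsilon$, i.e.\ a uniform-PAC bound. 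Part~2 is a pigeonhole argument: since at most $F(W,\epsilon,\delta)$ episodes ever have $\epsilon_k > \epsilon$, at least one of the first $F(W,\epsilon,\delta)+1$ episodes has $\epsilon_k \le \epsilon$, and for that episode $\Delta_k \le \epsilon_k \le \epsilon$; in the no-context setting $\rho^\star_k$ is constant, so this policy is genuinely $\epsilon$-optimal and can be retained.

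Part~3 is the substantive step, converting the superlevel-set (mistake) bound into a bound on the cumulative certificate $\sum_{k=1}^T \epsilon_k$. First I would assume without loss of generality that $\epsilon_k \le \Delta_{\max}$, since replacing any certificate by $\min\{\epsilon_k,\Delta_{\max}\}$ preserves $\Delta_k \le \epsilon_k$ and only shrinks the superlevel sets, so condition~2b still holds. The key device is the layer-cake identity $\epsilon_k = \int_0^{\Delta_{\max}} \one\{\epsilon_k > \epsilon\}\,d\epsilon$, which after summing over $k \le T$ and exchanging sum and integral gives
\[
\sum_{k=1}^T \epsilon_k = \int_0^{\Delta_{\max}} \sum_{k=1}^T \one\{\epsilon_k > \epsilon\}\, d\epsilon \le \int_0^{\Delta_{\max}} \min\{T,\ F(W,\epsilon,\delta)\}\, d\epsilon,
\]
where the inner count is bounded both trivially by $T$ (only $T$ episodes have been played) and by the mistake bound $F$.

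Finally I would evaluate this integral for $F(W,\epsilon,\delta) = \sum_{p=1}^N C_p(W,\delta)\,\epsilon^{-p}\,(\ln(\tilde C(W,\delta)/\epsilon))^{np}$. Since $\min\{T,\sum_p F_p\} \le \sum_p \min\{T, F_p\}$, it suffices to integrate each term separately. For the $p$-th term the integrand equals $T$ below the crossover $\epsilon_p^\star$ at which $F_p(\epsilon_p^\star)=T$, so $\epsilon_p^\star$ is of order $(C_p/T)^{1/p}$ up to log factors, and equals $C_p\,\epsilon^{-p}(\ln(\cdot))^{np}$ above it; the lower piece contributes $T\epsilon_p^\star$ of order $C_p^{1/p}T^{(p-1)/p}$, and the upper piece contributes the same order (for $p=1$ it becomes a $C_1\ln(\cdot)$ term, consistent with $T^{(p-1)/p}=1$). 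Summing over $p$ reproduces the claimed order $\tilde O\big(\sum_{p=1}^N C_p(W,\delta)^{1/p}T^{(p-1)/p}\,\polylog(\Delta_{\max},\tilde C(W,\delta),T)\big)$. The main obstacle is the bookkeeping in this last step: one must carry the $(\ln(\tilde C/\epsilon))^{np}$ factors through the crossover computation and the integration and check that they only inflate the bound by polylogarithmic factors, rather than altering the polynomial rate in $T$ and $C_p$; everything else is a routine deterministic consequence of conditions~1 and~2b on the good event.
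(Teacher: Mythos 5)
Your proposal is correct, and parts 1 and 2 are exactly the paper's argument (the event inclusion $\{k : \Delta_k \geq \epsilon\} \subseteq \{k: \epsilon_k \geq \epsilon\}$ from condition 1, then pigeonhole); if anything you are more careful than the paper about the strict/non-strict threshold mismatch, which the paper glosses over by silently comparing $\one\{\Delta_k > \epsilon\}$ with $\one\{\epsilon_k > \epsilon\}$. Part 3 bounds the same quantity the paper does --- in effect $\int_0^{\Delta_{\max}} \min\{T, F(W,\epsilon,\delta)\}\,d\epsilon$ --- but you reach it by a cleaner route: the paper asserts $\sum_{k\le T}\epsilon_k \le T\epsilon_{\min} + \int_{\epsilon_{\min}}^{\Delta_{\max}} F\,d\epsilon$ via an informal worst-case rearrangement (``the algorithm first outputs as many $\epsilon_k = \Delta_{\max}$ as allowed''), needs a separate Case 1 for small $T$, and requires an explicit log-corrected closed form for $\epsilon_{\min}$ together with a verification that $F(W,\epsilon_{\min},\delta) \le T$; your layer-cake identity $\epsilon_k = \int_0^{\Delta_{\max}}\one\{\epsilon_k > \epsilon\}\,d\epsilon$ plus exchange of sum and integral makes the integral bound rigorous in one line, and your per-term split $\min\{T,\sum_p F_p\} \le \sum_p \min\{T,F_p\}$ with per-term crossovers $\epsilon_p^\star$ dispenses with the case analysis (for tiny $T$ the crossover simply exceeds $\Delta_{\max}$ and that term contributes at most $T\Delta_{\max}$, which is still of the claimed order). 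The trade-off is that the paper's explicit $\epsilon_{\min}$ carries out the $(\ln(\tilde C/\epsilon))^{np}$ bookkeeping that you defer; this bookkeeping is routine, and your anticipated conclusions (crossover of order $(C_p/T)^{1/p}$ up to polylogs, each piece contributing $\tilde O(C_p^{1/p}T^{(p-1)/p})$, the $p=1$ tail integral producing only logarithmic factors) are exactly what the paper's computation confirms. One shared informality: both you and the paper assume $\epsilon_k \le \Delta_{\max}$; your truncation remark makes this explicit and legitimate (intersecting $\Ical_k$ with $[0,\Delta_{\max}]$ preserves condition 1), whereas the paper invokes it without comment.
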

The functional form in part~2 of Proposition~\ref{prop:ipoc_properties_cum} includes common polynomial bounds like $O(\sqrt{T})$ or $O(T^{2/3})$ with appropriate factors and similarly for part~3 of Proposition~\ref{prop:ipoc_properties_mis} which covers for example $\tilde O(1/\epsilon^2)$.

Our IPOC framework is similar to KWIK \citep{li2008knows}, in that the algorithm is required to declare how well it will perform. 
Hower, KWIK only requires an algorithm to declare whether the output will perform better than a single pre-specified input threshold. Existing KWIK for RL methods only provide such a binary classification, and have less strong learning guarantees. In a sense IPOC is a  generalization of KWIK.

\section{Algorithms with Policy Certificates}
\label{sec:algo}
A natural path to obtain RL algorithms with IPOC bounds is to combine 
existing provably efficient online RL algorithms with 
an off-policy policy evaluation method to compute a confidence interval on the online RL algorithm's policy for the 
current episode. This yields policy return certificates, but not necessarily policy optimality certificates -- bounds on the difference of the optimal and current policy's return. Estimating the optimal return using off-policy evaluation algorithms in order to compute optimality certificates would
require a significant computational burden, e.g. evaluating all  (exponentially many) policies.

However optimism in the face of uncertainty (OFU) algorithms 
can be modified to provide both policy return 
certificates and optimality certificates without the need 
for a separate off-policy policy optimization step. Specifically, we here consider 
OFU algorithms that maintain an 
upper confidence bound (for a potentially changing confidence level) on the optimal value function $Q^\star_{k,h}$ and therefore optimal return $\rho^\star_k$. This bound is also an upper bound on the return of the current policy which is chosen to maximize this bound. 
Many OFU methods explicitly 
maintain a confidence set of the MDP model to compute the upper confidence bound on $Q^\star_{k,h}$. 
These same confidence sets of the model can be used to compute a lower bound on the value function of the current policy. In doing so, OFU algorithms can be modified with little computational overhead to provide policy return and optimality certificates. 

For these reasons, we focus on OFU methods, introducing two new algorithms with policy certificates, one for tabular MDPs and and one for the more general MDPs with linear side information setting. Both approaches have a similar structure, but leverage different confidence sets and model estimators. In the first case, we show that maintaining lower bounds on the current policy's value has significant benefits beyond enabling policy certificates: lower bounds help us to derive a tighter bound on our uncertainty over the range of future values. Thus we are able to provide the strongest, to our knowledge, PAC and regret bounds for tabular MDPs. It remains an intriguing but non-trivial question if we can create confidence sets that leverage explicit upper and lower bounds for the linear side information setting.

\subsection{Tabular MDPs}

\begin{figure*}[t]
\begingroup
\removelatexerror
\begin{algorithm}[H]
\SetInd{0.7em}{0.5em}
\SetKwInOut{Inputa}{Input}
\Inputa{failure tolerance $\delta \in (0,1]$}
    $\phi(n) = 1 \wedge \sqrt{\frac{0.52}{n}\left( 1.4 \ln \ln(e \vee n) + \ln\frac{26\numS \numA (H + 1 + \numS)}{\delta}\right)};
    \qquad
    \Vub_{k,H+1}(s) = 0; \quad  \Vlb_{k,H+1}(s) = 0 \quad \forall s \in \statespace, k \in \NN
    $\;
    
\For{$k=1, 2, 3, \dots$}{

    \For(\tcp*[f]{update empirical model and number of observations}){$s',s \in \statespace, a \in \actionspace$}
    {
        $n_k(s,a) = \sum_{i=1}^{k-1} \sum_{h=1}^H \one\{s_{i,h} = s, a_{i,h} = a\}$ \tcp*{number of times (s,a) was observed}
        $\hat r_k(s,a) = \frac{1}{n_k(s,a)} \sum_{i=1}^{k-1} \sum_{h=1}^H r_{i,h} \one\{s_{i,h} = s, a_{i,h} = a\}$
        \tcp*{avg. reward observed for (s,a)}
        $\hat P_k(s' | s, a) = \frac{1}{n_k(s,a)} \sum_{i=1}^{k-1} \sum_{h=1}^H \one\{s_{i,h} = s, a_{i,h} = a, s_{i,h+1} = s'\}$
    }
    \For
    (\tcp*[f]{optimistic planning with upper and lower confidence bounds})
    {$h=H$ \KwTo $1$ \textbf{and} $s \in \statespace$}
    {

            \For{$a \in \actionspace$}
            {
            $\psi_{k,h}(s,a) = (1 \!+\! \sqrt{12}\sigma_{\hat P_k(s,a)}(\Vub_{k,h+1}))\phi(n_k(s,a)) 
            + 45 \numS H^2 \phi(n_k(s,a))^2
            \textcolor{DarkRed}{ + \frac 1 {H} \hat P(s,a)( \Vub_{k,h+1} \!-\! \Vlb_{k,h+1})}$\label{lin:psiterm}\;
            
                $\Qub_{k,h}(s,a) = 0 \vee \,\,( \hat r_k(s,a) 
                + \hat P_k(s,a)\Vub_{k,h+1} + \psi_{k,h}(s,a)) \,\,\wedge V^{\max}_h$
                \label{lin:ulcr_ub}
                \tcp*{UCB of $Q^\star_{h+1}$}
                \textcolor{DarkRed}{$\Qlb_{k,h}(s,a) = 0 \vee \,\,(  \hat r_k(s,a) 
                + \hat P_k(s,a)\Vlb_{k,h+1} - \psi_{k,h}(s,a))\,\,\wedge V^{\max}_h$}
                \label{lin:ulcr_lb}
                \tcp*{LCB of $Q^{\pi_k}_{h+1}$}
             }
 $\pi_k(s, h) = \argmax_{a} \Qub_{k,h}(s, a);
 \qquad \Vub_{k,h}(s) = \Qub_{k,h}(s, \pi_k(s, h));
 \qquad \textcolor{DarkRed}{\Vlb_{k,h}(s) = \Qlb_{k,h}(s, \pi_k(s, h))} $\;
}
\textbf{output} policy $\pi_k$  \textcolor{DarkRed}{with certificates $\Ical_k = [\Vlb_{k,1}(s_{1,1}), \Vub_{k,1}(s_{1,1})]$ and $\epsilon_k = |\Ical_k|$}\;
\textbf{sample episode} $k$ with policy $\pi_k$\tcp*{Observe $s_{k,1}, a_{k, 1}, r_{k, 1}, s_{k,2}, \dots, s_{k, H}, a_{k, H}, r_{k, H}$}
}
\caption{\ulcr ({\bf O}ptimistic {\bf R}einforcement {\bf L}earning with  {\bf C}ertificates)}
\label{alg:ulcr}
\end{algorithm}
\vspace{-2mm}
\endgroup
\end{figure*}

\begin{table*}
\begin{center}
\begin{tabular}{|c| c | c | c| }\hline
 Algorithm & Regret & PAC & Mistake IPOC \\ \hline \hline
 \texttt{UCBVI-BF} {\tiny\citep{azar2017minimax}} & $\tilde O(\sqrt{\numS \numA H^2 T} + \sqrt{H^3 T} + \numS^2 \numA H^2)$ & - & - \\  
 Q-l. w/ UCB\textsuperscript{\ref{foot:Hdep}}  {\tiny\citep{jin2018q}}& $\tilde O(\sqrt{\numS \numA H^4 T} + \numS^{1.5} \numA^{1.5} H^{4.5})$  & - & - \\
 \texttt{UCFH} {\tiny \citep{dann2015sample}} & - & $\tilde O\left(\frac{\numS^2 \numA H^2}{\epsilon^2} \right)$& - \\
 \ubev \textsuperscript{\ref{foot:Hdep}} {\tiny \citep{dann2017unifying}} & $\tilde O(\sqrt{\numS \numA H^4 T} + \numS^2 \numA H^3)$ & $\tilde O\left(\frac{\numS \numA H^4}{\epsilon^2} + \frac{\numS^2 \numA H^3}{\epsilon} \right)$ & - \\
 \ulcr (this work)& $\tilde O(\sqrt{\numS \numA H^2 T} + \numS^2 \numA H^3)$ &
 $\tilde O\left(\frac{\numS \numA H^2}{\epsilon^2} + \frac{\numS^2 \numA H^3}{\epsilon} \right)$ & $\tilde O\left(\frac{\numS \numA H^2}{\epsilon^2} + \frac{\numS^2 \numA H^3}{\epsilon} \right)$\\\hline
 Lower bounds & $\Omega\left(\sqrt{\numS \numA H^2 T} \right)$ & $\Omega\left(\frac{\numS \numA H^2}{\epsilon^2} \right)$ & $\Omega\left(\frac{\numS \numA H^2}{\epsilon^2}  \right)$\\\hline
\end{tabular}
\vspace{-2mm}
\end{center}
\caption{Comparison of the state of the art and our bounds for episodic RL in tabular MDPs. A dash means that the algorithm does not satisfy a non-trivial bound without modifications. $T$ is the number of episodes and $\ln(1/\delta)$ factors are omitted for readability. For an empirical comparison of the sample-complexity of these approaches, see Appendix~\ref{sec:tabular_experiments} available at \url{https://arxiv.org/abs/1811.03056}.}
\label{tab:tabbounds}
\end{table*}
We present the \ulcr (optimistic RL with certificates) Algorithm shown in Algorithm~\ref{alg:ulcr} (see the appendix for a version with empirically tighter confidence bounds but same theoretical guarantees). It shares similar structure with recent OFU algorithms like \ubev \citep{dann2017unifying} and \texttt{UCBVI-BF} \citep{azar2017minimax}
but has some significant differences highlighted in red.
Before each episode $k$, Algorithm~\ref{alg:ulcr} computes an optimistic estimate $\Qub_{k,h}$ of $Q^\star_h$ in Line~\ref{lin:ulcr_ub} by dynamic programming on the empirical model $(\hat P_k, \hat r_k)$ with confidence intervals $\psi_{k,h}$. Importantly, it also computes $\Qlb_{k,h}$, a pessimistic estimate of $Q^{\pi_k}_h$ in similar fashion in Line~\ref{lin:ulcr_lb}. The optimistic and pessimistic estimates $\Qlb_{k,h}, \Qub_{k,h}$ (resp. $\Vlb_{k,h}, \Vub_{k,h}$) allow us to compute the certificates $\epsilon_k$ and $\Ical_{k}$ and enables more sample-efficient learning. Specifically, Algorithm~\ref{alg:ulcr} uses a novel form of confidence intervals $\psi$ that explicitly depends on this difference. These confidence intervals are key for proving the following IPOC bound:

\begin{theorem}[Mistake IPOC Bound of Alg.~\ref{alg:ulcr}]
\label{thm:ulcripoc}
For any given $\delta \in (0, 1)$, Alg.~\ref{alg:ulcr} satisfies in any tabular MDP with $\numS$ states, $\numA$ actions and horizon $H$, the following Mistake IPOC bound: For all $\epsilon > 0$, the number of episodes where Alg.~\ref{alg:ulcr} outputs a certificate $|\Ical_k| = \epsilon_k > \epsilon$ is 
\begin{align}
\tilde O\left(\left(\frac{\numS \numA H^2}{\epsilon^2} + \frac{\numS^2 \numA H^3}{\epsilon}\right)\ln \frac 1 \delta \right).
\label{eqn:ipocbound_ulcr}
\end{align}
\vspace{-6mm}
\end{theorem}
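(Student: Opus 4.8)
The plan is to prove the two requirements of a mistake IPOC bound (Definition~\ref{def:ipoc}) separately: first that the certificates are valid (Condition~1), and second that they obey the claimed superlevel-set bound (Condition~2b).

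First I would define a single high-probability ``good event'' on which all empirical estimates concentrate \emph{uniformly over all episodes} $k$. Concretely I want that for every $(s,a)$ and every $k$ simultaneously, $|\hat r_k(s,a)-r(s,a)|\lesssim\phi(n_k(s,a))$ and an empirical-Bernstein transition bound $|(\hat P_k(s,a)-P(s,a))f|\lesssim\sigma_{\hat P_k(s,a)}(f)\phi(n_k(s,a))+\numS H^2\phi(n_k(s,a))^2$ holds for the relevant value functions $f$; the $\ln\ln$ term inside $\phi$ is exactly what makes these bounds hold uniformly in $k$ via a peeling/law-of-the-iterated-logarithm argument, and a union bound over $(s,a)$ and the reward and transition events accounts for the $\numS\numA(H+1+\numS)$ factor in $\phi$. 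Conditioned on this event I would show by backward induction on $h$ that $\Vlb_{k,h}(s)\le V^{\pi_k}_h(s)\le V^\star_h(s)\le\Vub_{k,h}(s)$ for all $s,k,h$. The only non-routine point is that $\psi_{k,h}$ uses the empirical standard deviation of the \emph{upper bound} $\Vub_{k,h+1}$ rather than of $V^\star_{h+1}$ (resp.\ $V^{\pi_k}_{h+1}$); using subadditivity of $\sigma_{\hat P_k}$, the induction hypothesis $0\le\Vub-V^\star\le\Vub-\Vlb$, and the elementary inequality $\sqrt{\hat P_k(\Vub-\Vlb)^2}\,\phi\le\frac 1 H\hat P_k(\Vub-\Vlb)+\frac{H^2}{4}\phi^2$ shows that the correction term $\frac 1 H\hat P_k(\Vub-\Vlb)$ together with the $\numS H^2\phi^2$ term absorbs exactly this discrepancy, so $\psi_{k,h}$ dominates both the optimistic and pessimistic Bellman errors. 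Optimism and pessimism then immediately give $\rho_k(\pi_k)=V^{\pi_k}_1(s_{k,1})\in\Ical_k$ and $\epsilon_k=\Vub_{k,1}(s_{k,1})-\Vlb_{k,1}(s_{k,1})\ge V^\star_1(s_{k,1})-V^{\pi_k}_1(s_{k,1})=\Delta_k$, which is Condition~1.

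Next I would unroll the certificate width along each trajectory. Writing $\delta_{k,h}(s)=\Vub_{k,h}(s)-\Vlb_{k,h}(s)$ and using that $\pi_k$ is greedy for $\Qub$ while $\Vlb_{k,h}=\Qlb_{k,h}(\cdot,\pi_k)$, at every visited state one has $\delta_{k,h}(s_{k,h})=\hat P_k(s_{k,h},a_{k,h})\delta_{k,h+1}+2\psi_{k,h}(s_{k,h},a_{k,h})$. Since $\psi_{k,h}$ itself contains $\frac 1 H\hat P_k\delta_{k,h+1}$, this is a recursion with per-step inflation factor $(1+2/H)$, whose $H$-fold product is at most $e^2$, so unrolling from $h=1$ to $H$ and replacing $\hat P_k$ by the sampled successor (a bounded martingale difference controlled on the good event) yields $\epsilon_k\lesssim\sum_{h=1}^H\bigl[(1+\sqrt{12}\,\sigma_{\hat P_k}(\Vub_{k,h+1}))\phi(n_k)+\numS H^2\phi(n_k)^2\bigr]$ evaluated along the trajectory, plus lower-order martingale terms.

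Finally, fixing $\epsilon>0$ and setting $\mathcal K=\{k:\epsilon_k>\epsilon\}$ with $m=|\mathcal K|$, summing the per-episode bound over $\mathcal K$ gives $m\epsilon<\sum_{k\in\mathcal K}\epsilon_k$, which I would bound by Cauchy--Schwarz as $\lesssim\sqrt{(\sum\sigma_{\hat P_k}^2(\Vub))(\sum\phi^2)}+\sqrt{mH\sum\phi^2}+\numS H^2\sum\phi^2$. A visitation/harmonic-sum lemma bounds $\sum_{k\in\mathcal K}\sum_h\phi(n_k(s_{k,h},a_{k,h}))^2\lesssim\numS\numA\,\polylog$, and the total number of steps is $mH$. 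The crucial ingredient, and the step I expect to be the main obstacle, is controlling $\sum_{k,h}\sigma^2_{\hat P_k}(\Vub_{k,h+1})$ by a law-of-total-variance argument: along each trajectory the per-step conditional variances of the value telescope to $O(H^2)$ (rather than the naive $O(H^3)$ obtained from $\sigma\le H$), so $\sum_{k\in\mathcal K}\sum_h\sigma^2\lesssim mH^2$ after bounding, by the same self-bounding trick used for optimism, the gap between $\sigma(\Vub)$ and the true value's standard deviation (whose cumulative contribution is again lower order in $\sum\delta_{k,h}$). Substituting these estimates gives $m\epsilon\lesssim H\sqrt{m\,\numS\numA\,\polylog}+\numS^2\numA H^3\,\polylog$, and solving this inequality for $m$ yields the two terms $\frac{\numS\numA H^2}{\epsilon^2}$ (leading, from the variance term) and $\frac{\numS^2\numA H^3}{\epsilon}$ (from the $\numS H^2\phi^2$ term together with the lower-order self-bounding corrections), each times $\otil(\ln\frac1\delta)$, which is exactly the bound~\eqref{eqn:ipocbound_ulcr}. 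The delicate part throughout is the circularity: both the variance estimate and the confidence widths involve $\Vub-\Vlb=\epsilon_k$, so the law-of-total-variance step, the martingale controls, and the final counting must be arranged so that these self-referential $\sum\delta_{k,h}$ terms stay strictly lower order and do not degrade the leading $1/\epsilon^2$ rate.
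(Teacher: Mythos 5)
Your first two steps match the paper's proof: the uniform-in-$k$ concentration event built from law-of-iterated-logarithm bounds (the paper's failure event $F$ and Lemma~\ref{lem:failureprob}), and the backward-induction admissibility argument in which $\sigma_{\hat P_k(s,a)}(\Vub_{k,h+1})$ replaces the unobservable standard deviation of $V^\star_{h+1}$ at the price of a $\frac1H \hat P_k(s,a)(\Vub_{k,h+1}-\Vlb_{k,h+1})$ plus $O(\numS H^2\phi^2)$ correction (the paper's Lemmas~\ref{lem:validlowerboundall}--\ref{lem:variancebound1}). The genuine gap is in how you bound and then count the certificates. You unroll $\delta_{k,h}=\Vub_{k,h}-\Vlb_{k,h}$ along the sampled trajectory and defer the discrepancy between $\hat P_k\delta_{k,h+1}$ and the realized $\delta_{k,h+1}(s_{k,h+1})$ to ``lower-order martingale terms.'' Per episode these terms are not lower order --- each difference can be of size $\Theta(H)$ --- so your per-episode inequality only becomes meaningful after summing over the bad set $\mathcal K=\{k:\epsilon_k>\epsilon\}$ and applying a martingale bound, uniformly over the random cardinality $m=|\mathcal K|$ and over all $\epsilon$ simultaneously. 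With Hoeffding/Azuma control this contributes $\tilde O(\sqrt{mH^3})$ to $\sum_{k\in\mathcal K}\epsilon_k$ and hence an extra $\tilde O(H^3/\epsilon^2)$ term to the mistake bound; that term is not dominated by $\numS\numA H^2/\epsilon^2+\numS^2\numA H^3/\epsilon$ once $H\gtrsim \numS\numA$ and $\epsilon$ is small --- exactly the long-horizon regime in which the theorem claims minimaxity and improvement over prior work --- so the theorem as stated would not follow. Escaping this requires Bernstein-type control of the unrolling martingale with variance proxy $\mathrm{Var}_{P(s,a)}(\delta_{k,h+1})\le H\,P(s,a)\delta_{k,h+1}$ and a re-absorption of the resulting self-referential sums; $\delta_{k,h}$ is not the value function of any policy (it obeys an inflated Bellman recursion with bonuses), so the clean law-of-total-variance telescoping you invoke does not apply to it directly. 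This is precisely the circularity you flag as the ``delicate part,'' and it is left unresolved; it is the actual mathematical content of the counting step.

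The paper avoids this obstacle structurally rather than resolving it. Its certificate bound (Lemma~\ref{lem:guaranteebound}) is deterministic on the good event and involves no trajectory and no per-episode stochastic error: $\epsilon_k \le e^6\sum_{s,a}\sum_h w_{k,h}(s,a)\bigl(H\wedge(\beta\phi(n_k(s,a))^2+\gamma_{k,h}(s,a)\phi(n_k(s,a)))\bigr)$, where $w_{k,h}$ are the \emph{expected} visitation probabilities and, crucially, $\gamma_{k,h}(s,a)=14\sigma_{P(s,a)}(V^{\pi_k}_{h+1})+2$ involves the true variance of the true policy value, obtained from $\sigma_{\hat P_k(s,a)}(\Vub_{k,h+1})$ via Lemma~\ref{lem:variancebound2}. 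The law of total variance then applies cleanly per episode, $\sum_{s,a}\sum_h w_{k,h}(s,a)\sigma^2_{P(s,a)}(V^{\pi_k}_{h+1})\le H^2$, with nothing stochastic left to control. Counting is then done not by Cauchy--Schwarz over $\mathcal K$ but by a pigeonhole rate lemma on these weighted sums (Lemma~\ref{lem:mainratelemma}), which in turn requires the failure event $F^N$ relating counts $n_k(s,a)$ to cumulative expected weights and the ``nice episodes'' partition (Lemmas~\ref{lem:nicenessep} and~\ref{lem:num_nonnice}) --- machinery entirely absent from your plan and the source of the $\numS^2\numA H^3/\epsilon$ lower-order term. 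In short: your steps 1--2 are essentially the paper's, but your steps 3--4 follow a regret-style trajectory analysis whose central difficulty is left open, whereas the paper's expectation-based unrolling and rate-lemma counting sidestep it entirely.
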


By Proposition~\ref{prop:ipoc_properties_mis}, this implies a Uniform-PAC bound of same order as well as the regret and PAC bounds listed in Table~\ref{tab:tabbounds}. This table also contains previous state of the art bounds of each type\footnote{These model-free and model-based methods have the best known bounds in our problem class. Q-learning with UCB and \ubev allow time-dependent dynamics. One might be able to improve their regret bound by $\sqrt{H}$ when adapting them to our setting. Note that by augmenting our state space with a time index, our algorithm also achieves minimax optimality with $\tilde O(\sqrt{\numS \numA H^3 T})$ regret up to lower order terms in their setting.\label{foot:Hdep}} as well as lower bounds.
The IPOC lower bound follows from the PAC lower bound by \citet{dann2015sample} and Proposition~\ref{prop:ipoc_properties_mis}.
For $\epsilon$ small enough ($ \leq O(1/(\numS H))$ specifically), our IPOC bound is minimax, i.e., the best achievable,  up to log-factors. This is also true for the Uniform-PAC and PAC bounds implied by Theorem~\ref{thm:ulcripoc} as well as the implied regret bound when the number of episodes $T = \Omega(\numS^3 \numA H^4)$ is large enough. \ulcr is the first algorithm to achieve this minimax rate for PAC and Uniform-PAC. While \texttt{UCBVI-BF} achieves minimax regret for problems with small horizon, their bound is suboptimal when $H > \numS \numA$. The lower-order term in our regret bound $\tilde O(\numS^2 \numA H^3)$ has a slightly worse dependency on $H$ than \citet{azar2017minimax} but we can trade-off a factor of $H$ for $\numA$ (see appendix) and believe that this term can be further reduced by a more involved analysis.

We defer details of our IPOC analysis to the appendix availble at \url{https://arxiv.org/abs/1811.03056} but the main advances 
leverage that $[\Qlb_{k,h}(s,a), \Qub_{k,h}(s,a)]$ is an \emph{observable} confidence interval for both $Q^\star_{h}(s,a)$ and $Q^{\pi_k}_{h}(s,a)$. Specifically, our main novel insights are:
\begin{itemize}
    \item While prior works~\citep[e.g.][]{lattimore2012pac, dann2015sample} control the suboptimality $Q^\star_{h} - Q^{\pi_k}_h$ of the policy by recursively bounding $\Qub_{k,h} - Q^{\pi_k}_h$, we instead recursively bound $\Qub_{k,h} -\Qlb_{k,h} \leq 2 \psi_{k,h} + \hat P_k (\Vub_{k,h+1} -\Vlb_{k,h+1})$ which is not only simpler but also controls both the suboptimality of the policy and the size of the certificates simultaneously.
    
    \item As existing work \citep[e.g.][]{azar2017minimax, jin2018q}, we use empirical Bernstein-type concentration inequalities to construct $\Qub_{k,h}(s,a)$ as an upper bound to $Q^\star_{h}(s,a) = r(s,a) + P(s,a) V^\star_{h+1}$. This results in a dependency of the upper bound on the variance of the optimal next state value $\sigma_{\hat P_k(s,a)}(V^\star_{h+1})^2$ under the empirical model. Since $V^\star_{h+1}$ is unknown this has to be upper-bounded by $\sigma_{\hat P_k(s,a)}(\Vub_{k,h+1})^2 + B$ with an additional bonus $B$ to account for the difference between the values, $\Vub_{k,h+1} - V^\star_{h+1}$,  which is again unobservable. \citet{azar2017minimax} now constructs an observable bound on $B$ through an intricate regret analysis that involves additional high-probability bounds on error terms (see their $\Ecal_{fr}$/$\Ecal_{az}$ events) which causes the suboptimal $\sqrt{H^3 T}$ term in their regret bound. Instead, we use the fact that  $\Vub_{k,h+1} - \Vlb_{k,h+1}$ is an observable upper bound on $\Vub_{k,h+1} - V^\star_{h+1}$ which we can directly use in our confidence widths $\psi_{k,h}$ (see the last term in Line~\ref{lin:psiterm} of Alg.~\ref{alg:ulcr}). Hence, availability of lower bounds through certificates improves also our upper confidence bounds on $Q^\star_{h}$ and yields more sample-efficient exploration with improved performance bounds as we avoid additional high-probability bounds of error terms.
    
    \item As opposed to the upper bounds, we cannot simply apply concentration inequalities to construct $\Qlb_{k,h}(s,a)$ as a lower bound to $Q^{\pi_k}$ because the estimation target $Q^{\pi_k}(s,a) = r(s,a) + P(s,a)V^{\pi_k}_{h+1}$ is itself random. The policy $\pi_k$ depends in highly non-trivial ways on all samples from which we also estimate the empirical model $\hat P_k, \hat r_k$. A prevalent approach in model-based policy evaluation~\citep[e.g.]{strehl2008analysis, ghavamzadeh2016safe} to deal with this challenge is to instead apply a concentration argument on the $\ell_1$ distance of the transition estimates $\|P(s,a) - \hat P_k(s,a)\|_1 \leq \sqrt{\numS}\phi(n_k(s,a))$. This yields confidence intervals that shrink at a rate of $H\sqrt{\numS}\phi(n_k(s,a))$. Instead, we can exploit that $\pi_k$ is generated by a sample-efficient algorithm and construct $\Qlb_{k,h}$ as a lower bound to the non-random quantity $r(s,a) + P(s,a) V^\star_{h+1}$. We account for the difference $P(s,a)(V^\star_{h+1} - V^{\pi_k}_{h+1}) \leq P(s,a)(\Vub_{k,h+1} - \Vlb_{k,h+1})$ explicitly, again through a recursive bound. This allows us to achieve confidence intervals that shrink at a faster rate of $\psi_{k,h} \approx H \phi(n_k(s,a)) + \numS H^2 \phi(n_k(s,a))^2$ without the $\sqrt{\numS}$ dependency in the dominating $\phi(n_k(s,a))$ term (recall  $\phi(n_k(s,a)) \leq 1$ and goes to 0). Hence, by leveraging that $\pi_k$ is computed by a sample-efficient approach, we improve the tightness of the certificates.
\end{itemize}

\subsection{MDPs With Linear Side Information}

\begin{figure*}[t]
\begingroup
\removelatexerror
\begin{algorithm}[H]
\SetInd{0.7em}{0.5em}
\SetKwInOut{Inputa}{Input}
\Inputa{failure prob. $\delta \in (0,1]$, regularizer $\lambda > 0$}
$\xi_{\theta\fr} = \sqrt{d}; ~ \xi_{\theta\fp} = \sqrt{d}; \qquad
    \Vub_{k,H+1}(s) = 0; \quad  \Vlb_{k,H+1}(s) = 0 \quad \forall s \in \statespace, k \in \NN$\;
$\phi(N, x, \xi) \defeq \left[\sqrt{\lambda}\xi + \sqrt{\frac 1 2 \ln \frac{\numS (\numS\numA + \numA + H)}{\delta} + \frac 1 4 \ln \frac{\det N}{\det(\lambda I)}}\right]\|x\|_{N^{-1}}
$
\label{lin:confsize_context}
\;

\For{$k=1, 2, 3, \dots$}{
    Observe current contexts $x_k\fr$ and $x_k\fp$\;
    \For
    (\tcp*[f]{estimate model with least-squares})
    {$s,s' \in \Scal, a \in \Acal$}{
    
        $N_{k,s,a}^{(q)} = \lambda I + \sum_{i=1}^{k-1} \sum_{h=1}^H \one\{s_{i,h} = s, a_{i,h} = a\} x_k^{(q)} (x_k^{(q)})^\top \qquad \textrm{for } q \in \{r, p\}$\;
    $\hat \theta\fr_{k,s,a} = (N\fr_{k,s,a})^{-1} \sum_{i=1}^{k-1}\sum_{h=1}^H \one\{s_{i,h} = s, a_{i,h} = a\} x_k\fr r_{i,h};
    \qquad \qquad \hat r_k(s,a) = 0 \vee (x_k\fr)^\top \hat \theta\fr_{k,s,a} \wedge  1$\;
    $\hat \theta\fp_{s',s,a} = (N\fp_{k,s,a})^{-1} \sum_{i=1}^{k-1} \sum_{h=1}^H \one\{s_{i,h} = s, a_{i,h} = a, s_{i,h+1} = s'\} x_k\fp$\;
     $\hat P_k(s'|s,a) = 0 \vee (x_k\fp)^\top \hat \theta\fp_{k,s',s,a} \wedge 1$\;
    }
    \For
    (\tcp*[f]{optimistic planning with ellipsoid confidence bounds})
    {$h=H$ \KwTo $1$ \textrm{\textbf{and}} $s \in \statespace$}
        {
            \For{$a \in \actionspace$}
            {
            $\psi_{k,h}(s,a) = \|\Vub_{k,h+1}\|_1 \phi(N\fp_{k,s,a},x\fp_k, \xi_{\theta\fp}) + \phi(N\fr_{k, s,a},x\fr_k, \xi_{\theta\fr})$\;
            $\Qub_{k,h}(s,a) = 0 \vee \,\,\, (\hat r_k(s,a) + \hat P_k(s,a)\Vub_{k,h+1} + \psi_{k,h}(s,a)) \,\,\, \wedge V^{\max}_h$
            \tcp*{UCB of $Q^\star_{h+1}$}
            $\Qlb_{k,h}(s,a) = 0 \vee \,\,\, (\hat r_k(s,a) + \hat P_k(s,a)\Vlb_{k,h+1} - \psi_{k,h}(s,a)) \,\,\, \wedge V^{\max}_h$
            \tcp*{LCB of $Q^{\pi_k}_{h+1}$}
             }
 $\pi_k(s, h) = \argmax_{a} \Qub_{k,h}(s, a);
 \qquad \Vub_{k,h}(s) = \Qub_{k,h}(s, \pi_k(s, t)); 
 \qquad \Vlb_{k,h}(s) = \Qlb_{k,h}(s, \pi_k(s, t)) $\;
}
\textbf{output} policy $\pi_k$ with certificates $\Ical_k = [\Vlb_{k,1}(s_{1,1}), \Vub_{k,1}(s_{1,1})]$ and $\epsilon_k = |\Ical_k|$\;
\textbf{sample episode} $k$ with policy $\pi_k$
\tcp*{Observe $s_{k,1}, a_{k, 1}, r_{k, 1}, s_{k,2}, \dots, s_{k, H}, a_{k, H}, r_{k, H}$}
}

\caption{\ofulsic ({\bf O}ptimistic {\bf R}einforcement {\bf L}earning with  {\bf C}ertificates and {\bf S}ide {\bf I}nformation)}
\label{alg:ofucontext}
\end{algorithm}
\endgroup
\end{figure*}

We now present an algorithm for the more general setting with side information, which, for example, allows us to take background information about a customer into account and generalize across different customers.
Algorithm~\ref{alg:ofucontext} gives an extension, called \ofulsic, of the OFU algorithm by \citet{abbasi2014online}. Its overall structure is the same as the tabular Algorithm~\ref{alg:ulcr} but here the empirical model are least-squares estimates of the model parameters evaluated at the current contexts. Specifically, the empirical transition probability $\hat P_k(s' | s, a)$ is $(x\fp_k)^\top \hat \theta_{s', s, a}$ where $\hat \theta_{s',s,a}$ is the least squares estimate of model parameter $\theta_{s',s,a}$. Since transition probabilities are normalized, this estimate is then clipped to $[0,1]$. 
This model is estimated separately for each $(s', s, a)$-triple, but generalizes across different contexts. The confidence widths $\psi_{k,h}$ are derived using ellipsoid confidence sets on model parameters.
We show the following IPOC bound:

\begin{theorem}[Cumulative IPOC Bound for Alg.~\ref{alg:ofucontext} ]
\label{thm:ufolsic_certs}
For any $\delta \in (0,1)$ and regularizer $\lambda >0$, Alg.~\ref{alg:ofucontext} satisfies the following cumulative IPOC bound in any MDP with contexts of dimensions $d\fr$ and $d\fp$ and bounded parameters $\xi_{\theta\fr} \leq \sqrt{d\fp}, \xi_{\theta\fp} \leq \sqrt{d\fp}$. With prob. at least $1 - \delta$ all return certificates contain the return of $\pi_k$ and optimality certificates are upper bounds on the optimality gaps  and their total sum after $T$ episodes is bounded for all $T$ by
\vspace{-2mm}
\begin{align}
    \tilde O \left(\sqrt{\numS^3 \numA H^4 T} \lambda ( d\fp + d\fr)
    \log \frac{\xi^2_{x\fp}  + \xi^2_{x\fr}}{\lambda \delta}\right).
\end{align}
\vspace{-6mm}
\end{theorem}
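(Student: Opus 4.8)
The plan is to mirror the two-stage structure that underlies the tabular analysis of Theorem~\ref{thm:ulcripoc}: first establish, on a single high-probability event, that the output certificates are \emph{valid} in the sense of Condition~1 of Definition~\ref{def:ipoc}, and then bound their cumulative size to obtain Condition~2a.

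\textbf{Step 1 (good event).} First I would define the event on which all confidence widths are simultaneously valid, i.e. $|\hat r_k(s,a) - r_k(s,a)| \le \phi(N\fr_{k,s,a}, x\fr_k, \xi_{\theta\fr})$ and $|(\hat P_k - P_k)(s'|s,a)| \le \phi(N\fp_{k,s,a}, x\fp_k, \xi_{\theta\fp})$ for every $s,s' \in \statespace$, $a \in \actionspace$ and every episode $k$. Since each $\hat\theta$ is a ridge least-squares estimate of a linear target observed with bounded noise, this follows from a self-normalized (ellipsoid) martingale concentration inequality of the type used by \citet{abbasi2014online}; the $\tfrac14\log(\det N / \det(\lambda I))$ term and the union-bound factor $\numS(\numS\numA + \numA + H)$ built into $\phi$ are exactly what make this hold with probability $1-\delta$ uniformly over all $(s',s,a)$ and all $k$. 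A key consequence to record is that for any $f:\statespace\to\RR$ we have $|(\hat P_k - P_k)(s,a) f| \le \|f\|_1\,\phi(N\fp_{k,s,a}, x\fp_k, \xi_{\theta\fp})$, which explains the $\|\Vub_{k,h+1}\|_1$ factor appearing in $\psi_{k,h}$.

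\textbf{Step 2 (validity of certificates).} Conditioned on this event I would prove by backward induction on $h$, from $H{+}1$ down to $1$, the two-sided bracketing $\Qlb_{k,h}(s,a) \le Q^{\pi_k}_h(s,a)$ and $\Qub_{k,h}(s,a) \ge Q^\star_{k,h}(s,a)$ for all $(s,a)$, where the optimal value is for the episode-$k$ MDP. The base case is the zero terminal value. For the inductive step the bonus $\psi_{k,h}$ is constructed to dominate both the reward error $\phi\fr$ and the transition error $\|\Vub_{k,h+1}\|_1\,\phi\fp$, so that $\hat r_k + \hat P_k\Vub_{k,h+1} + \psi_{k,h} \ge r_k + P_k V^\star_{k,h+1}$ (using $\Vub_{k,h+1}\ge V^\star_{k,h+1}$ from the hypothesis), and symmetrically $\hat r_k + \hat P_k\Vlb_{k,h+1} - \psi_{k,h} \le r_k + P_k V^{\pi_k}_{h+1}$ (using $\Vlb_{k,h+1}\le V^{\pi_k}_{h+1}$); the clipping to $[0,V^{\max}_h]$ only tightens these inequalities. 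Evaluating at $h=1$ and the fixed initial state gives $\rho_k(\pi_k)=V^{\pi_k}_1(s_{1,1})\ge \Vlb_{k,1}(s_{1,1})$ and $\rho^\star_k = V^\star_{k,1}(s_{1,1}) \le \Vub_{k,1}(s_{1,1})$, hence $\rho_k(\pi_k)\in\Ical_k$ and $\epsilon_k = |\Ical_k| \ge \rho^\star_k - \rho_k(\pi_k) = \Delta_k$, which is Condition~1.

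\textbf{Step 3 (recursive width bound).} For Condition~2a I would control $\epsilon_k = \Vub_{k,1}(s_{1,1}) - \Vlb_{k,1}(s_{1,1})$ through the central recursion obtained by subtracting the two Bellman-type updates and dropping the clipping, namely $\Qub_{k,h}(s,a) - \Qlb_{k,h}(s,a) \le 2\psi_{k,h}(s,a) + \hat P_k(s,a)(\Vub_{k,h+1}-\Vlb_{k,h+1})$. Since $\pi_k$ is greedy for $\Qub_{k,h}$, unrolling this recursion along the trajectory distribution induced by $\pi_k$ in the \emph{empirical} model $\hat P_k$ yields $\epsilon_k \le 2\sum_{h=1}^H \Ex_{\hat P_k,\pi_k}[\psi_{k,h}(s_h,a_h)]$, so the per-episode certificate is a sum of expected confidence widths, whose leading part is $\|\Vub_{k,h+1}\|_1\,\phi\fp \le \numS H\,\phi(N\fp_{k,s,a},x\fp_k,\xi_{\theta\fp})$.

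\textbf{Step 4 (summing over episodes) and the main obstacle.} Finally I would sum over $k\le T$. After converting each empirical-model expectation $\Ex_{\hat P_k,\pi_k}[\cdot]$ into the actually observed bonus $\phi(N\fp_{k,s_{k,h},a_{k,h}},x\fp_k,\cdot)$, the cumulative certificate is dominated by $\numS H\sum_{k,h}\phi\fp_{k,h}(s_{k,h},a_{k,h})$ plus lower-order and reward terms. Writing $\phi\fp$ as a logarithmic factor times $\|x\fp_k\|_{(N\fp_{k,s,a})^{-1}}$, grouping visits by $(s,a)$, and applying Cauchy--Schwarz together with the elliptical-potential lemma (for each fixed $(s,a)$, $\sum_{k}\|x\fp_k\|^2_{(N\fp_{k,s,a})^{-1}} = \tilde O(d\fp)$) and the counting bound $\sum_{s,a}\sqrt{n_{s,a}} \le \sqrt{\numS\numA\sum_{s,a} n_{s,a}} = \sqrt{\numS\numA\, H T}$, where $n_{s,a}$ is the number of visits to $(s,a)$, combines to give the claimed order $\tilde O(\sqrt{\numS^3\numA H^4 T}\,\lambda(d\fp+d\fr)\log(\cdot))$. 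I expect the main difficulty to lie precisely in the conversion at the start of this step: because $\pi_k$ and each $N\fp_{k,s,a}$ are computed from the same data that generate the trajectory, the empirical-model expectation, the true-model expectation, and the observed per-step bonus must be reconciled through martingale (Azuma/Freedman) arguments and the Step-1 closeness of $\hat P_k$ to $P_k$, without losing the tight dependence; it is this reconciliation, rather than the elliptical-potential step itself, that pins down the exact powers of $\numS$ and $H$ in the final bound.
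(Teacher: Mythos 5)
Your Steps 1 and 2 coincide with the paper's argument (ellipsoid concentration for the least-squares parameters, per-entry model error bounds, and backward induction for admissibility of $\Qub_{k,h}$ and $\Qlb_{k,h}$), and your Step 4 combination of Cauchy--Schwarz with the elliptical potential lemma grouped by $(s,a)$ is also exactly what the paper does. The gap lies in Steps 3--4. First, your Step 3 unrolls the recursion $\Qub_{k,h}-\Qlb_{k,h} \le 2\psi_{k,h} + \hat P_k(s,a)(\Vub_{k,h+1}-\Vlb_{k,h+1})$ under the \emph{empirical} model, but $\hat P_k$ is not a stochastic matrix: each entry is clipped to $[0,1]$ individually, yet a row can carry total mass up to $\numS$ (and does, in early episodes when the confidence widths are large). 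So ``the trajectory distribution induced by $\pi_k$ in the empirical model'' is not well-defined, and the formal unrolling can inflate by a factor up to $\numS$ per step. The paper instead splits $\hat P_k(s,a)(\Vub_{k,h+1}-\Vlb_{k,h+1}) = P_k(s,a)(\Vub_{k,h+1}-\Vlb_{k,h+1}) + (\hat P_k-P_k)(s,a)(\Vub_{k,h+1}-\Vlb_{k,h+1})$, bounds the error term by $V^{\max}_{h+1}\|\hat P_k(s,a)-P_k(s,a)\|_1 \le V^{\max}_{h+1}\numS\gamma_{k,s,a}\|x\fp_k\|_{(N\fp_{k,s,a})^{-1}}$ so it is absorbed into the per-step width, and unrolls under the \emph{true} model $P_k$, keeping the $V^{\max}_h \wedge (\cdot)$ clipping so that the later $1\wedge\|x\|^2$ form required by the potential lemma is available.

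Second, the step you defer as ``the main obstacle'' — relating true-model visitation probabilities to the actually observed state-action pairs — is precisely what is missing, and it is substantially easier than you suggest. Since $\pi_k$, $\hat P_k$, $N\fp_{k,s,a}$ and $\psi_{k,h}$ are all measurable with respect to the history available at the start of episode $k$ (after the contexts are revealed), the quantity $\sum_{s,a}\sum_{h=1}^H\bigl[\prob(s_{k,h}=s,a_{k,h}=a\mid s_{k,1},\pi_k) - \one\{s_{k,h}=s,a_{k,h}=a\}\bigr]$ is a martingale difference sequence in $k$; the data-dependence of the policy creates no circularity whatsoever. The paper packages this as a third failure event $F^O$ in the good event (uniform over all $T$, via the bounds of Howard et al.), which costs only an additive $V^{\max}_1 \numS H \sqrt{T\log(\cdot)}$ term — lower order than the main term. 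In particular, contrary to your closing remark, the powers of $\numS$ and $H$ in the final $\tilde O(\sqrt{\numS^3\numA H^4 T}\,\lambda(d\fp+d\fr)\log(\cdot))$ bound are not pinned down by this reconciliation at all: they come from the $\|\Vub_{k,h+1}\|_1 \le \numS V^{\max}_{h+1}$ factor in the width (one factor of $\numS$ and one of $H$) together with the elliptical potential sum over $\numS\numA$ pairs (giving $\sqrt{\numS\numA H^2 T}$) and $V^{\max}_1 = H$ — the parts you already have. To complete your proof you would need to (i) redo Step 3 under $P_k$ as above, and (ii) add this martingale event to your Step 1 good event and invoke it when passing from expected to observed bonuses.
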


By Proposition~\ref{prop:ipoc_properties_cum}, this IPOC bound implies a regret bound of the same order which improves on the $\tilde O(\sqrt{d^2 \numS^4 \numA H^5 T \log 1 / \delta})$ regret bound of \citet{abbasi2014online} with $d = d\fp + d\fr$ by a factor of $\sqrt{\numS \numA H}$. While they make a different modelling assumption (generalized linear instead of linear), we believe at least our better $\numS$ dependency is due to using improved least-squares estimators for the transition dynamics \footnote{They estimate $\theta_{s',s,a}$ only from samples where the transition $s, a \rightarrow s'$ was observed instead of all occurrences of $s, a$ (no matter whether $s'$ was the next state).} and can likely be transferred to their setting. 
The mistake-type PAC bound by \citet{modi2018markov} is not comparable because our cumulative IPOC bound does not imply a mistake-type PAC bound.\footnote{An algorithm with a sub-linear cumulative IPOC bound can output a certificate larger than a threshold $\epsilon_k \geq \epsilon$ infinitely often as long as it does so sufficiently less frequently (see Section~\ref{sec:nomistakebound}).}  Nonetheless, loosely translating our result to a PAC-like bound yields $\tilde O\left( \frac{d^2 \numS^3 \numA H^5}{\epsilon^2}\right)$ which is much smaller than their $\tilde O\left( \frac{d^2 \numS \numA H^4}{\epsilon^5}\max\{d^2, \numS^2\} \right)$ bound for  small $\epsilon$.

The confidence bounds in Alg.~\ref{alg:ofucontext} are more general but looser than those for the tabular case of Alg.~\ref{alg:ulcr}. Instantiating the IPOC bound for Alg.~\ref{alg:ofucontext} from Theorem~\ref{thm:ufolsic_certs} for tabular MDPs ($x\fr_k = x\fp_k = 1$) yields $\tilde O(\sqrt{\numS^3 \numA H^4 T})$ which is worse than the cumulative IPOC bound $\tilde O(\sqrt{\numS \numA H^2 T} + \numS^2 \numA H^3)$ of Alg.~\ref{alg:ulcr} implied by Thm.~\ref{thm:ulcripoc} and Prop.~\ref{prop:ipoc_properties_mis}. 

By Prop.~\ref{prop:ipoc_properties_mis}, a mistake IPOC bound is stronger than the cumulative version we proved for Algorithm~\ref{alg:ofucontext}. One might wonder if Alg.~\ref{alg:ofucontext} also satisfies a mistake bound, but in Appendix~\ref{sec:nomistakebound} (at \url{https://arxiv.org/abs/1811.03056}) we show that this is not the case because of its non-decreasing ellipsoid confidence sets. There could be other algorithms with mistake IPOC bounds for this setting, but they they would likely require entirely different confidence sets.

\section{Simulation Experiment}
\label{sec:experiments}

\begin{figure}[t]
    \centering
    \includegraphics[width=.9\linewidth]{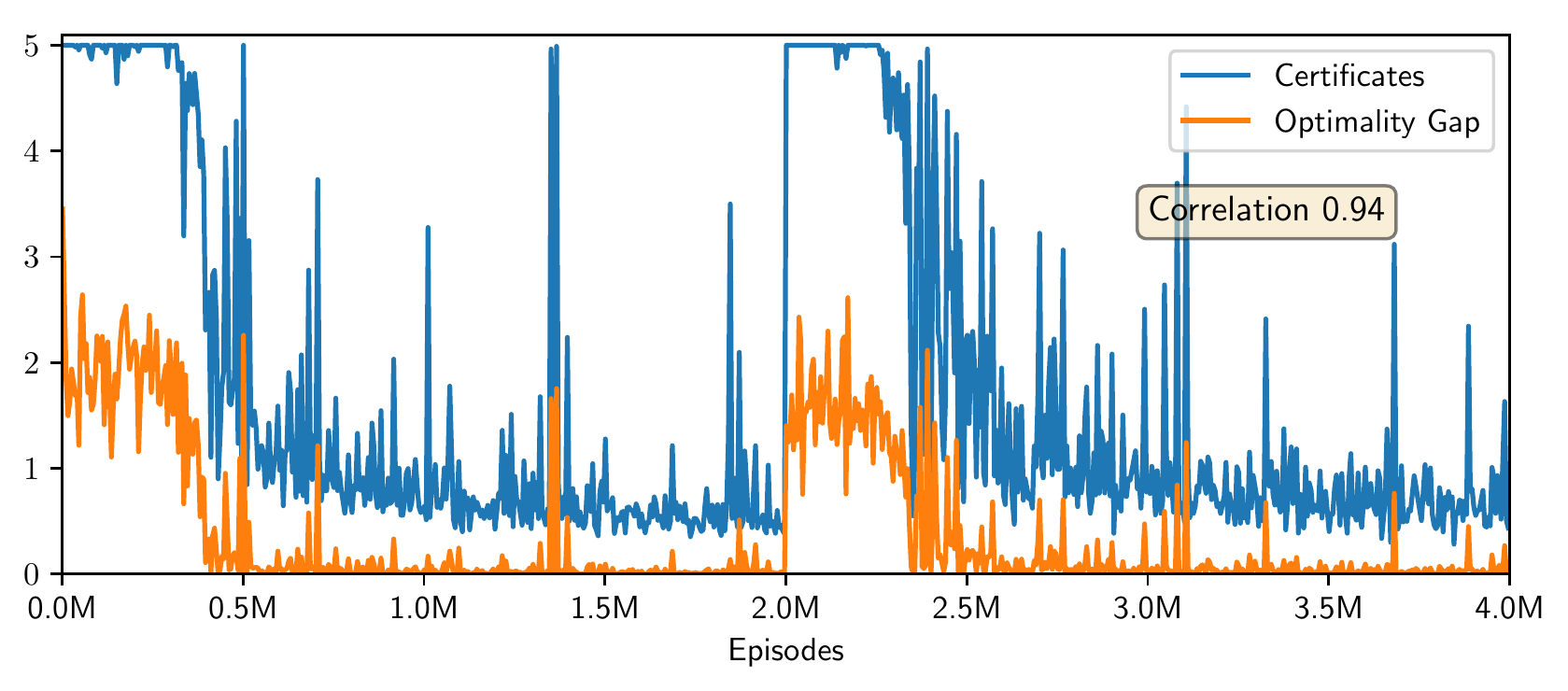}
    \vspace{-4mm}
    \caption{Certificates and (unobserved) optimality gaps of Algorithm~\ref{alg:ofucontext} for 4M episodes on an MDP with context distribution shift after 2M (episodes sub-sampled for better visualization)}
    \label{fig:mdp_res_new}
\end{figure}

One important use case for certificates is to detect sudden performance drops when the distribution of contexts changes. For example, in a call center dialogue system, there can be a sudden increase of customers calling due to a certain regional outage. 
We demonstrate that certificates can identify such performance drops caused by context shifts. We consider a simulated MDP with $10$ states, $40$ actions and horizon $5$ where rewards depend on a $10$-dimensional context and let the distribution of contexts change after $2$M episodes. As seen in Figure~\ref{fig:mdp_res_new}, this causes a spike in optimality gap  as well as in the optimality certificates. 
While our certificates need to upper bound the optimality gap / contain the return in each episode up to a small failure probability, even for the worst case, our algorithm reliably can detect this sudden decrease of performance. In fact, the optimality certificates have a very high correlation of $0.94$ with the unobserved optimality gaps.

One also may wonder if our algorithms leads to improvements over prior approaches in practice or only in the theoretical bounds. To help answer this, we present results in Appendix~\ref{sec:experiemntalsetup} at \url{https://arxiv.org/abs/1811.03056} both on analyzing the policy certificates provided, and examining \ulcr's performance in tabular MDPs versus other recent papers with similar regret~\cite{azar2017minimax} or PAC~\cite{dann2017unifying} bounds. Encouragingly in the small simulation MDPs considered, we find that our algorithms lead to faster learning and better performance.
Therefore while our primary contribution is theoretical results, these simulations suggest the potential benefits of the ideas underlying our proposed framework and algorithms.

\section{Related Work}
\label{sec:relatedwork}
The connection of IPOC to other frameworks is formally discussed in Section~\ref{sec:framework}. 
Our algorithms essentially compute confidence bounds as in OFU methods, and then use those in model-based policy evaluation to obtain policy certificates. 
There are many works on off-policy policy evaluation \citep[e.g.,][]{jiang2016doubly, thomas2016data,mahmood2017multi}, some of which provide non-asymptotic confidence intervals \citep[e.g.,][]{thomas2015high,thomas2015highimp,sajed2018high}. However, these methods focus on the batch setting where 
a set of episodes sampled by fixed policies is given.
Many approaches rely on importance weights that require stochastic  data-collecting policies but most sample-efficient algorithms for which we would like to provide certificates deploy deterministic policies. One could treat previous episodes to be collected by one stochastic data-dependent policy but that introduces bias in the importance-weighting estimators that is not accounted for in the analyses.

Interestingly, there is very recent work~\citep{zanette2019tighter} that also observed the benefits of using lower bounds in optimism-based exploration in tabular episodic RL. Though both their and our work obtain improved theoretical results, the specific forms of the optimistic bonuses are distinct and the analyses differ in many parts (e.g., we provide (Uniform-)PAC and regret bounds instead of only regret bounds). Most importantly, our work provides policy certificate guarantees as a main contribution whereas that work focuses on problem-dependent regret bounds.  

Approaches on safe exploration \citep[][]{kakade2002approximately,pirotta2013safe,thomas2015highimp, ghavamzadeh2016safe} guarantee monotonically increasing performance by operating in a batch loop. Our work is orthogonal, as we are not restricting exploration but rather exposing its impact to the users and give them the choice to intervene.

\section{Conclusion and Future Work}
\label{sec:futurework}

We  introduced policy certificates to improve accountability in RL by enabling users to intervene if the guaranteed performance is deemed inadequate. Bounds in our new theoretical framework IPOC ensure that certificates indeed bound the return and suboptimality in each episode and prescribe the rate at which certificates and policy improve. By combining optimism-based exploration with model-based policy evaluation, we have created two algorithms for RL with policy certificates, including for tabular MDPs with side information. For tabular MDPs, we demonstrated that policy certificates help optimism-based policy learning and vice versa. 
As a result, our new algorithm is the first to achieve minimax-optimal PAC bounds up to lower-order terms for tabular episodic MDPs, and, also the first to have both, minimax PAC and regret bounds, for this setting.

Future areas of interest include scaling up these ideas to continuous state spaces, extending them to model-free RL, and to provide per-episode risk-sensitive guarantees on the reward obtained.

\section*{Acknowledgements}
Part of this work were completed while Christoph Dann was an intern at Google. 
We appreciate support from a Microsoft Faculty Fellowship and a NSF Career award.
\bibliographystyle{icml2019}
\bibliography{thesis_manual}
\appendix
\onecolumn
\appendixpage

\section{Proofs of Relationship of IPOC Bounds to Other Bounds}

\subsection{Proof of Proposition~\ref{prop:ipoc_properties_cum}}

\begin{proof}[Proof of Proposition~\ref{prop:ipoc_properties_cum}]
We prove each part separately:

\textbf{Part 1:} With probability at least $1-\delta$, for all $T$, the regret is bounded as
\begin{align}
    \sum_{k=1}^T \Delta_k \leq \sum_{k=1}^T \epsilon_k \leq F(W, T, \delta)
\end{align}
where the first inequality follows from condition~1 and the second from condition~2a. Hence, the algorithm satisfied a high-probability regret bound $F(W, T, \delta)$ uniformly for all $T$.

\textbf{Part 2:} By assumption, the cumulative sum of certificates is bounded by $F(W, T, \delta) = \sum_{p=0}^N (C_p(W, \delta)T)^{\frac{p}{p+1}}$. Since the minimum is always smaller than the average,  the smallest certificates output in the first $T$ episodes is at most
\begin{align}
    \min_{k \in [T]}\epsilon_k 
    \leq \frac{\sum_{k=1}^T\epsilon_k}{T} 
    \leq \frac{F(W,T, \delta)}{T}
    = \sum_{p=0}^N C_p(W, \delta)^{\frac{p}{p+1}} T^{-\frac{1}{p+1}}.
\end{align}
For $T \geq \frac{C_p(W, \delta)^p (N+1)^{p+1}}{\epsilon^{p+1}}$ we can bound
\begin{align}
C_p(W, \delta)^{\frac{p}{p+1}} T^{-\frac{1}{p+1}}
\leq C_p(W, \delta)^{\frac{p}{p+1}} \left(\frac{C_p(W, \delta)^p (N+1)^{p+1}}{\epsilon^{p+1}}\right)^{-\frac{1}{p+1}} \leq \frac{\epsilon}{N}.
\end{align}
As a result, for $T \geq \sum_{p=0}^N \frac{C_p(W, \delta)^p (N+1)^{p+1}}{\epsilon^{p+1}} \geq \max_{p \in [N] \cup \{0\}} \frac{C_p(W, \delta)^p (N+1)^{p+1}}{\epsilon^{p+1}}$, we can ensure that $\frac{F(W,T, \delta)}{T} \leq \epsilon$, which completes the proof.
\end{proof}

\subsection{Proof of Proposition~\ref{prop:ipoc_properties_mis}}
\begin{proof}[Proof of Proposition~\ref{prop:ipoc_properties_mis}]
We prove each part separately:

\textbf{Part 1:} 

By Definition~\ref{def:ipoc} and the assumption, we have that with probability at least $1 - \delta$ for all $\epsilon > 0$, it holds
\begin{align}
  \sum_{k}^{\infty} \one\{ \Delta_k > \epsilon\} \leq  \sum_{k}^{\infty} \one\{ \epsilon_k > \epsilon\} \leq F(W, \epsilon, \delta),
\end{align}
where the first inequality follows from condition~1 of IPOC and the second from condition~2b. This proves that the algorithm also satisfies a Uniform-PAC bound as defined by \citet{dann2017unifying}.

\textbf{Part 2:} 
Since by definition of IPOC, with probability at least $1-\delta$ for all $\epsilon > 0$, the algorithm can output a certificate $\epsilon_k > \epsilon$ at most $F(W, \epsilon, \delta)$ times.  By the pigeon hole principle, the algorithm has to output at least one certificate $\epsilon_k \leq \epsilon$ in the first $F(W, \epsilon, \delta)+1$ episodes.

\textbf{Part 3:} This part of the proof is based on the proof of Theorem~A.1 in \citet{dann2017unifying}. For convenience, we omit the dependency of $\tilde C$ and $C_p$ on $W$ and $\delta$ in the following.
We assume
\begin{align}
    F(W, \epsilon, \delta) 
    = \sum_{p=1}^N \frac{C_p}{\epsilon^p} \left(\ln \frac{\bar C}{\epsilon}\right)^{np} 
    = \sum_{p=1}^N C_p g(\epsilon)^p
\end{align}
where $\bar C$ is chosen so that for all $p\in [N]$ holds $\bar C^p \geq \Delta_{\max} C_p$ as well as $\bar C \geq \tilde C$.
We also defined $g(\epsilon) := \frac 1 \epsilon \left( \ln \frac {\bar C}{\epsilon}\right)^n$. 
Consider now the cumulative sum of certificates after $T$ episodes.
We distinguish two cases: \\ 
\textbf{Case 1: $T \leq \max_{p \in [N]} \frac{e^p}{\bar C^p} N C_p$.}
Note that $e = \exp(1)$ here.
We use the fact that all certificates are at most $\Delta_{\max}$ and bound
\begin{align}
    \sum_{k=1}^T \epsilon_k \leq \Delta_{\max} T \leq \max_{p \in [N]} \frac{e^p}{\bar C^p} N C_p \Delta_{\max} 
    \leq N e^N
\end{align}
where the final inequality leverages the assumption on $\bar C$.

\textbf{Case 2: $T \geq \max_{p \in [N]} \frac{e^p}{\bar C^p} N C_p$.}
  The mistake bound $F(W, \epsilon, \delta)$ is monotonically decreasing for $\epsilon \in (0,
\Delta_{\max}]$.
If $T$ is large enough, we can therefore find an $\epsilon_{\min} \in (0,
\Delta_{\max}]$ such that $F(W, \epsilon, \delta) \leq T$ for all $\epsilon \in (\epsilon_{\min}, \Delta_{\max}]$.
The cumulative sum of certificates can then be bounded as follows
\begin{align}
    \sum_{k=1}^T \epsilon_k 
    \leq &  T \epsilon_{\min} + \int_{\epsilon_{\min}}^{\Delta_{\max}} F(W, \epsilon, \delta) d\epsilon.
    \label{eqn:intbound1}
\end{align}
This bound assumes the worst case where the algorithm first outputs as many $\epsilon_k = \Delta_{\max}$ as allowed and subsequently smaller certificates as controlled by the mistake bound.

Before further simplifying this expression, we claim that 
\begin{align}
\epsilon_{\min} = \frac{\ln \left(\bar C\min_{p \in [N]} \left(\frac{T}{N C_p} \right)^{1/p} \right)^n}{\min_{p \in [N]} \left(\frac{T}{N C_p} \right)^{1/p}}
\end{align} 
satisfies the desired property $F(W, \epsilon_{\min}, \delta) \leq T$.
To see this, it is sufficient to show that $g(\epsilon_{\min}) \leq \min_{p \in [N]} \left(\frac{T}{N C_p} \right)^{1/p}$, as it implies
\begin{align}
    \sum_{p=1}^N C_p g(\epsilon_{\min})^p
    = \sum_{p=1}^N C_p \min_{p \in [N]} \left(\frac{T}{N C_p} \right)^{p/p}
    \leq \sum_{p=1}^N \frac{T}{N} \frac{C_p}{C_p} = T.
\end{align}
To show the bound on $g(\epsilon_{\min})$, we verify that for any $x \geq \exp(1) /\bar C$
\begin{align}
    g\left( \frac{(\ln (\bar C x))^n}{x}\right)
    = x \frac{\ln\left( \frac{\bar C x}{\ln (x \bar C)^n}\right)^n}{\ln(\bar C x)^n}
    = x~\frac{1}{\ln(\bar C x)^n}\left(\ln(\bar C x) - n\ln (\ln (x \bar C))\right)^n
    \leq x. 
\end{align}
Since $\epsilon_{\min}$ has this form for $x = \min_{p \in [N]} \left(\frac{T}{(N)C_p} \right)^{1/p}$ and $ \min_{p \in [N]} \left(\frac{T}{(N)C_p} \right)^{1/p}  \geq \frac{e}{\bar C}$ by case assumption on $T$, $\epsilon_{\min}$ satisfies the desired property $F(W, \epsilon_{\min}, \delta) \leq T$.

We now go back to Equation~\eqref{eqn:intbound1} and simplify it to
\begin{align}
    \sum_{k=1}^T \epsilon_k 
    \leq &  T \epsilon_{\min} + \int_{\epsilon_{\min}}^{\Delta_{\max}} F(W, \epsilon, \delta) d\epsilon.\\
    = & T \epsilon_{\min} + \sum_{p=1}^N C_p \int_{\epsilon_{\min}}^{\Delta_{\max}} g(\epsilon)^p d\epsilon\\
    = & T \epsilon_{\min} + \sum_{p=1}^N C_p \int_{\epsilon_{\min}}^{\Delta_{\max}} \frac{1}{\epsilon^p}
    \ln \left( \frac{\bar C}{\epsilon} \right)^{np} d\epsilon\\
    \leq & T \epsilon_{\min} + \sum_{p=1}^N C_p \ln \left( \frac{\bar C}{\epsilon_{\min}} \right)^{np}\int_{\epsilon_{\min}}^{\Delta_{\max}} \frac{1}{\epsilon^p}
     d\epsilon\\
    = & T \epsilon_{\min} 
    + C_1 \left( \ln \frac {\bar C}{\epsilon_{\min}}\right)^{n} \ln \frac{\Delta_{\max}}{\epsilon_{\min}}
    + \sum_{p=2}^N \frac{C_p}{1 - p}  \left( \ln \frac {\bar C}{\epsilon_{\min}}\right)^{np}
    \left[ \Delta_{\max}^{1-p} - \epsilon_{\min}^{1-p}\right].
    \label{eqn:intboundterms}
\end{align}
For each term in the final expression, we show that it is $\tilde O\left(\sum_{p=1}^{N} C_{p}^{1/p} T^{\frac{p-1}{p}} \polylog (\bar C T) \right)$. Starting with the first, we bound
\begin{align}
    T \epsilon_{\min} =& \frac{T\ln \left(\bar C\min_{p \in [N]} \left(\frac{T}{N C_p} \right)^{1/p} \right)^n}{\min_{p \in [N]} \left(\frac{T}{N C_p} \right)^{1/p}}
    = \ln \left(\bar C\min_{p \in [N]} \left(\frac{T}{N C_p} \right)^{1/p} \right)^n
    \max_{p \in [N]} \frac{T N^{1/p} C_p^{1/p}}{T^{1/p}}\\
    \leq & \ln \left(\bar C \frac{T}{N C_1} \right)^n N
    \max_{p \in [N]} T^{\frac{p-1}{p}} C_p^{1/p}
    \leq \ln \left(\bar C T \right)^n N
    \max_{p \in [N]} T^{\frac{p-1}{p}} C_p^{1/p}\\
    = & \tilde O\left(\sum_{p=1}^{N} C_{p}^{1/p} T^{\frac{p-1}{p}} \polylog (\bar C T) \right)\,.
\end{align}
For the second term, we start with bounding the inverse of $\epsilon_{\min}$ separately leveraging the case assumption on $T$:
\begin{align}
    \frac 1{\epsilon_{\min}} =& {\min_{p \in [N]} \left(\frac{T}{N C_p} \right)^{1/p}}\frac{1}{\ln \left(\bar C\min_{p \in [N]} \left(\frac{T}{N C_p} \right)^{1/p} \right)^n}
    \leq {\min_{p \in [N]} \left(\frac{T}{N C_p} \right)^{1/p}}\frac{1}{\ln \left(\bar C\min_{p \in [N]} \left(\frac{e^p}{\bar C^p} \right)^{1/p} \right)^n}\\
        \leq&\min_{p \in [N]} \left(\frac{T}{N C_p} \right)^{1/p}
        \leq T.
\end{align}
The second term of Equation~\eqref{eqn:intboundterms} can now be upper bounded by:
\begin{align}
    C_1 \left( \ln \frac {\bar C}{\epsilon_{\min}}\right)^{n} \ln \frac{\Delta_{\max}}{\epsilon_{\min}}
    \leq C_1 \ln (\bar C T)^n \ln (\Delta_{\max} T) \leq C_1 \ln(\bar C T)^{n+1}
    = \tilde O\left(\sum_{p=1}^{N} C_{p}^{1/p} T^{\frac{p-1}{p}} \polylog (\bar C T) \right)
\end{align}
where the last inequality leverages the definition of $\bar C$.
Finally, consider the last term of Equation~\eqref{eqn:intboundterms} for $p > 2$:
\begin{align}
    & \frac{C_p}{1 - p}  \left( \ln \frac {\bar C}{\epsilon_{\min}}\right)^{np}
    \left[ \Delta_{\max}^{1-p} - \epsilon_{\min}^{1-p}\right]
    = \frac{C_p}{p-1}  \left( \ln \frac {\bar C}{\epsilon_{\min}}\right)^{np}
    \left[ \epsilon_{\min}^{1-p} - \Delta_{\max}^{1-p}\right]
    \leq \frac{C_p}{p-1}  \ln (\bar C T)^{np} \epsilon_{\min}^{1-p} \\
    = &\frac{C_p}{p-1}  \ln (\bar C T)^{np} (\epsilon_{\min}^{-1})^{p-1}
    \leq \frac{C_p}{p-1}  \ln (\bar C T)^{np} \left(\frac{T}{N C_p} \right)^{(p-1)/p}
    \leq   \ln (\bar C T)^{np}C_p^{1/p} T^{(p-1)/p}\\
    = &\tilde O\left(\sum_{p=1}^{N} C_{p}^{1/p} T^{\frac{p-1}{p}} \polylog (\bar C T) \right)\,.
\end{align}
Combining all bounds above we obtain that
\begin{align}
\sum_{k=1}^T \epsilon_k \leq \tilde O\left(\sum_{p=1}^{N} C_{p}^{1/p} T^{\frac{p-1}{p}} \polylog (\bar C T) \right)
\leq  \tilde O\left(\sum_{p=1}^{N} C_{p}^{1/p} T^{\frac{p-1}{p}} \polylog (\Delta_{\max}, \tilde C, T) \right)\,.
\end{align}
\end{proof}

\section{Theoretical Analysis of Algorithm~\ref{alg:ulcr} for Tabular MDPs}
To ease the presentation, we chose valid but slightly loose confidence widths $\psi_{k,h}$ in Algorithm~\ref{alg:ulcr}. Below is a version of \ulcr with slightly tighter confidence intervals. It uses different width for upper $\psiub_{k,h}$ and lower $\psilb_{k,h}$ confidence widths and is expected to perform better empirically. The IPOC analysis below applies to both algorithms.

\begin{algorithm}[H]
\SetKwInOut{Inputa}{Input}
\Inputa{failure tolerance $\delta \in (0,1]$}
    $\phi(n) = 1 \wedge \sqrt{\frac{0.52}{n}\left( 1.4 \ln \ln(e \vee n) + \ln\frac{26\numS \numA (H + 1 + \numS)}{\delta}\right)};
    \qquad
    \Vub_{k,H+1}(s) = 0; \quad  \Vlb_{k,H+1}(s) = 0 \quad \forall s \in \statespace, k \in \NN
    $\;
    
\For{$k=1, 2, 3, \dots$}{

    \For(\tcp*[f]{update empirical model and number of observations}){$s',s \in \statespace, a \in \actionspace$}
    {
        $n_k(s,a) = \sum_{i=1}^{k-1} \sum_{h=1}^H \one\{s_{i,h} = s, a_{i,h} = a\}$\;
        $\hat r_k(s,a) = \frac{1}{n_k(s,a)} \sum_{i=1}^{k-1} \sum_{h=1}^H r_{i,h} \one\{s_{i,h} = s, a_{i,h} = a\}$\;
        $\hat P_k(s' | s, a) = \frac{1}{n_k(s,a)} \sum_{i=1}^{k-1} \sum_{h=1}^H \one\{s_{i,h} = s, a_{i,h} = a, s_{i,h+1} = s'\}$
    }
    \For
    (\tcp*[f]{optimistic planning leveraging upper and lower confidence bounds})
    {$h=H$ \KwTo $1$ \textbf{and} $s \in \statespace$}
    {

            \For{$a \in \actionspace$}
            {
            $\psiub_{k,h}(s,a) = \min\bigg\{(V^{\max}_{h+1} + 1)\phi(n_k(s,a)), $\\
            $\quad (1 + \sqrt{12}\sqrt{\sigma^2_{\hat P_k(s,a)}(\Vub_{k,h+1}) + \hat P_{k}(s,a)(\Vub_{k,h+1} - \Vlb_{k,h+1})^2}\phi(n_k(s,a)) + 8.13 V^{\max}_{h+1}\phi(n_k(s,a))^2,$\\
            $\quad (1 + \sqrt{12}\sigma_{\hat P_k(s,a)}(\Vub_{k,h+1}))\phi(n_k(s,a)) + \frac 1 {H} \hat P(s,a)( \Vub_{k,h+1} - \Vlb_{k,h+1})$\\
            $\qquad \qquad \qquad \qquad \qquad \qquad\qquad \qquad \qquad + (20.13 H\|\Vub_{k,h+1} - \Vlb_{k,h+1}\|_1) \phi(n_k(s,a))^2
             \bigg\}$\;
             
            $\psilb_{k,h}(s,a) = \min\bigg\{(2\sqrt{\numS} V^{\max}_{h+1} + 1)\phi(n_k(s,a)), $\\
            $\quad
            \left(V^{\max}_{h+1} + 1 + 2 \sqrt{P_k(s,a)} (\Vub_{k,h+1} - \Vlb_{k,h+1}) \right)\phi(n_k(s,a)) + 4.66\|\Vub_{k,h+1} - \Vlb_{k,h+1}\|_1 \phi(n_k(s,a))^2,$\\
            $\quad
            \left(\sqrt{12}\sqrt{\sigma^2_{\hat P_k(s,a)}(\Vub_{k,h+1}) + \hat P_k(s,a)(\Vub_{k,h+1} - \Vlb_{k,h+1})^2} +1 + 2 \sqrt{P_k(s,a)} (\Vub_{k,h+1} - \Vlb_{k,h+1}) \right)\phi(n_k(s,a)) $\\
            $\qquad \qquad \qquad \qquad \qquad \qquad\qquad \qquad 
            + (8.13 V^{\max}_{h+1} + 4.66\|\Vub_{k,h+1} - \Vlb_{k,h+1}\|_1) \phi(n_k(s,a))^2,$\\
            $\quad
            (1 + \sqrt{12}\sigma_{\hat P_k(s,a)}(\Vub_{k,h+1}))\phi(n_k(s,a)) + \frac 1 {H} \hat P_k(s,a)( \Vub_{k,h+1} - \Vlb_{k,h+1})$\\
            $\qquad \qquad \qquad \qquad \qquad \qquad\qquad \qquad+ (8.13 V^{\max}_{h+1} + (32H + 4.66)\|\Vub_{k,h+1} - \Vlb_{k,h+1}\|_1) \phi(n_k(s,a))^2\bigg\}
             $\;
                $\Qub_{k,h}(s,a) = 0 \vee \,\,( \hat r_k(s,a) 
                + \hat P_k(s,a)\Vub_{k,h+1} + \psiub_{k,h}(s,a)) \,\,\wedge V^{\max}_h$
                \;
                $\Qlb_{k,h}(s,a) = 0 \vee \,\,(  \hat r_k(s,a) 
                + \hat P_k(s,a)\Vlb_{k,h+1} - \psilb_{k,h}(s,a))\,\,\wedge V^{\max}_h$
                \;
             }
 $\pi_k(s, h) = \argmax_{a} \Qub_{k,h}(s, a);
 \qquad \Vub_{k,h}(s) = \Qub_{k,h}(s, \pi_k(s, h));
 \qquad \Vlb_{k,h}(s) = \Qlb_{k,h}(s, \pi_k(s, h))$\;
}
\textbf{output} policy $\pi_k$ with certificate $\epsilon_k = \Vub_1(s_{k,1}) - \Vlb_1(s_{k,1})$\;
\textbf{sample episode} $k$ with policy $\pi_k$\;
}
\caption{\ulcr ({\bf O}ptimistic {\bf R}einforcement {\bf L}earning with  {\bf C}ertificates)}
\label{alg:ulcr_morecomplicated}
\end{algorithm}

To ease the notation in the analysis of \ulcr, we first introduce several helpful definitions:
\begin{align}
    w_{k,h}(s,a) =& \Ex\left[ \one\{s_{k,h}=s, a_{k,h}=a\} ~\bigg| a_{k,1:h} \sim \pi_k, s_{k,1} = s_{k,1}\right]\\
    w_k(s,a) =& \sum_{h=1}^H w_{k,h}(s,a) \\
    \wmin =&  \frac{\epsilon c_\epsilon}{\numS (\numA \wedge H) H} \quad\text{where $c_\epsilon = e^{-6}/4$} \\
    L_{k} =& \{ (s,a) \in \saspace \, : \, w_{k}(s,a) \geq \wmin \}\\
    \llnp(x) =& \ln (\ln (\max\{x, e\}))\\
    \range(x) =& \max(x) - \min(x) \quad\text{for vector $x$} \\
    \delta' =& \frac{\delta}{5 \numS \numA H + 4 \numS \numA + 4 \numS^2 \numA} \\
    \phi(n) =& 1 \wedge 
    \sqrt{ \frac{0.52}{n} \left( 1.4 \llnp\left(2n \right) + \log \frac{5.2}{\delta'}\right)}
    \,.
\end{align}
The proof proceeds in four main steps. First, we define all concentration arguments needed in the form of a failure event and gives an upper bound for its probability. 
We then prove that all value estimates $\Qub$ and $\Qlb$ are indeed optimistic / pessimistic outside the failure event. In a third step, we show a bound on the certificates in the form of a weighted sum of decreasing terms and finally apply a refined pigeon hole argument to bound the number of times this bound can exceed a given threshold.
\subsection{Failure event and all probabilistic arguments}
The failure event is defined as
$F = F^N  \cup F^P \cup F^{PE} \cup F^V \cup F^{VE}  \cup F^{L1} \cup F^{R}$
where
\begin{align}
    F^{R} =& \left\{ \exists ~k, s, a: \, 
        |\tilde r_k(s, a) - r(s, a)| 
        \geq \phi(n_k(s,a))
        \right\}\\
    F^{V} =& \left\{ \exists k ,s, a, h: \, 
        |(\hat P_k(s, a) - P(s, a)) V^{\star}_{h+1}| 
        \geq \range(V^{\star}_{h+1}) \phi(n_k(s,a)) \right\}\\
    F^{VE} =& \bigg\{ \exists k, s, a, h: \, 
     |(\hat P_k(s, a) - P(s, a)) V^{\star}_{h+1}| 
    \geq \sqrt{4 \hat P_k(s,a)[( V^\star_{h+1} - P(s,a)V^\star_{h+1})^2]}
    \phi(n_k(s,a))\\
    &\qquad \qquad \qquad \qquad\qquad \qquad \qquad \qquad \qquad \qquad+
     4.66 \range(V^{\star}_{h+1}) \phi(n_k(s,a))^2\bigg\}\\
    F^{P} =& \bigg\{ \exists ~k, s, s', a: \, | \hat P_k(s' | s, a) - P(s' | s, a)| 
        \geq 
        \sqrt{4 P(s' | s,a)} \phi(n_k(s,a)) + 1.56 \phi(n_k(s,a))^2\bigg\}\\
    F^{PE} =& \bigg\{ \exists ~k, s, s', a: \, | \hat P_k(s' | s, a) - P(s' | s, a)| 
        \geq 
        \sqrt{4 \hat P_k(s' | s,a)} \phi(n_k(s,a)) + 4.66 \phi(n_k(s,a))^2\bigg\}\\
    F^{L1} =& \left\{ \exists ~k, s, a: \,  \|\hat P_k(s, a) - P(s, a)\|_1 
        \geq 2 \sqrt{\numS}\phi(n_k(s,a)) \right\}\\   
    F^{N} =& \left\{ \exists ~k, s, a: \, n_{k}(s, a) < \frac 1 2 \sum_{i < k} w_{i}(s, a) - H\ln \frac{\numS \numA H}{\delta'} \right\}.
\end{align}
The following lemma shows that $F$ has low probability.

\begin{lemma}
    For any parameter $\delta' > 0$, the probability of each failure event is bounded as
    \begin{align}
        \prob\left(  F^V \right) \leq& 2 \numS \numA H \delta' &
        \prob(F^{VE})            \leq& 2 \numS \numA H \delta' &
        \prob(F^{R})             \leq& 2 \numS \numA   \delta' &
        \prob(F^{P})             \leq& 2 \numS^2 \numA \delta' \\
        \prob(F^{PE})            \leq& 2 \numS^2 \numA \delta' &
        \prob(F^{L1})            \leq& 2 \numS   \numA \delta' &
        \prob(F^{N })            \leq& \numS \numA H \delta'.
    \end{align}
    The failure probability is thus bounded by $\prob(F) \leq \delta' (5 \numS \numA H + 4 \numS \numA + 4 \numS^2 \numA ) = \delta$, since we set $\delta' = \frac{\delta}{5 \numS \numA H + 4 \numS \numA + 4 \numS^2 \numA}$.
    \label{lem:failureprob}
\end{lemma}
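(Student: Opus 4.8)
The plan is to bound each of the seven events $F^R, F^V, F^{VE}, F^P, F^{PE}, F^{L1}, F^N$ separately, writing each as a union over a finite index set of a time-uniform deviation event for a suitable martingale, controlling each such event by $\delta'$ (or $2\delta'$ when the tail is two-sided), and finishing with a union bound. The stated probabilities are then just the cardinalities of these index sets: $F^R$ and $F^{L1}$ range over the $\numS\numA$ pairs in $\saspace$, $F^V$, $F^{VE}$ and $F^N$ over the $\numS\numA H$ triples $(s,a,h)$, and $F^P$, $F^{PE}$ over the $\numS^2\numA$ triples $(s,s',a)$; all events except $F^N$ carry an extra factor of two. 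Summing the seven contributions gives exactly $\delta'(5\numS\numA H + 4\numS\numA + 4\numS^2\numA) = \delta$ by the choice of $\delta'$.

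First I would fix the filtration generated by the observed history and record the martingale structure underlying each event: for a fixed $(s,a)$, the rewards and the successor-state indicators $\one\{s_{i,h+1}=s'\}$ collected on visits to $(s,a)$ are martingale differences with conditional means $r(s,a)$ and $P(s'|s,a)$, and likewise $V^\star_{h+1}(s_{i,h+1})$ has conditional mean $P(s,a)V^\star_{h+1}$ with range $\range(V^\star_{h+1})$. This is the point that makes the argument work despite the counts $n_k(s,a)$ and the policy $\pi_k$ depending on past data. The essential tool is then a time-uniform (anytime) concentration inequality whose half-width matches $\phi$: the $\llnp$ term and the constant $0.52$ in $\phi$ are precisely the signature of a law-of-the-iterated-logarithm-style bound that holds simultaneously for every value of the count, and therefore for all episodes $k$ at once, so that no (infinite) union bound over $k$ is needed. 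I would invoke the uniform Hoeffding-type version of this inequality for $F^R$ and $F^V$ (scaling the latter by $\range(V^\star_{h+1})$ since $V^\star_{h+1}$ is a fixed, data-independent vector), its empirical-Bernstein form for $F^{VE}$, $F^P$ and $F^{PE}$ (where the empirical second-moment proxies $\hat P_k(s,a)[(V^\star_{h+1}-P(s,a)V^\star_{h+1})^2]$ and $\hat P_k(s'|s,a)$ appear), and a time-uniform total-variation / $\ell_1$ inequality (a uniform Weissman-type bound, whose $\sqrt{\numS}$ factor produces the $2\sqrt{\numS}\phi$ width) for $F^{L1}$. Most of these uniform inequalities can be imported from the concentration toolkit of the underlying \ubev analysis \citep{dann2017unifying}.

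The count event $F^N$ I would treat differently. Decomposing $n_k(s,a)=\sum_{h=1}^H n_k^{(h)}(s,a)$ with $n_k^{(h)}(s,a)=\sum_{i<k}\one\{s_{i,h}=s,a_{i,h}=a\}$, the per-timestep indicators form, across episodes, a martingale difference sequence with conditional means $w_{i,h}(s,a)$. A multiplicative Bernstein/Chernoff inequality for martingales, made uniform over $k$ by a maximal (Ville-type) argument, then lower-bounds each $n_k^{(h)}(s,a)$ by $\tfrac12\sum_{i<k}w_{i,h}(s,a)-\ln\frac{\numS\numA H}{\delta'}$ with failure probability $\delta'$ per $(s,a,h)$; summing over the $H$ timesteps yields the claimed bound with additive slack $H\ln\frac{\numS\numA H}{\delta'}$, accounts for the $\numS\numA H\delta'$ probability, and is one-sided, so it contributes no factor of two. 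The main obstacle is exactly this time-uniformity: because the inequalities must hold for all episodes $k$ simultaneously and $k$ ranges over an infinite set, the deviation bounds cannot be obtained by a union bound over $k$ and must instead hold uniformly over all possible values of the count $n$ at once, which is what forces the $\llnp$ form and requires either a peeling/stitching argument over dyadic ranges of $n$ or a direct self-normalized martingale inequality, together with a careful verification that the data's dependence on the running policy does not break the martingale structure. Once these uniform inequalities are established, the remainder is routine bookkeeping of index-set cardinalities and two-sided factors.
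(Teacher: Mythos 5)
Your proposal is correct and follows essentially the same route as the paper's proof: per-$(s,a)$ martingale difference sequences indexed by visit counts (formalized in the paper via stopping times and stopped filtrations), time-uniform LIL-type concentration so that the bounds hold for all counts---and hence all episodes---simultaneously without a union bound over $k$, a subset-union (Weissman-style) argument for $F^{L1}$, a one-sided multiplicative martingale bound summed over the $H$ timesteps for $F^N$, and union bounds whose index-set cardinalities yield exactly the stated constants $5\numS\numA H + 4\numS\numA + 4\numS^2\numA$. The only cosmetic differences are that the paper imports its uniform boundaries from \citet{howard2018uniform} rather than the \ubev toolkit, and that $F^P$ is handled with a Bennett-type bound using the true (predictable) variance $P(s'|s,a)$ while only $F^{VE}$ and $F^{PE}$ use the empirical-variance form you describe.
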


\begin{proof}
When proving that these failure events indeed have low probability, we need to consider sequences of random variables whenever a particular state and action pair $(s,a)$ was observed. Since the number of times a particular $(s,a)$ was observed as well as in which episodes, is random, we have to treat this carefully. To that end, we first define  $\sigma$-fields $\Gcal^{s,a}_{i}$ which correspond to all observations up to exactly $i$ observations of that $(s,a)$-pair. 
    
Consider a fixed $(s,a) \in \statespace \times \actionspace$, and
denote by $\mathcal F_{(k-1)H+h}$ the sigma-field induced by the first $k-1$
episodes and the $k$-th episode up to $s_{k,h}$ and $a_{k,h}$ but not
$s_{k,h+1}$. Define 
\begin{align}
    \tau_i = \inf\left\{ (k-1)H + h ~:~ \sum_{j=1}^{k} \sum_{t=1}^H \one\{ s_{j,t} = s, a_{j,t} = a\} + \sum_{t=1}^h \one\{ s_{k,t} = s, a_{k,t} = a\} \geq i \right\}
\end{align}
to be the index where $(s,a)$
was observed the $i$th time. Note that $\tau_i$ are
stopping times with respect to $\mathcal F_i$.  Hence, the stopped version $\Gcal^{s,a}_i = \mathcal F_{\tau_i} = \{
A \in \mathcal F_\infty \, : \, A \cap \{\tau_i \leq t \} \in \mathcal F_t \,\,\forall\, t \geq 0\}$
is a filtration as well.
We are now ready to bound the probability of each failure event.

\textbf{Failure event $F^V$: }
For a fixed $s \in \statespace, a \in \actionspace, h \in [H]$, we define $X_i = \frac{1}{\range(V^\star_{h+1})} (V^{\star}_{h+1}(s'_{i}) -  P(s, a) V^{\star}_{h+1}) \one\{\tau_i <
    \infty\}$ where $s'_{i}$ is the value of the successor state when $(s,a)$ was observed the $i$th time (formally $s_{k,j+1}$ with $k = \lfloor \tau_i / H \rfloor$ and $j= \tau_i \mod H$) or arbitrary, if $\tau_i = \infty$). 

    By the Markov property of the MDP, $X_i$ is a martingale
    difference sequence with respect to the filtration $\mathcal G_i^{s,a}$, that is, $\Ex[X_i | \mathcal G^{s,a}_{i-1}] = 0$.
    Furthermore,  it is bounded as 
    \begin{align}
        X_i \in \left[ 
        \frac{\min V^{\star}_{h+1} - P(s, a) V^{\star}_{h+1}}{\range(V^\star_{h+1})}
        , 
        \frac{\max V^{\star}_{h+1} -  P(s, a) V^\star_{h+1}}{\range(V^\star_{h+1})}
        \right]
    \end{align} where
    the range is 
    \begin{align}
        \frac{\max V^{\star}_{t+1} -  P(s, a) V^\star_{t+1}}{\range(V^\star_{t+1})} - \frac{\min V^{\star}_{t+1} -P(s, a) V^{\star}_{t+1}}{\range(V^\star_{t+1})} = \frac{\range V^\star_{t+1}}{\range V^\star_{t+1}} = 1. 
    \end{align}
    Hence, $S_j = \sum_{i=1}^j X_i$ with $V_j = j/4$ satisfies Assumption~1 by \citet{howard2018uniform} (see Hoeffding I entry in Table~2 therein) with any sub-Gaussian boundary $\psi_G$. The same is true for the sequence $-S_k$. Using the sub-Gaussian boundary from Corollary~\ref{cor:subgammaboundary}, we get that with probability at least $1 - 2 \delta'$ for all $n \in \NN$ 
    \begin{align}
        \left|\sum_{i=1}^n X_i \right| \leq  1.44 \sqrt{ \frac{n}{4} \left( 1.4 \llnp(n/2) + \log \frac{5.2}{\delta'}\right)} . \label{eqn:asdad}
    \end{align}
    Since that holds after each observation, this is in particular true before each episode $k+1$ where $(s,a)$ has been observed $n_k(s,a)$ times so far. We can now rewrite the value of the martingale as 
    \begin{align}
        \left|\sum_{i=1}^{n_k(s,a)} X_i \right|
        =&  \frac{1}{\range(V^\star_{h+1})}\left|\sum_{i=1}^{n_k(s,a)}  (V^{\star}_{h+1}(s'_{i}) -  P(s, a) V^{\star}_{h+1}) \right|
        =  \frac{n_k(s,a)}{\range(V^\star_{h+1})}\left|\frac{\sum_{i=1}^{n_k(s,a)}  V^{\star}_{h+1}(s'_{i})}{n_k(s,a)} -  P(s, a) V^{\star}_{h+1} \right|\\
        =&\frac{n_k(s,a)}{\range(V^\star_{h+1})}\left|\hat P_k(s, a) V^{\star}_{h+1} -  P(s, a) V^{\star}_{h+1} \right|
        \label{eq:XVid}
    \end{align}
    and combine this equation with Equation~\eqref{eqn:asdad} to realize that for all $k$
    \begin{align}
        |(\hat P_k(s, a) - P(s, a)) V^{\star}_{h+1}| 
        \leq &\range(V^{\star}_{h+1})  \sqrt{ \frac{0.52}{n_k(s,a)} \left( 1.4 \llnp\left(\frac{n_k(s,a)}{2}\right) + \log \frac{5.2}{\delta'}\right)}\\
        \leq & \range(V^{\star}_{h+1})  \sqrt{ \frac{0.52}{n_k(s,a)} \left( 1.4 \llnp\left(2 n_k(s,a)\right) + \log \frac{5.2}{\delta'}\right)}
    \end{align}
    holds with probability at least $1 - 2 \delta'$. Since in addition $|(\hat P_k(s, a) - P(s, a))^\top V^{\star}_{h+1}| 
        \leq \range(V^{\star}_{h+1})$ at all times, we can bound
        $|(\hat P_k(s, a) - P(s, a))^\top V^{\star}_{h+1}| 
        \leq \range(V^{\star}_{h+1}) \phi(n_k(s,a))$ which shows that $F^V$ has low probability for a single $(s,a,h)$ triple. Applying a union bound over all $h \in [H]$ and $s,a \in \saspace$, we can conclude that $\prob(F^{V}) \leq 2 \numS \numA H \delta'$.

\textbf{Failure event $F^{VE}$: }
    As an alternative to the Hoeffding-style bound above, we can use Theorem~5 by \citet{howard2018uniform} with the sub-exponential bound from Corollary~\ref{cor:subgammaboundary} and the predictable sequence
    $\hat X_i = 0$.
    This gives that with $V_n = \sum_{i=1}^n(X_i - \hat X_i)^2 = \sum_{i=1}^n X_i^2  \leq n$ it holds with probability at least $1 - 2\delta'$ for all $n \in \NN$
        \begin{align}
        \left|\sum_{i=1}^n X_i \right| 
        \leq &  1.44 \sqrt{ V_n \left( 1.4 \llnp(2 V_n) + \log \frac{5.2}{\delta'}\right)} 
        + 2.42  \left(1.4 \llnp(2 V_n) + \log \frac{5.2}{\delta}\right)\\
        \leq & 1.44 \sqrt{ V_n \left( 1.4 \llnp(2n) + \log \frac{5.2}{\delta'}\right)} 
        + 2.42  \left(1.4 \llnp(2n) + \log \frac{5.2}{\delta}\right).
    \end{align}
    Hence, in particular before each episode $k$ when there are $n_k(s,a)$ observations, we have by the identity in Equation~\eqref{eq:XVid} that in the same event as above 
    \begin{align}
    |(\hat P_k(s, a) - P(s, a))^\top V^{\star}_{h+1}| 
        \leq &   \sqrt{\frac{4\range(V^{\star}_{h+1})^2 V_{n_k(s,a)}}{n_k(s,a)}}
        \sqrt{ \frac{0.52}{n_k(s,a)} \left( 1.4 \llnp\left(2 n_k(s,a)\right) + \log \frac{5.2}{\delta'}\right)}\\
        &+  \frac{2.42 \range(V^{\star}_{h+1})}{0.52} \frac{0.52}{n_k(s,a)} \left(1.4 \llnp(2n_k(s,a)) + \log \frac{5.2}{\delta}\right).
    \end{align}
Similar to Equation~\eqref{eq:XVid}, the following identity holds
\begin{align}
        \frac{4\range(V^{\star}_{h+1})^2 V_{n_k(s,a)}}{n_k(s,a)}
        = 4 \hat P_k(s,a)[( V^\star_{h+1} - P(s,a)V^\star_{h+1})^2]
    \end{align}
    and since $|(\hat P_k(s, a) - P(s, a)) V^{\star}_{h+1}| \leq 4.66 \range(V^{\star}_{h+1})$ at any time
        \begin{align}
    |(\hat P_k(s, a) - P(s, a))^\top V^{\star}_{h+1}| \leq \sqrt{4 \hat P_k(s,a)[( V^\star_{h+1} - P(s,a)V^\star_{h+1})^2]} \phi(n_k(s,a))^2 + 4.66 \range(V^{\star}_{h+1}) \phi(n_k(s,a))^2.
    \end{align}
    This shows that $F^{VE}$ has probability at most $1 - 2 \delta'$ for a specific $(s,a,t)$ triple. Hence, with an appropriate union bound, we get the desired bound $\prob(F^{VE}) \leq 2 \numS \numA H \delta'$.
    
\textbf{Failure event $F^R$: }
    For this event, we define $X_i$ as $X_i = (r'_i -  r(s,a)) \one\{\tau_i <
    \infty\}$ where $r'_{i}$ is the immediate reward when $s,a$ was observed the $i$th time (formally $r_{j,l}$ with $j = \lfloor \tau_i / H \rfloor$ and $l= \tau_i \mod H$ or arbitrary (e.g. $1$), if $\tau_i = \infty$).
    
    Similar to above, $X_i$ is a martingale w.r.t. $\Gcal_i^{s,a}$ and by assumption is bounded as $X_i \in [-r(s,a), 1 - r(s,a)]$, i.e., has a range of $1$. Therefore, $S_n = \sum_{i=1}^n X_i$ under the current definition with $V_n = n/4$ satisfies Assumption~1 by \citet{howard2018uniform} and  Corollary~\ref{cor:subgammaboundary} gives that with probability at least $1 - 2 \delta'$ for all $n \in \NN$ 
    \begin{align}
        \left|\sum_{i=1}^n X_i \right| \leq  1.44 \sqrt{ \frac{n}{4} \left( 1.4 \llnp(n/2) + \log \frac{5.2}{\delta'}\right)} .
    \end{align}
    Identical to above, this implies that with probability at least $ 1- 2\delta'$ for all episodes $k \in \NN$ it holds that
    $|\hat r_{k}(s, a) - r(s,a)| \leq  \phi(n_k(s,a))$ for this particular $s,a$. Applying a union bound over $\saspace$ finally yields that $\prob(F^R) \leq 2 \numS \numA \delta'$.
    
\textbf{Failure event $F^P$:}
    In addition to $s,a$, consider a fixed $s' \in \statespace$.
    We here define $X_i$ as $X_i = (\one\{s' = s'_i\} -  P(s'|s,a)) \one\{\tau_i <
    \infty\}$ where $s'_{i}$ is the successor state when $s,a$ was observed the $i$th time (formally $s_{k,j}$ with $k = \lfloor \tau_i / H \rfloor$ and $j= \tau_i \mod H$) or arbitrary, if $\tau_i = \infty$).
    By the Markov property, $X_i$ is a martingale with respect to $\Gcal_{i}^{s,a}$ and is bounded in $[-1,1]$. 
    
    Hence, $S_n = \sum_{i=1}^n X_i$ with $V_n = \sum_{i=1}^n \Ex[X_i^2 | \Gcal_{i-1}^{s,a}] =  P(s,a) (\one\{s' = \cdot\} -  P(s'|s,a))^2 \sum_{i=1}^n \one\{\tau_i < \infty\}\leq n$ satisfies Assumption~1 by \citet{howard2018uniform} (see Bennett entry in Table~2 therein) with sub-Gaussian $\psi_P$. The same is true for the sequence $-S_n$. Using Corollary~\ref{cor:subgammaboundary}, we get that with probability at least $1 - 2 \delta'$ for all $n \in \NN$ 
        \begin{align}
        \left| S_n \right| 
        \leq &  1.44 \sqrt{ V_n \left( 1.4 \llnp(2 V_n) + \log \frac{5.2}{\delta'}\right)} 
        + 0.81  \left(1.4 \llnp(2 V_n) + \log \frac{5.2}{\delta}\right)\\
        \leq & 1.44 \sqrt{ V_n \left( 1.4 \llnp(2n) + \log \frac{5.2}{\delta'}\right)} 
        + 0.81  \left(1.4 \llnp(2n) + \log \frac{5.2}{\delta}\right).
    \end{align}
    Hence, in particular after each episode $k$, we have in the same event because $S_{n_k(s,a)} = n_k(s,a) (\hat P_k(s' | s, a) - P(s' | s, a))$ that 
    \begin{align}
    | \hat P_k(s' | s, a) - P(s' | s, a)| 
    \leq & 
    1.44 \sqrt{ \frac{V_n}{0.52 n_k(s,a)}} \sqrt{\frac{0.52}{n_k(s,a)} \left( 1.4 \llnp(2n_k(s,a)) + \log \frac{5.2}{\delta'}\right)} \\
        & + \frac{0.81}{0.52} \frac{0.52}{n_k(s,a)}  \left(1.4 \llnp(2n_k(s,a)) + \log \frac{5.2}{\delta}\right).
    \end{align}
    Combining this bound with $| \hat P_k(s' | s, a) - P(s' | s, a)| \leq 1.56$ gives
    $| \hat P_k(s' | s, a) - P(s' | s, a)| 
    \leq 
    \sqrt{ \frac{1.44^2 V_n}{0.52 n_k(s,a)}} \phi(n_k(s,a)) + 1.56 \phi(n_k(s,a))^2$.
     It remains to bound the first coefficient as
    \begin{align}
    \frac{1.44^2 V_n}{0.52 n_k(s,a)} \leq & 4 P(s,a) (\one\{s' = \cdot\} -  P(s'|s,a))^2 = 4 P(s,a) \one\{s' = \cdot\}^2 - 4 P(s' | s,a)^2
    \\
    =& 4 P(s' | s,a) - 4 P(s' | s,a)^2 \leq 4 P(s' | s,a).
    \end{align}
    Hence, for a fixed $s', s, a$, with probability at least $1 - \delta'$ the following inequality holds for all episodes $k$
        \begin{align}
    | \hat P_k(s' | s, a) - P(s' | s, a)| 
\leq & 
    \sqrt{4 P(s' | s, a)} \phi(n_k(s,a)) + 1.56 \phi(n_k(s,a))^2.
    \end{align}
    Applying a union bound over $\saspace \times \statespace$, we get that $\prob(F^{P}) \leq 2 \numS^2 \numA \delta'$.
    
\textbf{Failure event $F^{PE}$:}
    The bound in $F^{P}$ uses the predictable variance of $X_i$ which eventually leads to a dependency on the unknown $P(s' | s, a)$ in the bound. In $F^{PE}$, the bound instead depends on the observed $P_k(s' | s, a)$. To achieve that bound, we use Theorem~5 by \citet{howard2018uniform} in combination with Corollary~\ref{cor:subgammaboundary}, similar to event $F^{VE}$. For the same definition of $X_i$ as in $F^P$, we then get that with probability at least $1 - 2\delta'$ for all $n \in \NN$
    \begin{align}
        \left| S_n \right| 
        \leq &  1.44 \sqrt{ V_n \left( 1.4 \llnp(2 V_n) + \log \frac{5.2}{\delta'}\right)} 
        + 2.42  \left(1.4 \llnp(2 V_n) + \log \frac{5.2}{\delta}\right)\\
        \leq & 1.44 \sqrt{ V_n \left( 1.4 \llnp(2n) + \log \frac{5.2}{\delta'}\right)} 
        + 2.42  \left(1.4 \llnp(2n) + \log \frac{5.2}{\delta}\right),
    \end{align}
    where $V_n = \sum_{i=1}^n \left((X_i + P(s'|s,a)) \one\{\tau_i <
    \infty\}\right)^2 \leq n$ (that is, we choose the predictable sequence as $\hat X_i = - P(s'|s,a) \one{\tau_i <
    \infty}$). Analogous to $F^{P}$, we have in the same event for all $k$
    \begin{align}
    | \hat P_k(s' | s, a) - P(s' | s, a)| 
    \leq & 
    \sqrt{ \frac{1.44^2 V_n}{0.52 n_k(s,a)}} \phi(n_k(s,a)) + 4.66 \phi(n_k(s,a))^2
    \end{align}
    and the first coefficient can be written as
    \begin{align}
        \frac{1.44^2 V_n}{0.52 n_k(s,a)} \leq \frac{4}{n_k(s,a)} \sum_{i=1}^{n_k(s,a)} (\one\{s' = s'_i\})^2 \one\{\tau_i <
    \infty\})^2 = 4 \hat P_k(s'| s,a).
    \end{align}
    After applying a union bound over all $s', s, a$, we get the desired failure probability bound
    $\prob(F^{PE}) \leq 2 \numS^2 \numA \delta'$.

\textbf{Failure event $F^{L1}$:}
    Consider a fix $s \in \statespace, a \in \actionspace$ and $\Bcal \subseteq \statespace$ and define
    $X_i = (\one\{s'_i \in \Bcal\} -  P(s' \in \Bcal|s,a)) \one\{\tau_i <
    \infty\}$. In complete analogy to $F^R$, we can show that with probability at least $1 - 2 \delta'$ the following bound holds for all episodes $k$
    \begin{align}
        |\hat P_k(s' \in \Bcal | s, a) - P(s' \in \Bcal | s, a) | \leq  \phi(n_k(s,a)).
    \end{align}
    We can use this result with $\delta' / 2^\numS$ in combination with union bound over all possible subsets $\Bcal \subseteq \statespace$ to get that
    \begin{align}
        \max_{\Bcal \subseteq \statespace}|\hat P_k(s' \in \Bcal | s, a) - P(s' \in \Bcal | s, a) | \leq \sqrt{\numS} \phi(n_k(s,a)).
    \end{align}
    with probability at least $1 - 2 \delta'$ for all $k$.
    Finally, the fact about total variation 
    \begin{align}
        \|p - q\|_1 = 2 \max_{\Bcal \subseteq \statespace} |p(\Bcal) - q(\Bcal)|
    \end{align}
    as well as a union bound over $\saspace$ gives that with probability at least $1 - 2 \numS \numA \delta'$ for all $k,s,a$ it holds that $\|\hat P_k(s, a) - P(s, a)\|_1 
        \leq 2 \sqrt{\numS}\phi(n_k(s,a))$, i.e., $\prob(F^{L1}) \leq 2 \numS \numA \delta'$.
    
\textbf{Failure event $F^{N}$:}
    Consider a fixed $s \in \statespace, a \in \actionspace, t \in [H]$.
    We define $\mathcal F_k$ to be the sigma-field induced by the first $k-1$ episodes and $s_{k,1}$. Let 
    $X_k$ as the indicator whether $s, a$ was observed in episode $k$ at time $t$. The probability $\prob(s=s_{k,t}, a=a_{k,t}|s_{k,1}, \pi_k)$ of whether $X_k = 1$  is $\mathcal F_k$-measurable and hence we can apply Lemma~F.4 by \citet{dann2017unifying} with $W = \ln \frac{\numS \numA H}{\delta'}$ and obtain that
    $\prob( F^N ) \leq \numS \numA H \delta'$ after summing over all statements for $t \in [H]$ and applying the union bound over $s, a, t$.
\end{proof}

\subsection{Admissibility of Certificates}
We now show that the algorithm always gives a valid certificate in all episodes, outside the failure event $F$.  We call its complement, $F^c$, the ``good event''.  The following three lemmas prove the admissibility.
\begin{lemma}[Lower bounds admissible]
\label{lem:validlowerboundall} Consider event $F^c$ and an episode $k$, time step $h \in [H]$ and $s,a \in \saspace$.
Assume that
$\Vub_{k,h+1} \geq V^\star_{h+1} \geq V^{\pi_k}_{h+1} \geq \Vlb_{k,h+1}$ and that the lower confidence bound width is at least
\begin{align}
    \psilb_{k,h}(s,a) \geq &
     \alpha \hat P_k(s,a)( \Vub_{k,h+1} - \Vlb_{k,h+1}) + \beta  \phi(n_k(s,a))^2 
      +  \gamma \phi(n_k(s,a))
\end{align}
where there are four possible choices for $\alpha, \beta$ and $\gamma$:
\begin{align}
    \alpha =& 0 &
    \beta =& 0 &
    \gamma =& 2 \sqrt{\numS} V^{\max}_{h+1} + 1 \qquad \textrm{or}\label{eqn:lowercoeff1}
    \\
        \alpha =& 0 &
    \beta =&  4.66 \|\rho\|_1 &
    \gamma =& 2 \left[\sqrt{\hat P_k(s,a)}\right]\rho + V^{\max}_{h+1} + 1\qquad \textrm{or}\label{eqn:lowercoeff2}
    \\
                \alpha =& 0 &
    \beta =& (8.13 V^{\max}_{h+1} + 4.66 \|\rho\|_1) &
    \gamma =& 1+ \sqrt{12}\sqrt{\sigma^2_{\hat P_k(s,a)}(\Vub_{k,h+1}) + \hat P_k(s,a)\rho^2 } + 2 \left[\sqrt{\hat P_k(s,a)}\right]\rho
    \label{eqn:lowercoeff3}
    \\
            \alpha =& \frac 1 C &
    \beta =& (8.13 V^{\max}_{h+1} + (32 C + 4.66) \|\rho\|_1) &
    \gamma =& 1+ \sqrt{12}\sigma_{\hat P_k(s,a)}(\Vub_{k,h+1}) 
    \label{eqn:lowercoeff4}
\end{align}
with $\rho = \Vub_{k,h+1} - \Vlb_{k,h+1}$ and for any $C > 0$. Then the lower confidence bound at time $h$ is admissible, i.e., 
    $Q^{\pi_k}_h(s,a) \geq \Qlb_{k,h}(s,a)$.
\end{lemma}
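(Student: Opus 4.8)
The plan is to first strip off the clipping. Since rewards lie in $[0,1]$ and the induction hypothesis gives $0\le V^{\pi_k}_{h+1}\le V^{\max}_{h+1}$, the target $Q^{\pi_k}_h(s,a)=r(s,a)+P(s,a)V^{\pi_k}_{h+1}$ lies in $[0,V^{\max}_h]$. Because $\Qlb_{k,h}(s,a)$ is obtained by clipping the unclipped value $g:=\hat r_k(s,a)+\hat P_k(s,a)\Vlb_{k,h+1}-\psilb_{k,h}(s,a)$ to $[0,V^{\max}_h]$, and clipping a number $g\le Q^{\pi_k}_h$ to an interval containing $Q^{\pi_k}_h$ keeps it $\le Q^{\pi_k}_h$, it suffices to prove $g\le Q^{\pi_k}_h$, i.e.
\[
    \psilb_{k,h}(s,a)\ \ge\ \big(\hat r_k(s,a)-r(s,a)\big)+\big(\hat P_k(s,a)\Vlb_{k,h+1}-P(s,a)V^{\pi_k}_{h+1}\big).
\]
On the good event the reward term is at most $\phi(n_k(s,a))$ by $(F^{R})^c$, contributing the $+1$ that appears in every $\gamma$, so the whole task reduces to bounding the transition term.

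\textbf{Handling the transition term: two routes.}
For the first choice~\eqref{eqn:lowercoeff1} I would avoid the optimal-value pivot: using $\Vlb_{k,h+1}\le V^{\pi_k}_{h+1}$ and $\hat P_k(s,a)\ge0$ gives $\hat P_k\Vlb_{k,h+1}-PV^{\pi_k}_{h+1}\le(\hat P_k-P)V^{\pi_k}_{h+1}$, and since the $\ell_1$ bound of $(F^{L1})^c$ holds uniformly over all (even data-dependent) value functions, this is at most $\|\hat P_k(s,a)-P(s,a)\|_1\,V^{\max}_{h+1}\le 2\sqrt{\numS}\,V^{\max}_{h+1}\,\phi(n_k(s,a))$, reproducing $\gamma=2\sqrt{\numS}V^{\max}_{h+1}+1$. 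For the remaining three choices I would instead pivot on the \emph{non-random} $V^\star_{h+1}$ to kill the $\sqrt{\numS}$ in the leading term, via the identity
\[
    \hat P_k\Vlb_{k,h+1}-PV^{\pi_k}_{h+1}=(\hat P_k-P)V^\star_{h+1}+\hat P_k(\Vlb_{k,h+1}-V^{\pi_k}_{h+1})+(P-\hat P_k)(V^\star_{h+1}-V^{\pi_k}_{h+1}).
\]
The middle term is $\le 0$ since $\Vlb_{k,h+1}\le V^{\pi_k}_{h+1}$, and because $0\le V^\star_{h+1}-V^{\pi_k}_{h+1}\le\rho$ entrywise (from $V^\star_{h+1}\le\Vub_{k,h+1}$ and $V^{\pi_k}_{h+1}\ge\Vlb_{k,h+1}$), the last term is at most $\sum_{s'}|\hat P_k(s'|s,a)-P(s'|s,a)|\rho(s')$, which on $(F^{PE})^c$ is bounded by $2[\sqrt{\hat P_k(s,a)}]\rho\,\phi(n_k(s,a))+4.66\|\rho\|_1\phi(n_k(s,a))^2$. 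Only the pivot error $(\hat P_k-P)V^\star_{h+1}$ remains.

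\textbf{Bounding the pivot error and assembling the four choices.}
Bounding $(\hat P_k-P)V^\star_{h+1}$ by the Hoeffding event $(F^{V})^c$ gives $\range(V^\star_{h+1})\phi\le V^{\max}_{h+1}\phi$ and, combined with the previous paragraph, yields choice~\eqref{eqn:lowercoeff2}. Using the empirical-Bernstein event $(F^{VE})^c$ instead produces a leading term $\sqrt{4\hat P_k(s,a)[(V^\star_{h+1}-PV^\star_{h+1})^2]}\,\phi$; the crux here is to replace the unobservable second moment of $V^\star_{h+1}$ by observable quantities through $\hat P_k[(V^\star_{h+1}-PV^\star_{h+1})^2]=\sigma^2_{\hat P_k(s,a)}(V^\star_{h+1})+((\hat P_k-P)V^\star_{h+1})^2$ together with the variance comparison $\sigma^2_{\hat P_k(s,a)}(V^\star_{h+1})\le 2\sigma^2_{\hat P_k(s,a)}(\Vub_{k,h+1})+2\hat P_k(s,a)\rho^2$ (valid since $0\le\Vub_{k,h+1}-V^\star_{h+1}\le\rho$), which after collecting the numerical factors gives the $\sqrt{12}\sqrt{\sigma^2_{\hat P_k(s,a)}(\Vub_{k,h+1})+\hat P_k(s,a)\rho^2}$ form of choice~\eqref{eqn:lowercoeff3}. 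Finally, choice~\eqref{eqn:lowercoeff4}, the one actually used in Alg.~\ref{alg:ulcr}, follows by trading the $\rho$-weighted $\phi$ terms for the $\tfrac1C\hat P_k(s,a)\rho$ term: an AM--GM split $\sqrt{\hat P_k\rho^2}\,\phi\le\tfrac1C\hat P_k\rho+\tfrac{C}{4}\|\rho\|_\infty\phi^2$ (and, via Cauchy--Schwarz, the analogous split for $[\sqrt{\hat P_k}]\rho\,\phi$), followed by $\|\rho\|_\infty\le\|\rho\|_1$, pushes the remainder into the $\phi^2$ coefficient $\beta=8.13V^{\max}_{h+1}+(32C+4.66)\|\rho\|_1$.

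\textbf{Main obstacle.}
The clipping reduction and the $\ell_1$/Hoeffding estimates are routine; the delicate part is the Bernstein route for choices~\eqref{eqn:lowercoeff3}--\eqref{eqn:lowercoeff4}, where one must control the empirical variance of the \emph{unknown} $V^\star_{h+1}$ by the observable $\sigma^2_{\hat P_k(s,a)}(\Vub_{k,h+1})+\hat P_k(s,a)\rho^2$ and then track the exact constants ($\sqrt{12}$, $8.13$, $4.66$, $32C$) so that the four displayed expressions come out verbatim. I expect this constant bookkeeping, rather than any conceptual difficulty, to be where the real effort lies.
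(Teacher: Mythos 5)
Your proposal is correct and follows essentially the same route as the paper's proof: reduce to the unclipped inequality, dispose of choice~\eqref{eqn:lowercoeff1} via the $\ell_1$ event, and for choices~\eqref{eqn:lowercoeff2}--\eqref{eqn:lowercoeff4} pivot on the non-random $V^\star_{h+1}$ with the identical three-term decomposition, bounding the cross term $(P-\hat P_k)(V^\star_{h+1}-V^{\pi_k}_{h+1})$ through the element-wise $F^{PE}$ bound and the pivot error through $F^{V}$/$F^{VE}$. Your internal variants---the bias--variance identity plus the triangle inequality for standard deviations in place of the paper's three-way Cauchy--Schwarz split in Lemma~\ref{lem:variancebound1}, and AM--GM in place of the paper's threshold-splitting of the sum over successor states when trading $\phi$-terms for $\tfrac{1}{C}\hat P_k(s,a)\rho$---produce constants no larger than those demanded by the four coefficient choices (e.g.\ $2\sqrt{2}\le\sqrt{12}$), so the argument goes through as stated.
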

\begin{proof}
    We want to show that
    $Q^{\pi_k}_h(s,a) - \Qlb_{k,h}(s,a) \geq 0$.
    Since $Q^{\pi_k}_h \geq 0$, this quantity is non-negative when the Q-value bound is clipped, i.e., $\Qlb_{k,h}(s,a) = 0$. The non-clipped case is left, in which
    \begin{align}
    &Q^{\pi_k}_h(s,a) - \Qlb_{k,h}(s,a) 
        =
        P(s,a) V^{\pi_k}_{h+1} + r(s,a) - \hat r_k(s,a) + \psilb_{k,h}(s,a) - \hat P_k(s,a) \Vlb_{k,h+1}. 
        \label{eqn:Qlbdiff}
        \end{align}
    For the first coefficient choice from Equation~\eqref{eqn:lowercoeff1}, we rewrite this quantity as
    \begin{align}
    &Q^{\pi_k}_h(s,a) - \Qlb_{k,h}(s,a) \\
    & =
        \psilb_{k,h}(s,a) + P(s,a)(V^{\pi_k}_{h+1} - \Vlb_{k,h+1}) + (P(s,a) - \hat P_k(s,a))\Vlb_{k,h+1} + r(s,a) - \hat r_k(s,a) \\
        \shortintertext{using the induction hypothesis for the second term and applying H\"older's inequality to the third term}
        & \geq \psilb_{k,h}(s,a) + 0 - \|P(s,a) - \hat P_k(s,a) \|_1 \| \Vlb_{k,h+1}\|_{\infty} - |r(s,a) - \hat r_k(s,a)| \\
        \shortintertext{applying definition of the good event $F^c$ to the last terms and using the first choice of coefficients for $\psilb_{k,h}$}
        &\geq 2\sqrt{\numS}V^{\max}_h \phi(n_{k}(s,a)) - 2\sqrt{\numS} \phi(n_k(s,a)) V^{\max}_{h+1} - \phi(n_k(s,a))\geq 0\,.
\end{align}
        This completes the proof for the first coefficient choice. It remain to show the same for the second and third coefficient choice. To that end, we rewrite the quantity in Equation~\eqref{eqn:Qlbdiff} as
\begin{align}
        & Q^{\pi_k}_h(s,a) - \Qlb_{k,h}(s,a) \\
        & =
        \psilb_{k,h}(s,a) + (P(s,a) - \hat P_k(s,a))(V^{\pi_k}_{h+1} - V^\star_{h+1}) + \hat P_k(s,a)(V^{\pi_k}_{h+1} - \Vlb_{k,h+1})\\
        & \quad + (P(s,a) - \hat P_k(s,a))V^\star_{h+1}  + r(s,a) - \hat r_k(s,a) \\
        \shortintertext{using the induction hypothesis, we can infer that $\hat P_k(s,a)(V^{\pi_k}_{h+1} - \Vlb_{k,h+1}) \geq 0$ and get}
                & \geq
        \psilb_{k,h}(s,a) -|(P(s,a) - \hat P_k(s,a))(V^{\pi_k}_{h+1} - V^\star_{h+1})| 
        - |(P(s,a) - \hat P_k(s,a))V^\star_{h+1}|  - |r(s,a) - \hat r_k(s,a)| \\
        \shortintertext{applying definition of the good event $F^c$ to the last term and reordering gives}
                & \geq
        -|(P(s,a) - \hat P_k(s,a))(V^{\pi_k}_{h+1} - V^\star_{h+1})| 
         - |(P(s,a) - \hat P_k(s,a))V^\star_{h+1}| - \phi(n_k(s,a)) +  \psilb_{k,h}(s,a).
         \label{eqn:lowerboundall}
    \end{align}
    We now first consider $|(P(s,a) - \hat P_k(s,a))(V^{\pi_k}_{h+1} - V^\star_{h+1})|$ and bound it using Lemma~\ref{lem:lowerorderbound} where we bind 
          $f = V^\star_{h+1} - V^{\pi_k}_{h+1}$ and with $\|f\|_1 \leq \|\Vub_{k,h+1} - \Vlb_{k,h+1}\|_1$
    \begin{align}
       & |(P(s,a) - \hat P_k(s,a))(V^{\pi_k}_{h+1} - V^\star_{h+1})|\\
    & \leq 4.66\|\Vub_{k,h+1} - \Vlb_{k,h+1}\|_1 \phi(n_k(s,a))^2
    + 2 \phi(n_k(s,a)) \sqrt{\hat P_k(s,a)} ( V^\star_{h+1} - V^{\pi_k}_{h+1}) 
    \\
    \shortintertext{and since $ 0 \leq V^\star_{h+1} - V^{\pi_k}_{h+1} \leq \Vub_{k,h+1} - \Vlb_{k,h+1} $ this is upper-bounded by}
    & \leq 4.66\|\Vub_{k,h+1} - \Vlb_{k,h+1}\|_1 \phi(n_k(s,a))^2
    + 2 \phi(n_k(s,a)) \sqrt{\hat P_k(s,a)} (\Vub_{k,h+1} - \Vlb_{k, h+1}) \label{eqn:crossterm1}
    \\
    \shortintertext{and again by Lemma~\ref{lem:lowerorderbound} we can get a nicer form for any $C > 0$ as follows}
    & \leq 
        \frac 1 {C} \hat P_k(s,a)( \Vub_{k,h+1} - \Vlb_{k,h+1})
        -(4 C + 4.66)\|\Vub_{k,h+1} - \Vlb_{k,h+1}\|_1 \phi(n_k(s,a))^2. \label{eqn:crossterm2}
        \end{align}

    After deriving runtime-computable bounds for $|(P(s,a) - \hat P_k(s,a))(V^{\pi_k}_{h+1} - V^\star_{h+1})|$, it remains to upper-bound $|(P(s,a) - \hat P_k(s,a))V^\star_{h+1}|$ in Equation~\eqref{eqn:lowerboundall}. Here, we can apply the definition of the failure event $F^{V}$ and bound $|(P(s,a) - \hat P_k(s,a))V^\star_{h+1}| \leq V^{\max}_{h+1} \phi(n_k(s,a))$. Plugging this bound together with the bound from ~\eqref{eqn:crossterm1} back into~\eqref{eqn:lowerboundall} gives
    \begin{align}
        Q^{\pi_k}_h(s,a) - \Qlb_{k,h}(s,a) 
                & \geq
        - \left(V^{\max}_{h+1} + 1 + 2 \sqrt{\hat P_k(s,a)}( \Vub_{k,h+1} - \Vlb_{k,h+1})\right)\phi(n_k(s,a)) \\
        & \quad - 4.66 \|\Vub_{k,h+1} - \Vlb_{k,h+1}\|_1 \phi(n_k(s,a))^2  +  \psilb_{k,h}(s,a)
    \end{align}
    which is non-negative when we use the second coefficient choice from Equation~\eqref{eqn:lowercoeff2} for $\psilb_{k,h}$.
    Alternatively, we can apply the definition of the failure event $F^{VE}$ which uses an empirical variance instead of the range of $V^\star_{h+1}$ and bound 
    \begin{align}
        &|(P(s,a) - \hat P_k(s,a))V^\star_{h+1}| 
        \\
        \leq&  \sqrt{4 \hat P_k(s,a)[( V^\star_{h+1}(\cdot) - P(s,a)V^\star_{h+1})^2]}
    \phi(n_k(s,a)) + 4.66 V^{\max}_{h+1} \phi(n_k(s,a))^2 \\
    \leq &
        \sqrt{12}\sqrt{ \hat P_k(s,a)(\Vub_{k,h+1} - \Vlb_{k,h+1})^2
                + \sigma^2_{\hat P_k(s,a)}(\Vub_{k,h+1})}\phi(n_k(s,a)) 
           + 8.13 V^{\max}_{h+1} \phi(n_k(s,a))^2\label{eqn:lowermain1}
    \\
    \leq & 
    \sqrt{12}\sigma_{\hat P_k(s,a)}(\Vub_{k,h+1}) \phi(n_k(s,a)) + \frac{1}{C} P_k(s,a) (\Vub_{k,h+1} - \Vlb_{k,h+1})\\
    &+ (8.13V^{\max}_{h+1} + 12 C \|\Vub_{k,h+1} - \Vlb_{k,h+1}\|_1) \phi(n_k(s,a))^2\label{eqn:lowermain2}
    \end{align}
    where we applied Lemma~\ref{lem:variancebound1}.
    Plugging the bound from \eqref{eqn:lowermain1} and \eqref{eqn:crossterm1} into \eqref{eqn:lowerboundall} gives
    \begin{align}
        Q^{\pi_k}_h(s,a) - \Qlb_{k,h}(s,a) 
                & \geq
        - \sqrt{12}\sqrt{ \hat P_k(s,a)(\Vub_{k,h+1} - \Vlb_{k,h+1})^2
                + \sigma^2_{\hat P_k(s,a)}(\Vub_{k,h+1})}\phi(n_k(s,a)) \\
                &\quad - \left(1 + 2 \sqrt{\hat P_k(s,a)}( \Vub_{k,h+1} - \Vlb_{k,h+1})\right)\phi(n_k(s,a)) \\
        & \quad - (8.13 V^{\max}_{h+1} + 4.66 \|\Vub_{k,h+1} - \Vlb_{k,h+1}\|_1 )\phi(n_k(s,a))^2  +  \psilb_{k,h}(s,a).
    \end{align}
    Applying the coefficient choice from Equation~\eqref{eqn:lowercoeff3} for $\psilb_{k,h}$ shows that this bound becomes non-negative as well.
    Finally, we plug the bound from  \eqref{eqn:lowermain2} and \eqref{eqn:crossterm2} into \eqref{eqn:lowerboundall} to get
        \begin{align}
        &Q^{\pi_k}_h(s,a) - \Qlb_{k,h}(s,a) \\
                & \geq
        - \frac 2 {C} \hat P_k(s,a)( \Vub_{k,h+1} - \Vlb_{k,h+1})
        -(8.13V^{\max}_{h+1} + (16 C + 4.66) \|\Vub_{k,h+1} - \Vlb_{k,h+1}\|_1) \phi(n_k(s,a))^2\\ 
        &\quad - (1+\sqrt{12}\sigma_{\hat P_k(s,a)}(\Vub_{k,h+1}) ) \phi(n_k(s,a)) +  \psilb_{k,h}(s,a).
    \end{align}
    We rebind $C \gets 2C$ and use the last coefficient choice from Equation~\eqref{eqn:lowercoeff4} for  $\psilb_{k,h}$ to show the above is non-negative. Hence, we have shown that for all choices for coefficients $Q^{\pi_k}_h(s,a) - \Qlb_{k,h}(s,a) \geq 0$.
\end{proof}

\begin{lemma}[Upper bounds admissible]
\label{lem:validupperbound}
Consider event $F^c$ and an episode $k$, time step $h \in [H]$ and $s,a \in \saspace$.
Assume that
$\Vub_{k,h+1} \geq V^\star_{h+1} \geq V^{\pi_k}_{h+1} \geq \Vlb_{k,h+1}$ and that the upper confidence bound width is at least
\begin{align}
    \psiub_{k,h}(s,a) \geq &
     \alpha \hat P_k(s,a)( \Vub_{k,h+1} - \Vlb_{k,h+1}) + \beta  \phi(n_k(s,a))^2 
      +  \gamma \phi(n_k(s,a))
\end{align}
where there are three possible choices for $\alpha, \beta$ and $\gamma$:
\begin{align}
    \alpha =& 0 &
    \beta =& 0 &
    \gamma =& 1 + V^{\max}_{h+1}  \qquad \textrm{or}\label{eqn:uppercoeff1}
        \\
        \alpha =& 0 &
    \beta =& 8.13 V^{\max}_{h+1} &
    \gamma =& 1+ 3.47 \sqrt{\sigma^2_{\hat P_k(s,a)}(\Vub_{k,h+1})+ \hat P_k(s,a)(\rho^2)}
    \label{eqn:uppercoeff2}
    \\
        \alpha =& \frac 1 C &
    \beta =& (8.13 V^{\max}_{h+1} + 12 C \|\rho\|_1) &
    \gamma =& 1+ 3.47 \sigma_{\hat P_k(s,a)}(\Vub_{k,h+1})
    \label{eqn:uppercoeff3}
\end{align}
with $\rho = \Vub_{k,h+1} - \Vlb_{k,h+1}$ and $C > 0$ arbitrary. Then the upper confidence bound at time $h$ is admissible; that is,
    $Q^\star_h(s,a) \leq \Qub_{k,h}(s,a)$.
\end{lemma}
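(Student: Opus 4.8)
The plan is to mirror the lower-bound argument of Lemma~\ref{lem:validlowerboundall}, showing that $Q^\star_h(s,a) - \Qub_{k,h}(s,a) \leq 0$, while exploiting that the upper-bound case is structurally simpler. First I would dispense with the clipping in Line~\ref{lin:ulcr_ub}: whenever the clip to $V^{\max}_h$ is active we have $\Qub_{k,h}(s,a) = V^{\max}_h \geq Q^\star_h(s,a)$, since $Q^\star_h(s,a) \leq H-h+1 = V^{\max}_h$, and the clip to $0$ can never bind because the inner expression $\hat r_k(s,a) + \hat P_k(s,a)\Vub_{k,h+1} + \psiub_{k,h}(s,a)$ is nonnegative. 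It therefore remains to treat the unclipped case.

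In the unclipped case I would substitute the Bellman equation $Q^\star_h(s,a) = r(s,a) + P(s,a)V^\star_{h+1}$ and decompose
\[
Q^\star_h(s,a) - \Qub_{k,h}(s,a) = (r(s,a) - \hat r_k(s,a)) + (P(s,a) - \hat P_k(s,a))V^\star_{h+1} + \hat P_k(s,a)(V^\star_{h+1} - \Vub_{k,h+1}) - \psiub_{k,h}(s,a).
\]
The key simplification over the lower-bound proof is that the induction hypothesis $V^\star_{h+1} \leq \Vub_{k,h+1}$ makes the third term nonpositive, so it can simply be dropped; unlike Lemma~\ref{lem:validlowerboundall}, no cross term $(P - \hat P_k)(V^{\pi_k}_{h+1} - V^\star_{h+1})$ ever appears. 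The reward deviation is controlled on $F^c$ by $F^R$ as $r(s,a) - \hat r_k(s,a) \leq \phi(n_k(s,a))$, so only $(P(s,a) - \hat P_k(s,a))V^\star_{h+1}$ remains to be bounded.

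For this last term I would invoke the three concentration arguments in increasing sharpness and check that the matching $\psiub$ coefficients absorb the result. For choice~\eqref{eqn:uppercoeff1} the Hoeffding-type bound from $F^V$ gives $|(P(s,a) - \hat P_k(s,a))V^\star_{h+1}| \leq \range(V^\star_{h+1})\phi(n_k(s,a)) \leq V^{\max}_{h+1}\phi(n_k(s,a))$, so the total is at most $(1 + V^{\max}_{h+1})\phi(n_k(s,a)) - \psiub_{k,h}(s,a) \leq 0$. For choice~\eqref{eqn:uppercoeff2} the empirical-Bernstein bound $F^{VE}$ followed by Lemma~\ref{lem:variancebound1} replaces the unobservable empirical variance of $V^\star_{h+1}$ by the observable quantity $\sigma^2_{\hat P_k(s,a)}(\Vub_{k,h+1}) + \hat P_k(s,a)\rho^2$ with $\rho = \Vub_{k,h+1} - \Vlb_{k,h+1}$, exactly as in the passage from~\eqref{eqn:lowermain1}; and for choice~\eqref{eqn:uppercoeff3} a further application of Lemma~\ref{lem:variancebound1} trades the $\sqrt{\hat P_k(s,a)\rho^2}$ piece for an $\tfrac{1}{C}\hat P_k(s,a)\rho$ term plus a lower-order $\|\rho\|_1 \phi(n_k(s,a))^2$ term, as in~\eqref{eqn:lowermain2}.

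The main technical obstacle is this variance manipulation, namely bounding the empirical variance of the unknown $V^\star_{h+1}$ by observable quantities through Lemma~\ref{lem:variancebound1} and tracking the resulting constants (using $\sqrt{12} \leq 3.47$ and the factors $8.13$ and $12C$). Since these steps are identical to those already carried out for the lower bound in~\eqref{eqn:lowermain1}--\eqref{eqn:lowermain2}, the computation carries over essentially verbatim, and verifying in each of the three cases that the specified $(\alpha,\beta,\gamma)$ exactly cancel the accumulated deviation terms is then a routine check, completing the proof that $Q^\star_h(s,a) \leq \Qub_{k,h}(s,a)$.
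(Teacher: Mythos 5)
Your proposal is correct and follows essentially the same route as the paper's own proof: the identical decomposition of $\Qub_{k,h}(s,a) - Q^\star_h(s,a)$, dropping the term $\hat P_k(s,a)(\Vub_{k,h+1} - V^\star_{h+1}) \geq 0$ via the induction hypothesis, controlling the reward error through $F^R$, and then bounding $|(\hat P_k(s,a) - P(s,a))V^\star_{h+1}|$ via $F^V$ for choice~\eqref{eqn:uppercoeff1} and via $F^{VE}$ combined with the two inequalities of Lemma~\ref{lem:variancebound1} for choices~\eqref{eqn:uppercoeff2} and~\eqref{eqn:uppercoeff3}. The constant bookkeeping you describe ($\sqrt{12} \leq 3.47$, $4.66 + \sqrt{12} \leq 8.13$, and the $12C\|\rho\|_1$ term) matches the paper's computation exactly.
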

\begin{proof}
    We want to show that $\Qub_{k,h}(s,a) - Q^\star_h(s,a) \geq 0$. 
    Since $Q^\star_h \leq V^{\max}_h$, this quantity is non-negative when the optimistic Q-value is clipped, i.e., $\Qub_{k,h}(s,a) = V^{\max}_h$. It remains to show that this quantity is non-negative in the non-clipped case in which
    \begin{align}
    \Qub_{k,h}(s,a) - Q^\star_h(s,a) 
    = &
    \hat r_k(s,a) + \psiub_{k,h}(s,a) + \hat P_k(s,a) \Vub_{k,h+1} - P(s,a) V^\star_{h+1} - r(s,a) \\
    = &
    \hat r_k(s,a) - r(s,a)  + \hat P_{k}(s,a)(\Vub_{k,h+1} - V^\star_{h+1})
   + (\hat P_k(s,a) - P(s,a))V^\star_{h+1} + \psiub_{k,h}(s,a)\\
   \shortintertext{by induction hypothesis, we know that $\hat P_{k}(s,a)(\Vub_{k,h+1} - V^\star_{h+1}) \geq 0$, which allows us to bound}
   \geq &
    \hat r_k(s,a) - r(s,a) 
   + (\hat P_k(s,a) - P(s,a))V^\star_{h+1} + \psiub_{k,h}(s,a)\\
    \geq &
    -|\hat r_k(s,a) - r(s,a)|  - |(\hat P_{k}(s,a)- P(s,a)) V^\star_{h+1}| + \psiub_{k,h}(s,a)
    \shortintertext{and applying the definition of the failure event $F^R$ to the first term}
        \geq &
    - \phi(n_k(s,a))  - |(\hat P_{k}(s,a)- P(s,a)) V^\star_{h+1}| + \psiub_{k,h}(s,a).\label{eqn:lb22}
    \end{align}
    It remains to bound the $ |(\hat P_{k}(s,a)- P(s,a)) V^\star_{h+1}|$ term for which we have two ways. First, we can apply the definition of $F^V$ which allows us to use $|(\hat P_{k}(s,a)- P(s,a)) V^\star_{h+1}| \leq V^{\max}_h \phi(n_k(s,a))$. This yields 
    \begin{align}
    \Qub_{k,h}(s,a) - Q^\star_h(s,a) \geq &
    \psiub_{k,h}(s,a) - \phi(n_k(s,a)) - V^{\max}_{h+1} \phi(n_k(s,a))
    \end{align}
    which is non-negative using the first choice of coefficients for $\psiub_{k,h}$ from Equation~\eqref{eqn:uppercoeff1}.
    Second, we can apply the definition of $F^{VE}$ which relies on the empirical variance instead of the range of the optimal value of the successor state. This bound gives 
    \begin{align}
        & |(P(s,a) - \hat P_k(s,a))V^\star_{h+1}| \\
        \leq&  \sqrt{4 \hat P_k(s,a)[( V^\star_{h+1}(\cdot) - P(s,a)V^\star_{h+1})^2]}
    \phi(n_k(s,a)) + 4.66 V^{\max}_{h+1} \phi(n_k(s,a))^2 \label{eqn:ppVstar} \\
    \leq & 
    \sqrt{12}\sqrt{ \hat P_k(s,a)(\Vub_{k,h+1} - \Vlb_{k,h+1})^2
                + \sigma^2_{\hat P_k(s,a)}(\Vub_{k,h+1})}\phi(n_k(s,a)) 
           + 8.13 V^{\max}_{h+1} \phi(n_k(s,a))^2
    \end{align}
    where we applied Lemma~\ref{lem:variancebound1}.
    Plugging that result into the bound in Equation~\eqref{eqn:lb22} yields
    \begin{align}
        \Qub_{k,h}(s,a) - Q^\star_h(s,a) \geq &
    \psiub_{k,h}(s,a) - \phi(n_k(s,a))  
    - 8.13V^{\max}_{h+1} \phi(n_k(s,a))^2\\
    &- \sqrt{12}\sqrt{ \hat P_k(s,a)(\Vub_{k,h+1} - \Vlb_{k,h+1})^2
                + \sigma^2_{\hat P_k(s,a)}(\Vub_{k,h+1})} \phi(n_k(s,a)).
    \end{align}
    This lower bound is non-negative when we use the second coefficient choice from Equation~\eqref{eqn:uppercoeff2} for $\psiub_{k,h}$. 
    Finally, we can also apply the second inequality from Lemma~\ref{lem:variancebound1} to Equation~\eqref{eqn:ppVstar}
    to get 
    \begin{align}
        & |(P(s,a) - \hat P_k(s,a))V^\star_{h+1}| \\
        \leq 
        &\sqrt{12}\sigma_{\hat P_k(s,a)}(\Vub_{k,h+1}) \phi(n_k(s,a)) + (8.13V^{\max}_{h+1} + 12 C \|\one\{\hat P_k(\cdot |s,a) > 0\} (\Vub_{k,h+1} - \Vlb_{k,h+1})\|_2) \phi(n_k(s,a))^2\\
    &+ \frac{1}{C} \hat P_k(s,a) (\Vub_{k,h+1} - \Vlb_{k,h+1})
        \end{align}
        and plugging that result into the bound in Equation~\eqref{eqn:lb22} with $\|\one\{\hat P_k(\cdot |s,a) > 0\} (\Vub_{k,h+1} - \Vlb_{k,h+1})\|_2 \leq \|\Vub_{k,h+1} - \Vlb_{k,h+1}\|_1$ shows that the result holds for the third coefficient choice from Equation~\eqref{eqn:uppercoeff3} for $\psiub_{k,h}$. 
    Hence, we have shown that under either coefficient choice, we have $\Qub_{k,h}(s,a) - Q^\star_h(s,a) \geq 0$.
\end{proof}

\begin{lemma}[Optimality guarantees admissible]
\label{lem:validguarantees}
In the good event $F^c$, for all episodes $k$, the certificate is valid, that is, $\Delta_k \leq \epsilon_k$.
In addition, all Q-value bounds are admissible, i.e., for all $k \in \NN$, $h \in [H]$ and $s \in \statespace, a \in \actionspace$, 
\begin{align}
    \Qlb_{k,h}(s,a) \leq Q^{\pi_k}_{h}(s,a) \leq Q^\star_{h}(s,a) \leq  \Qub_{k,h}(s,a).
\end{align}
\end{lemma}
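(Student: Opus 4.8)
The plan is to prove the Q-value admissibility chain by \emph{backward induction} on the time step $h$, running from $h = H+1$ down to $h=1$, and then to read off the certificate guarantee at $h=1$. The statement carried by the induction is the value-function ordering $\Vlb_{k,h}(s) \leq V^{\pi_k}_h(s) \leq V^\star_h(s) \leq \Vub_{k,h}(s)$ for all $s \in \statespace$, and within each inductive step the Q-value chain at level $h$ is produced as an intermediate product, which is exactly the second claim of the lemma. The base case $h = H+1$ is immediate: all four quantities equal zero, by the initialization $\Vub_{k,H+1} = \Vlb_{k,H+1} = 0$ in the algorithm and the convention that no reward accrues past the horizon.

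For the inductive step I would first note that the induction hypothesis at level $h+1$ is precisely the precondition $\Vub_{k,h+1} \geq V^\star_{h+1} \geq V^{\pi_k}_{h+1} \geq \Vlb_{k,h+1}$ required by both Lemma~\ref{lem:validupperbound} and Lemma~\ref{lem:validlowerboundall}. Invoking these two lemmas (in the good event $F^c$) gives $Q^\star_h(s,a) \leq \Qub_{k,h}(s,a)$ and $\Qlb_{k,h}(s,a) \leq Q^{\pi_k}_h(s,a)$ for every $(s,a)$. The remaining middle inequality $Q^{\pi_k}_h(s,a) \leq Q^\star_h(s,a)$ follows directly from the Bellman equations $Q^{\pi_k}_h = r + P V^{\pi_k}_{h+1}$ and $Q^\star_h = r + P V^\star_{h+1}$, the induction hypothesis $V^{\pi_k}_{h+1} \leq V^\star_{h+1}$, and the monotonicity of $P(s,a)$ as a nonnegative linear operator. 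This establishes the full Q-value chain at level $h$.

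I would then propagate the Q-value ordering back to the value functions, which is the content specific to this lemma. Since $\pi_k(s,h) = \argmax_a \Qub_{k,h}(s,a)$, we get $\Vub_{k,h}(s) = \max_a \Qub_{k,h}(s,a) \geq \max_a Q^\star_h(s,a) = V^\star_h(s)$; next $V^\star_h(s) \geq Q^\star_h(s,\pi_k(s,h)) \geq Q^{\pi_k}_h(s,\pi_k(s,h)) = V^{\pi_k}_h(s)$ using the middle inequality; and finally $V^{\pi_k}_h(s) = Q^{\pi_k}_h(s,\pi_k(s,h)) \geq \Qlb_{k,h}(s,\pi_k(s,h)) = \Vlb_{k,h}(s)$, closing the induction. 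Carrying the chain down to $h=1$ and evaluating at $s_{k,1}$ yields $\Vlb_{k,1}(s_{k,1}) \leq V^{\pi_k}_1(s_{k,1}) = \rho_k(\pi_k) \leq V^\star_1(s_{k,1}) \leq \Vub_{k,1}(s_{k,1})$, so $\rho_k(\pi_k) \in \Ical_k$ and $\Delta_k = V^\star_1(s_{k,1}) - V^{\pi_k}_1(s_{k,1}) \leq \Vub_{k,1}(s_{k,1}) - \Vlb_{k,1}(s_{k,1}) = \epsilon_k$, which is the certificate validity.

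The main obstacle is not the induction itself but \emph{discharging the coefficient hypotheses of the two admissibility lemmas}: one must check that the concrete confidence widths $\psi_{k,h}$ of Algorithm~\ref{alg:ulcr} (and the tighter $\psiub_{k,h}, \psilb_{k,h}$ of Algorithm~\ref{alg:ulcr_morecomplicated}) dominate at least one of the listed choices. For the simple algorithm this amounts to taking $C = H$ and verifying that the $45\numS H^2 \phi(n_k(s,a))^2$ term absorbs the $\beta$-coefficients $8.13 V^{\max}_{h+1} + (32C + 4.66)\|\rho\|_1$ of Eq.~\eqref{eqn:lowercoeff4} and $8.13 V^{\max}_{h+1} + 12C\|\rho\|_1$ of Eq.~\eqref{eqn:uppercoeff3} after bounding $\|\rho\|_1 = \|\Vub_{k,h+1} - \Vlb_{k,h+1}\|_1 \leq \numS H$, while the $(1 + \sqrt{12}\sigma_{\hat P_k(s,a)}(\Vub_{k,h+1}))\phi$ and $\tfrac1H \hat P(s,a)\rho$ terms match $\gamma$ and $\alpha = 1/H$ exactly. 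For the tighter algorithm, each entry of the $\min$ defining $\psiub_{k,h}$ (resp.\ $\psilb_{k,h}$) is engineered to coincide with the required width of one coefficient choice, so whichever entry attains the minimum meets its own choice with equality and the corresponding lemma applies; hence taking the minimum preserves admissibility while using the tightest valid width.
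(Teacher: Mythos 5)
Your proposal is correct and takes essentially the same route as the paper's proof: a backward induction starting at $h=H+1$, whose inductive step is discharged by invoking Lemmas~\ref{lem:validlowerboundall} and~\ref{lem:validupperbound} under the induction hypothesis $\Vub_{k,h+1} \geq V^\star_{h+1} \geq V^{\pi_k}_{h+1} \geq \Vlb_{k,h+1}$, and then evaluating the chain at $(s_{k,1}, h=1)$ to get $\Delta_k \leq \Vub_{k,1}(s_{k,1}) - \Vlb_{k,1}(s_{k,1}) = \epsilon_k$. The paper's write-up is terser — it does not spell out the $Q$-to-$V$ propagation through $\pi_k(s,h) = \argmax_a \Qub_{k,h}(s,a)$, the middle inequality $Q^{\pi_k}_h \leq Q^\star_h$, or the check that the algorithm's $\psi_{k,h}$ dominates an admissible coefficient choice (it only says ``due to the specific values of $\psilb$ and $\psiub$ in the algorithm'') — so your explicit $C=H$ verification and $\|\Vub_{k,h+1}-\Vlb_{k,h+1}\|_1 \leq \numS H$ bound are a faithful expansion of the paper's argument rather than a different one.
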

\begin{proof} Consider the good event $F^c$.
    Since we assume that the initial state is deterministic, we have $\Delta_k = V^\star_1(s_{k,1}) - V^{\pi_k}_1(s_{k,1})$. 
     By induction we can show that $\Vub_{k,h}(s) \geq V^\star_h(s) \geq V^{\pi_k}_h(s) \geq \Vlb_{k,h}(s)$ for all $k,h,s,a$. The induction start is $h=H+1$ which holds by definition and due to the specific values of $\psilb$ and $\psiub$ in the algorithm, we can apply Lemmas~\ref{lem:validlowerboundall} and~\ref{lem:validupperbound} in each induction step. It then follows that in particular $V^{\pi_k}_1(s_{k,1}) \geq \Vlb_1(s_{k,1})$ and the claim follows from
 \begin{align}
        \Delta_k = V^\star(s_{k,1}) - V^{\pi_k}(s_{k,1} \leq& \Vub_{k,1}(s_{k,1})-\Vlb_{k,1}(s_{k,1}) = \epsilon_k\,.
    \end{align}
\end{proof}

The following two lemmas give us upper bounds on the empirical variance terms. The first lemma is used to show that the algorithm produces admissible bounds while the second is relevant for bounding the number of certificate mistakes. 
\begin{lemma}
\label{lem:variancebound1}
    Consider the good event $F^c$ and any episode $k \in \NN$ and time step $h \in [H]$. If $\Vlb_{k,h+1} \leq V^{\pi_k}_{h+1}$ and $V^\star_{h+1} \leq \Vub_{k,h+1}$, then for any $s \in \statespace$ and $a \in \actionspace$ and $C > 0$
\begin{eqnarray}
            &\sqrt{4 \hat P_k(s,a)[( V^\star_{h+1}(\cdot) - P(s,a)V^\star_{h+1})^2]}
    \phi(n_k(s,a)) \label{eqn:4sqrtlemvar} \\
    \leq &
    \sqrt{12}\sqrt{ \hat P_k(s,a)(\Vub_{k,h+1} - \Vlb_{k,h+1})^2 + \sigma^2_{\hat P_k(s,a)}(\Vub_{k,h+1})}\phi(n_k(s,a)) 
           + \sqrt{12} V^{\max}_{h+1} \phi(n_k(s,a))^2\\
    \leq
    &\!\!\!\sqrt{12}\sigma_{\hat P_k(s,a)}(\Vub_{k,h+1}) \phi(n_k(s,a)) + (\sqrt{12}V^{\max}_{h+1} + 12 C \|\one\{\hat P_k(\cdot |s,a) > 0\} (\Vub_{k,h+1} - \Vlb_{k,h+1})\|_2) \phi(n_k(s,a))^2\\
    &+ \frac{1}{C} \hat P_k(s,a) (\Vub_{k,h+1} - \Vlb_{k,h+1}) .
\end{eqnarray}
\end{lemma}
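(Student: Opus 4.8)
The plan is to prove the two inequalities separately, both resting on an elementary bias--variance decomposition of the centered second moment appearing on the left. Writing $q = \hat P_k(s,a)$ and the scalar $c = P(s,a)V^\star_{h+1}$, I would first record the identity $\hat P_k(s,a)[(V^\star_{h+1} - c)^2] = \sigma^2_{\hat P_k(s,a)}(V^\star_{h+1}) + ((\hat P_k(s,a) - P(s,a))V^\star_{h+1})^2$, which is just $\mathbb{E}[(X-c)^2] = \mathrm{Var}(X) + (\mathbb{E}X - c)^2$ for $X \sim q$. Taking square roots, multiplying by $2\phi(n_k(s,a))$, and using subadditivity $\sqrt{a+b}\le\sqrt a+\sqrt b$ splits the left-hand side into a standard-deviation term $2\phi\,\sigma_{\hat P_k(s,a)}(V^\star_{h+1})$ and a bias term $2\phi\,|(\hat P_k(s,a)-P(s,a))V^\star_{h+1}|$.

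For the bias term I would invoke the good event, namely the definition of $F^V$, to bound $|(\hat P_k(s,a)-P(s,a))V^\star_{h+1}| \le \range(V^\star_{h+1})\phi(n_k(s,a)) \le V^{\max}_{h+1}\phi(n_k(s,a))$, turning it into $2V^{\max}_{h+1}\phi(n_k(s,a))^2$. For the standard-deviation term I would replace $V^\star_{h+1}$ by $\Vub_{k,h+1}$: since $\sigma_{\hat P_k(s,a)}$ is a seminorm, the triangle inequality gives $\sigma_{\hat P_k(s,a)}(V^\star_{h+1}) \le \sigma_{\hat P_k(s,a)}(\Vub_{k,h+1}) + \sigma_{\hat P_k(s,a)}(\Vub_{k,h+1}-V^\star_{h+1})$, and since variance is at most the second moment together with the pointwise bound $0\le\Vub_{k,h+1}-V^\star_{h+1}\le\Vub_{k,h+1}-\Vlb_{k,h+1}$ (which follows from the hypotheses $\Vlb_{k,h+1}\le V^{\pi_k}_{h+1}\le V^\star_{h+1}\le\Vub_{k,h+1}$), the second summand is at most $\sqrt{\hat P_k(s,a)(\Vub_{k,h+1}-\Vlb_{k,h+1})^2}$. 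Applying $a+b\le\sqrt2\sqrt{a^2+b^2}$ then yields $\sigma_{\hat P_k(s,a)}(V^\star_{h+1}) \le \sqrt2\sqrt{\sigma^2_{\hat P_k(s,a)}(\Vub_{k,h+1}) + \hat P_k(s,a)(\Vub_{k,h+1}-\Vlb_{k,h+1})^2}$, and absorbing the numerical constants through $2\sqrt2\le\sqrt{12}$ and $2\le\sqrt{12}$ gives the first claimed inequality.

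For the second inequality I would split the square root once more, bounding $\sqrt{12}\phi\sqrt{\hat P_k(s,a)\rho^2 + \sigma^2_{\hat P_k(s,a)}(\Vub_{k,h+1})} \le \sqrt{12}\phi\,\sigma_{\hat P_k(s,a)}(\Vub_{k,h+1}) + \sqrt{12}\phi\sqrt{\hat P_k(s,a)\rho^2}$ with $\rho = \Vub_{k,h+1}-\Vlb_{k,h+1}\ge0$, which matches the leading standard-deviation term. It then remains to dominate $\sqrt{12}\phi\sqrt{\hat P_k(s,a)\rho^2}$ by $12C\|\one\{\hat P_k(\cdot|s,a)>0\}\rho\|_2\phi^2 + \frac1C\hat P_k(s,a)\rho$. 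The key observation is that $\hat P_k(s,a)\rho^2 = \sum_{s'}\hat P_k(s'|s,a)\rho(s')^2$ is a $\rho$-reweighting of the first moment $\hat P_k(s,a)\rho = \sum_{s'}\hat P_k(s'|s,a)\rho(s')$, so since $\rho\ge0$ the ratio $\hat P_k(s,a)\rho^2/\hat P_k(s,a)\rho$ is a weighted average of the values $\rho(s')$ and hence at most $\max_{s':\hat P_k>0}\rho(s')\le\|\one\{\hat P_k(\cdot|s,a)>0\}\rho\|_2$, giving $\hat P_k(s,a)\rho^2 \le \|\one\{\hat P_k(\cdot|s,a)>0\}\rho\|_2\,\hat P_k(s,a)\rho$. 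Substituting this and applying AM--GM $\sqrt{uv}\le\tfrac12(u+v)$ with $u=24C\phi^2\|\one\{\hat P_k(\cdot|s,a)>0\}\rho\|_2$ and $v=\tfrac2C\hat P_k(s,a)\rho$ closes the bound, with the slack $48\ge12$ leaving room for the constant.

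I expect the main obstacle to be this last step of the second inequality: both spotting the reweighting bound $\hat P_k(s,a)\rho^2 \le \|\one\{\hat P_k(\cdot|s,a)>0\}\rho\|_2\,\hat P_k(s,a)\rho$, which is exactly what moves the support $\ell_2$ norm into the correct position, and choosing the AM--GM split so the free parameter $C$ materializes precisely as stated. The first inequality is largely bookkeeping once the bias--variance split and the seminorm triangle inequality are in hand; the only subtlety there is ensuring the pointwise ordering from the hypotheses legitimately replaces $\Vub_{k,h+1}-V^\star_{h+1}$ by $\rho$ inside the second moment.
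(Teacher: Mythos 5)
Your proposal is correct, and it takes a genuinely different route from the paper's proof in both halves. For the first inequality, the paper expands $V^\star_{h+1} - P(s,a)V^\star_{h+1}$ into \emph{three} terms (an empirical deviation of $\Vub_{k,h+1}$, a centered difference $V^\star_{h+1}-\Vub_{k,h+1}$, and the bias $(\hat P_k - P)(s,a)V^\star_{h+1}$), applies $(a+b+c)^2 \le 3(a^2+b^2+c^2)$, and then uses the variance identity to drop a negative square; this is where its factor $\sqrt{12}$ comes from. You instead use the exact bias--variance identity $\hat P_k(s,a)[(V^\star_{h+1} - P(s,a)V^\star_{h+1})^2] = \sigma^2_{\hat P_k(s,a)}(V^\star_{h+1}) + ((\hat P_k(s,a)-P(s,a))V^\star_{h+1})^2$ together with the seminorm triangle inequality for $\sigma_{\hat P_k(s,a)}$, which yields the tighter constant $2\sqrt 2 = \sqrt 8 \le \sqrt{12}$, so your bound sits strictly inside the stated one. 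For the second inequality, the paper splits the sum over successor states according to whether $\hat P_k(s'|s,a)$ exceeds the threshold $12C^2\phi(n_k(s,a))^2$ and uses $\sqrt{\sum_i a_i}\le\sum_i\sqrt{a_i}$; that splitting is exactly what manufactures the support-restricted $\ell_2$ norm. Your reweighting bound $\hat P_k(s,a)\rho^2 \le \bigl(\max_{s':\hat P_k(s'|s,a)>0}\rho(s')\bigr)\,\hat P_k(s,a)\rho$ followed by AM--GM is a clean shortcut that in fact proves a slightly stronger statement (with the support $\ell_\infty$ norm, which you then relax to the $\ell_2$ norm appearing in the lemma), and your free parameter $C$ emerges with room to spare ($6C$ and $\tfrac{1}{2C}$ versus the stated $12C$ and $\tfrac 1C$). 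The one advantage of the paper's more mechanical three-term decomposition is that the identical template carries over to the companion lemma bounding $\sigma_{\hat P_k(s,a)}(\Vub_{k,h+1})$ by $\sigma_{P(s,a)}(V^{\pi_k}_{h+1})$, where the bias term must be handled by the $\ell_1$ event $F^{L1}$ rather than $F^V$; your seminorm argument would need mild adaptation there, but for this lemma it is the more economical proof.
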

\begin{proof}
We first focus on the inner term 
\begin{align}
    &\hat P_k(s,a)[( V^\star_{h+1}(\cdot) - P(s,a)V^\star_{h+1})^2]\\
    = &
    \hat P_k(s,a)[ (\Vub_{k,h+1} - \hat P_k(s,a) \Vub_{k,h+1}  + V^\star_{h+1} - \Vub_{k,h+1} + \hat P_k(s,a) (\Vub_{k,h+1} - V^\star_{h+1}) + (\hat P_k(s,a) - P(s,a))V^\star_{h+1})^2]\\
    \shortintertext{applying the Cauchy-Schwarz inequality which gives $(a+b+c)^2 \leq 3 a^2 + 3 b^2 + 3c^2$}
    \leq & 
    3\hat P_k(s,a)[ (\Vub_{k,h+1} - \hat P_k(s,a) \Vub_{k,h+1})^2]\\
     &+ 3\hat P_k(s,a)[(V^\star_{h+1} - \Vub_{k,h+1} + \hat P_k(s,a) (\Vub_{k,h+1} - V^\star_{h+1}))^2]
     + 3((\hat P_k(s,a) - P(s,a))V^\star_{h+1})^2.
     \end{align}
The term $\hat P_k(s,a)[(V^\star_{h+1} - \Vub_{k,h+1} + \hat P_k(s,a) (\Vub_{k,h+1} - V^\star_{h+1}))^2]$ is the variance of a r.v. $V^\star_{h+1}(s') - \Vub_{k,h+1}(s')$ when $\Vub_{k,h+1}$ is fixed and $s'$ is drawn from $\hat P_k(s,a)$. We can apply the standard identity of variances $\operatorname{Var}(X) = \Ex[X^2] - \Ex[X]^2$ and rewrite this term as $\hat P_k(s,a)(V^\star_{h+1} - \Vub_{k,h+1})^2 -  (\hat P_k(s,a) (\Vub_{k,h+1} - V^\star_{h+1}))^2$. Plugging this back in gives the bound
\begin{align}
     & 3\hat P_k(s,a)[ (\Vub_{k,h+1} - \hat P_k(s,a) \Vub_{k,h+1})^2]
     + 3\hat P_k(s,a)(V^\star_{h+1} - \Vub_{k,h+1})^2\\ 
      &- 3 (\hat P_k(s,a) (\Vub_{k,h+1} - V^\star_{h+1}))^2
     + 3((\hat P_k(s,a) - P(s,a))V^\star_{h+1})^2\\
     \shortintertext{leveraging $(F^V)^c$ for the final term and dropping the third term which cannot be positive}
          \leq & 
    3\hat P_k(s,a)[ (\Vub_{k,h+1} - \hat P_k(s,a) \Vub_{k,h+1})^2]
     + 3\hat P_k(s,a)(V^\star_{h+1} - \Vub_{k,h+1})^2 + 3(V^{\max}_{h+1})^2 \phi(n_k(s,a))^2.
\end{align}
We substitute this bound on $\hat P_k(s,a)[( V^\star_{h+1}(\cdot) - P(s,a)V^\star_{h+1})^2]$ back into Equation~(\ref{eqn:4sqrtlemvar}):
\begin{align}
        &\sqrt{4 \hat P_k(s,a)[( V^\star_{h+1}(\cdot) - P(s,a)V^\star_{h+1})^2]}
    \phi(n_k(s,a)) \nonumber \\
    \leq 
    &\sqrt{12 \sigma_{\hat P_k(s,a)}^2 (\Vub_{k,h+1}) + 12 \hat P_k(s,a)(V^\star_{h+1} - \Vub_{k,h+1})^2} \phi(n_k(s,a))
    + \sqrt{12} V^{\max}_{h+1} \phi(n_k(s,a))^2.  \\
    \shortintertext{We now leverage that $\Vlb_{k,h+1} \leq V^{\pi_k}_{h+1} \leq V^\star_{h+1} \leq \Vub_{k,h+1}$ to get a computable bound}
    \leq 
    &\sqrt{12 \sigma_{\hat P_k(s,a)}^2 (\Vub_{k,h+1}) + 12 \hat P_k(s,a)(\Vub_{k,h+1} - \Vlb_{k,h+1})^2} \phi(n_k(s,a))
    + \sqrt{12} V^{\max}_{h+1} \phi(n_k(s,a))^2.
\end{align}
This is the first inequality to show. For the second inequality, we first bound this expression further as
\begin{align}
        &\sqrt{12} \sigma_{\hat P_k(s,a)} (\Vub_{k,h+1}) \phi(n_k(s,a)) 
    + \sqrt{12} V^{\max}_{h+1} \phi(n_k(s,a))^2 
    + \sqrt{12 \hat P_k(s,a)(\Vub_{k,h+1} - \Vlb_{k,h+1})^2} \phi(n_k(s,a)).
    \label{eqn:last_term_vstar}
\end{align}
We now treat the last term in Equation~\eqref{eqn:last_term_vstar} separately as
\begin{align}
    &\sqrt{12 \hat P_k(s,a)(\Vub_{k,h+1} - \Vlb_{k,h+1})^2 \phi(n_k(s,a))^2}
    =
    \sqrt{\sum_{s' \in \statespace}
    \frac{12 \phi(n_k(s,a))^2}{\hat P_k(s'|s,a)}\hat P_k(s'|s,a)^2 (\Vub_{k,h+1} - \Vlb_{k,h+1})^2}\\
   \shortintertext{splitting the sum based on whether $\hat P_k(s'| s,a) \leq 12 C^2 \phi(n_k(s,a))^2$ for $C > 0$ and making repeated use of $\sqrt{\sum_i a_i} \leq \sum_i \sqrt{a_i}$}
    \leq&
    \sqrt{\sum_{s' \in \statespace}
    \frac{1}{C^2} \hat P_k(s'|s,a)^2 (\Vub_{k,h+1} - \Vlb_{k,h+1})^2}
    +
    12\sqrt{\sum_{s' \in \statespace} \one\{\hat P_k(s'|s,a) > 0\}
    C^2 \phi(n_k(s,a))^4 (\Vub_{k,h+1} - \Vlb_{k,h+1})^2}\\
    \leq &\frac{1}{C} \hat P_k(s,a) (\Vub_{k,h+1} - \Vlb_{k,h+1})
    + 12 C \|\one\{\hat P_k(\cdot |s,a) > 0\} (\Vub_{k,h+1} - \Vlb_{k,h+1})\|_2 \phi(n_k(s,a))^2.
\end{align}
Plugging this bound for the final term back in to Equation~\ref{eqn:last_term_vstar} gives the desired statement
\begin{align}
            &\sqrt{4 \hat P_k(s,a)[( V^\star_{h+1}(\cdot) - P(s,a)V^\star_{h+1})^2]}
    \phi(n_k(s,a))\\
    \leq 
    &\sqrt{12} \sigma_{\hat P_k(s,a)}(\Vub_{k,h+1} ) \phi(n_k(s,a))\\
    &+ (\sqrt{12}V^{\max}_{h+1} + 12 C \|\one\{\hat P_k(\cdot |s,a) > 0\} (\Vub_{k,h+1} - \Vlb_{k,h+1})\|_2) \phi(n_k(s,a))^2\\
    &+ \frac{1}{C} P_k(s,a) (\Vub_{k,h+1} - \Vlb_{k,h+1}).
\end{align}
\end{proof}

\begin{lemma}
\label{lem:variancebound2}
    Consider the good event $F^c$ and any episode $k \in \NN$ and time step $h \in [H]$. If $\Vlb_{k,h+1} \leq V^{\pi_k}_{h+1}$ and $V^\star_{h+1} \leq \Vub_{k,h+1}$, then for any $s \in \statespace$, $a \in \actionspace$ and $C, D > 0$
\begin{eqnarray}
    \sqrt{D \sigma^2_{\hat P_k(s,a)} (\Vub_{k,h+1})} \phi(n_k(s,a))
    \leq  
    & \sqrt{4D \sigma^2_{P(s,a)} (V^{\pi_k}_{h+1})} \phi(n_k(s,a))  \label{eqn:lemma12full}\\
    & + (6\sqrt{D} + 4CD) V^{\max}_{h+1} \numS \phi(n_k(s,a))^2\\
    &+ \frac{1}{C} \hat P_k(s,a) (\Vub_{k,h+1} - \Vlb_{k,h+1}).
\end{eqnarray}
\end{lemma}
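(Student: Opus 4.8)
The plan is to pass from the empirical standard deviation of the optimistic value $\Vub_{k,h+1}$ to the true standard deviation of the policy value $V^{\pi_k}_{h+1}$ in three moves: first replace $\Vub_{k,h+1}$ by $V^{\pi_k}_{h+1}$ inside the empirical $\sigma$, then transfer the empirical measure $\hat P_k(s,a)$ to the true measure $P(s,a)$, and finally convert the leftover second-moment term into the linear certificate term $\frac 1 C \hat P_k(s,a)\rho$ with $\rho = \Vub_{k,h+1}-\Vlb_{k,h+1}$. Since $\sigma_{\hat P_k(s,a)}(\cdot)$ is a seminorm with respect to the fixed measure $\hat P_k(s,a)$, the triangle inequality gives $\sigma_{\hat P_k(s,a)}(\Vub_{k,h+1}) \le \sigma_{\hat P_k(s,a)}(V^{\pi_k}_{h+1}) + \sigma_{\hat P_k(s,a)}(\Vub_{k,h+1}-V^{\pi_k}_{h+1})$, and I would bound the last summand by its second moment $\sqrt{\hat P_k(s,a)(\Vub_{k,h+1}-V^{\pi_k}_{h+1})^2}$, using the hypothesis $0 \le \Vub_{k,h+1}-V^{\pi_k}_{h+1}\le\rho$ (which follows from $\Vlb_{k,h+1}\le V^{\pi_k}_{h+1}\le V^\star_{h+1}\le\Vub_{k,h+1}$) to replace it by $\sqrt{\hat P_k(s,a)\rho^2}$.

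The key step is the measure transfer $\sigma_{\hat P_k(s,a)}(V^{\pi_k}_{h+1}) \le \sigma_{P(s,a)}(V^{\pi_k}_{h+1}) + O(\sqrt{\numS}\,V^{\max}_{h+1}\,\phi(n_k(s,a)))$. I would start from $\sigma_{\hat P_k(s,a)}(V^{\pi_k}_{h+1})^2 \le \hat P_k(s,a)(V^{\pi_k}_{h+1}-P(s,a)V^{\pi_k}_{h+1})^2$ (the empirical variance is the minimal centered second moment, so re-centering at the true mean only increases it) and write the right-hand side as $\sigma_{P(s,a)}(V^{\pi_k}_{h+1})^2 + (\hat P_k(s,a)-P(s,a))g$ with $g=(V^{\pi_k}_{h+1}-P(s,a)V^{\pi_k}_{h+1})^2 \ge 0$. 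The crucial idea, and the main obstacle, is that the naive bound from $F^{L1}$ would contribute $\sqrt{\numS}\,\phi\,(V^{\max}_{h+1})^2$ inside the variance, which after the outer square root degrades to an unacceptable $\phi^{1/2}$ factor rather than the required $\phi^2$. Instead I would invoke the entrywise Bernstein bound guaranteed by $F^P$, namely $|\hat P_k(s'|s,a)-P(s'|s,a)|\le \sqrt{4P(s'|s,a)}\,\phi(n_k(s,a)) + 1.56\,\phi(n_k(s,a))^2$, so that a Cauchy--Schwarz step on $\sum_{s'}\sqrt{P(s'|s,a)}\,g(s')$ yields $(\hat P_k(s,a)-P(s,a))g \le 2\sqrt{\numS}\,\phi\,V^{\max}_{h+1}\,\sigma_{P(s,a)}(V^{\pi_k}_{h+1}) + 1.56\,\numS\,\phi^2(V^{\max}_{h+1})^2$. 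This produces exactly the cross term needed to complete the square: the bound becomes $(\sigma_{P(s,a)}(V^{\pi_k}_{h+1}) + \sqrt{\numS}\,\phi\,V^{\max}_{h+1})^2$ plus a residual $O(\numS\,\phi^2(V^{\max}_{h+1})^2)$, and $\sqrt{a^2+b^2}\le a+b$ then gives the claimed bound with no square-root loss.

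Finally I would handle $\sqrt{D}\,\phi\,\sqrt{\hat P_k(s,a)\rho^2}$ exactly as in the second inequality of Lemma~\ref{lem:variancebound1}: splitting $\statespace$ according to whether $\hat P_k(s'|s,a)\ge C^2D\phi(n_k(s,a))^2$ and using $\sqrt{\sum_i a_i}\le\sum_i\sqrt{a_i}$, the large-probability states contribute at most $\frac 1 C \hat P_k(s,a)\rho$ while the small-probability states contribute at most $CD\sqrt{\numS}\,\phi(n_k(s,a))^2\,V^{\max}_{h+1}$. Assembling the three pieces, multiplying the transfer bound through by $\sqrt{D}\,\phi(n_k(s,a))$, and collecting the $\phi^2$ terms (bounding $\sqrt{\numS}\le\numS$) yields the stated inequality. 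The only delicate accounting is checking that the advertised constants leave slack: the standard-deviation coefficient needs only $\sqrt{D}\le\sqrt{4D}=2\sqrt{D}$, and the error constants reduce to verifying $1.75\sqrt{D}\le 6\sqrt{D}$ and $CD\le 4CD$, which hold comfortably.
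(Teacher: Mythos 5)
Your proof is correct, and it takes a genuinely different route through the middle of the argument than the paper does. The paper expands the variance directly: it writes $\Vub_{k,h+1} - \hat P_k(s,a)\Vub_{k,h+1}$ as a sum of three terms, applies $(a+b)^2 \le 2a^2 + 2b^2$ twice, drops a negative variance term via the identity $\operatorname{Var}(X) = \Ex[X^2]-\Ex[X]^2$, controls the mean-shift term $\bigl((P(s,a)-\hat P_k(s,a))V^{\pi_k}_{h+1}\bigr)^2$ with the $\ell_1$ event $F^{L1}$, and controls the measure-change term $(\hat P_k(s,a)-P(s,a))\bigl(V^{\pi_k}_{h+1}-P(s,a)V^{\pi_k}_{h+1}\bigr)^2$ with Lemma~\ref{lem:lowerorderbound} at $C=1$, which costs an extra additive $\sigma^2_{P(s,a)}(V^{\pi_k}_{h+1})$ and is exactly why the factor $4$ appears under the square root. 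You instead work at the standard-deviation level: the seminorm triangle inequality plus variance minimality (re-centering at the true mean) makes the mean-shift term vanish entirely, so $F^{L1}$ is never needed, and your Cauchy--Schwarz/complete-the-square treatment of the same measure-change term, using the entrywise Bernstein bound from $F^P$, keeps the coefficient on $\sigma_{P(s,a)}(V^{\pi_k}_{h+1})\phi$ at $1$ rather than $2$ (you only invoke $\sqrt{4D}$ to match the stated form). The final conversion of $\sqrt{D\,\hat P_k(s,a)\rho^2}\,\phi$ into $\frac 1C \hat P_k(s,a)\rho + CD\sqrt{\numS}V^{\max}_{h+1}\phi^2$ is the same probability-splitting step as in the paper and in Lemma~\ref{lem:variancebound1}. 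What your route buys: a slightly tighter constant on the leading variance term, and the ability to drop $F^{L1}$ from the failure event (in the paper's analysis this lemma is the only place $F^{L1}$ is used), which would marginally tighten the union bound. What the paper's route buys: it is purely mechanical quadratic manipulation, with no appeal to seminorm structure or to the minimality characterization of the variance, and it reuses Lemma~\ref{lem:lowerorderbound} rather than introducing a bespoke Cauchy--Schwarz step. Your constant accounting ($1.75\sqrt D$ from $\sqrt{(\sigma+x)^2 + 0.56x^2} \le \sigma + (1+\sqrt{0.56})x$, and $CD$ from the split) is also correct and leaves the claimed slack.
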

\begin{proof}
Note that the proof proceeds in the same fashion as Lemma~\ref{lem:variancebound1}. We first focus on the inner term 
\begin{align}
    &\hat P_k(s,a)[ (\Vub_{k,h+1} - \hat P_k(s,a) \Vub_{k,h+1})^2]\\
    = &
    \hat P_k(s,a)[ (V^{\pi_k}_{h+1} - P(s,a) V^{\pi_k}_{h+1} + \Vub_{k,h+1} - V^{\pi_k}_{h+1} - \hat P_k(s,a)(\Vub_{k,h+1} - V^{\pi_k}_{h+1})  + (P(s,a) - \hat P_k(s,a))V^{\pi_k}_{h+1})^2]\\
    \shortintertext{applying $(a+b)^2 \leq 2 a^2 + 2 b^2$ twice }
    \leq & 
    2\hat P_k(s,a)[ (V^{\pi_k}_{h+1} - P(s,a) V^{\pi_k}_{h+1})^2]\\
     &+ 4\hat P_k(s,a)[(\Vub_{k,h+1} - V^{\pi_k}_{h+1} - \hat P_k(s,a)(\Vub_{k,h+1} - V^{\pi_k}_{h+1}))^2]
     + 4((P(s,a) - \hat P_k(s,a))V^{\pi_k}_{h+1})^2\\    
     \shortintertext{using the identity of variances applied to the variance of $\Vub_{k,h+1} - V^{\pi_k}_{h+1}$ w.r.t. $\hat{P}_k(s,a)$}
     = & 
    2\hat P_k(s,a)[ (V^{\pi_k}_{h+1} - P(s,a) V^{\pi_k}_{h+1})^2]
     + 4\hat P_k(s,a)(\Vub_{k,h+1} - V^{\pi_k}_{h+1})^2\\ 
      &- 4 (\hat P_k(s,a) (\Vub_{k,h+1} - V^{\pi_k}_{h+1}))^2
     + 4((P(s,a) - \hat P_k(s,a))V^{\pi_k}_{h+1})^2\\
     \shortintertext{dropping the third term which cannot be positive and applying the definition of event $F^{L1}$ to the last term}
          \leq & 
    2\hat P_k(s,a)[ (V^{\pi_k}_{h+1} - P(s,a) V^{\pi_k}_{h+1})^2]
     + 4\hat P_k(s,a)(\Vub_{k,h+1} - V^{\pi_k}_{h+1})^2 + 16(V^{\max}_{h+1} \sqrt{\numS}\phi(n_k(s,a)))^2\\
               = & 
    2 P(s,a)[ (V^{\pi_k}_{h+1} - P(s,a) V^{\pi_k}_{h+1})^2]
    + 2(\hat P_k(s,a)- P(s,a))[ (V^{\pi_k}_{h+1} - P(s,a) V^{\pi_k}_{h+1})^2]\\
    & + 4\hat P_k(s,a)(\Vub_{k,h+1} - V^{\pi_k}_{h+1})^2
    + 16(V^{\max}_{h+1})^2 \numS \phi(n_k(s,a))^2\\
    \shortintertext{applying Lemma~\ref{lem:lowerorderbound} to the second term with $C=1$ and $f = (V^{\pi_k}_{h+1} - P(s,a) V^{\pi_k}_{h+1})^2$}
                   \leq & 
    4 P(s,a)[ (V^{\pi_k}_{h+1} - P(s,a) V^{\pi_k}_{h+1})^2]
    + 12.5 \|(V^{\pi_k}_{h+1} - P(s,a) V^{\pi_k}_{h+1})^2\|_1 \phi(n_k(s,a))^2\\
    & + 4\hat P_k(s,a)(\Vub_{k,h+1} - V^{\pi_k}_{h+1})^2
    + 16(V^{\max}_{h+1})^2 \numS \phi(n_k(s,a))^2\\
                       \leq & 
    4 P(s,a)[ (V^{\pi_k}_{h+1} - P(s,a) V^{\pi_k}_{h+1})^2]
     + 4\hat P_k(s,a)(\Vub_{k,h+1} - V^{\pi_k}_{h+1})^2
    + 28.5 (V^{\max}_{h+1})^2 \numS \phi(n_k(s,a))^2.
\end{align}
We now plug this result into the right hand expression of Equation~\ref{eqn:lemma12full} to get 
\begin{align}
    \sqrt{D \hat P_k(s,a)[ (\Vub_{k,h+1} - \hat P_k(s,a) \Vub_{k,h+1})^2]} \phi(n_k(s,a))
    \leq 
    & \sqrt{4D P(s,a)[ (V^{\pi_k}_{h+1} - P(s,a) V^{\pi_k}_{h+1})^2]} \phi(n_k(s,a))\\
    & + 6\sqrt{D} V^{\max}_{h+1} \sqrt{\numS} \phi(n_k(s,a))^2\\
    &+ \sqrt{4D \hat P_k(s,a)(\Vub_{k,h+1} - V^{\pi_k}_{h+1})^2}\phi(n_k(s,a)).
\end{align}
We now treat the last term separately and start by using the assumption that $\Vlb_{k,h+1} \leq V^{\pi_k}_{h+1} \leq \Vub_{k,h+1}$
\begin{align}
    &\sqrt{4D \hat P_k(s,a)(\Vub_{k,h+1} - V^{\pi_k}_{h+1})^2 \phi(n_k(s,a))^2}
    \leq \sqrt{4D \hat P_k(s,a)(\Vub_{k,h+1} - \Vlb_{k,h+1})^2 \phi(n_k(s,a))^2}\\
    =&
    \sqrt{\sum_{s' \in \statespace}
    \frac{4D \phi(n_k(s,a))^2}{\hat P_k(s'|s,a)}\hat P_k(s'|s,a)^2 (\Vub_{k,h+1} - \Vlb_{k,h+1})^2}\\
   \shortintertext{splitting the sum based on whether $P_k(s'| s,a) \leq 4D C^2 \phi(n_k(s,a))^2$ for $C > 0$ and making repeated use of $\sqrt{\sum_i a_i} \leq \sum_i \sqrt{a_i}$}
    \leq&
    \sqrt{\sum_{s' \in \statespace}
    \frac{1}{C^2}\hat P_k(s'|s,a)^2 (\Vub_{k,h+1} - \Vlb_{k,h+1})^2}
    +
    \sqrt{\sum_{s' \in \statespace}
    (4D)^2 C^2 \phi(n_k(s,a))^4 (\Vub_{k,h+1} - \Vlb_{k,h+1})^2}\\
    \leq &\frac{1}{C} \hat P_k(s,a) (\Vub_{k,h+1} - \Vlb_{k,h+1})
    + 4D C \|\Vub_{k,h+1} - \Vlb_{k,h+1}\|_1 \phi(n_k(s,a))^2.
\end{align}
Plugging this bound for the final term back in gives the desired statement
\begin{align}
    \sqrt{D \hat P_k(s,a)[ (\Vub_{k,h+1} - \hat P_k(s,a) \Vub_{k,h+1})^2]} \phi(n_k(s,a))
    \leq 
    & \sqrt{4D P(s,a)[ (V^{\pi_k}_{h+1} - P(s,a) V^{\pi_k}_{h+1})^2]} \phi(n_k(s,a))\\
    & + (6\sqrt{D} + 4CD) V^{\max}_{h+1} \numS \phi(n_k(s,a))^2\\
    &+ \frac{1}{C} P_k(s,a) (\Vub_{k,h+1} - \Vlb_{k,h+1}).
\end{align}
\end{proof}

\subsection{Bounding the Number of Large Certificates}
We start by deriving an upper bound on each certificate in terms of the confidence bound widths.
\begin{lemma}[Upper bound on certificates]
\label{lem:guaranteebound}
Let $w_{k,h}(s,a) = \prob(s_{k,h}=s, a_{k,h}=a |s_{k,1}, \pi_k)$ be the probability of encountering $s,a$ at time $h$ in the $k$th episode.
In the good event $F^c$, for all episodes $k$, the following bound on the optimality-guarantee holds
	\begin{align}
	    \epsilon_k 
	    \leq & \exp(6) \sum_{s,a \in \saspace}\sum_{h=1}^H w_{k,h}(s,a) 
	(H \wedge (\beta \phi(n_k(s,a))^2 + \gamma_{k,h}(s,a) \phi(n_k(s,a))))
	\end{align}
	with $\beta = 336 \numS H^2$ and $\gamma_{k,h}(s,a) = 14\sigma_{P(s,a)}(V^{\pi_k}_{h+1}) + 2$.
\end{lemma}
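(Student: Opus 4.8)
The plan is to unroll the certificate $\epsilon_k = \Vub_{k,1}(s_{k,1}) - \Vlb_{k,1}(s_{k,1})$ into a per-timestep sum of confidence widths, and then re-express those widths along the \emph{true} occupancy measure of $\pi_k$. Write $\rho_h = \Vub_{k,h} - \Vlb_{k,h}$, which is non-negative in $F^c$ by Lemma~\ref{lem:validguarantees}. Fixing $s$ and $a = \pi_k(s,h)$, whenever neither Q-value bound is clipped we have directly from Lines~\ref{lin:ulcr_ub}--\ref{lin:ulcr_lb} that
\[
\rho_h(s) = \Qub_{k,h}(s,a) - \Qlb_{k,h}(s,a) = \hat P_k(s,a)\rho_{h+1} + \psiub_{k,h}(s,a) + \psilb_{k,h}(s,a),
\]
(which is $\hat P_k(s,a)\rho_{h+1} + 2\psi_{k,h}(s,a)$ for the single-width Algorithm~\ref{alg:ulcr}), while in the clipped case $\rho_h(s) \le V^{\max}_h \le H$. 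This dichotomy is exactly what produces the $H \wedge (\cdots)$ truncation in the claim, so I would carry the bound $\min\{H,\,\cdots\}$ through every step.

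The key step is to rewrite the one-step increment so that (i) the empirical variance $\sigma_{\hat P_k(s,a)}(\Vub_{k,h+1})$ becomes the true-model variance $\sigma_{P(s,a)}(V^{\pi_k}_{h+1})$ appearing in $\gamma_{k,h}$, and (ii) every residual factor multiplying $\rho_{h+1}$ is $P(s,a)$ rather than $\hat P_k(s,a)$. For (i) I would apply Lemma~\ref{lem:variancebound2} to the $2\sqrt{12}\,\sigma_{\hat P_k(s,a)}(\Vub_{k,h+1})\,\phi(n_k(s,a))$ contribution, which produces a leading term bounded by $14\,\sigma_{P(s,a)}(V^{\pi_k}_{h+1})\,\phi(n_k(s,a))$ (since $2\sqrt{48}\le 14$), an $O(\numS H)\,\phi(n_k(s,a))^2$ remainder, and a tunable $\tfrac1C \hat P_k(s,a)\rho_{h+1}$ term. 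Together with the constant $\phi(n_k(s,a))$ coefficient this yields precisely $\gamma_{k,h}(s,a) = 14\,\sigma_{P(s,a)}(V^{\pi_k}_{h+1}) + 2$. For (ii) I would convert the residual $\hat P_k(s,a)\rho_{h+1}$ contributions into $P(s,a)\rho_{h+1}$ via Lemma~\ref{lem:lowerorderbound} and the event $F^{L1}$, in the form $(\hat P_k - P)(s,a)\rho_{h+1} \le \tfrac{1}{C'}\hat P_k(s,a)\rho_{h+1} + O(\|\rho_{h+1}\|_1)\,\phi(n_k(s,a))^2$, bounding $\|\rho_{h+1}\|_1 \le \numS H$.

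With the free constants $C, C'$ chosen proportional to $H$, the coefficient in front of $P(s,a)\rho_{h+1}$ collapses to $1 + \tfrac{c}{H}$ (the explicit $\tfrac1H \hat P_k\rho$ terms in the widths already supply part of this), giving a clean true-dynamics recursion
\[
\rho_h(s) \le \Big(1+\tfrac{c}{H}\Big)\,P\big(s,\pi_k(s,h)\big)\rho_{h+1} + \gamma_{k,h}\big(s,\pi_k(s,h)\big)\,\phi(n_k(s,a)) + \beta\,\phi(n_k(s,a))^2 .
\]
Unrolling from $h=1$ to $H$, level $h$ picks up a factor at most $(1+\tfrac{c}{H})^{H} \le e^{c}$, and the nested applications of $P$ become the occupancy weights $w_{k,h}(s,a)$ by their definition. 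Calibrating the constants to $c=6$ gives the prefactor $\exp(6)$, the accumulated $\phi^2$ coefficient is bounded by $\beta = 336\,\numS H^2$, and reinstating $\rho_h \le H$ at each level restores the $H \wedge (\cdots)$ minimum.

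The main obstacle I expect is the simultaneous bookkeeping in the one-step rewrite: each invocation of Lemma~\ref{lem:variancebound2} and Lemma~\ref{lem:lowerorderbound} spawns a fresh $\tfrac{1}{(\cdot)}\hat P_k(s,a)\rho_{h+1}$ term that must be folded back into the coefficient of $\rho_{h+1}$. The free constants have to be tuned so that (a) the aggregate multiplier stays $1 + 6/H$, making the geometric accumulation exactly $\exp(6)$, and simultaneously (b) the $\phi^2$ remainders — which scale both with $\|\rho_{h+1}\|_1 = O(\numS H)$ from the transition conversion and with $\numS$ from the variance conversion — still sum to at most $336\,\numS H^2$. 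Balancing this small-multiplier-versus-large-additive-constant tension is the delicate part; the remainder is a routine optimism/pessimism recursion.
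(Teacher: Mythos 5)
Your proposal follows essentially the same route as the paper's proof: bound $\Qub_{k,h}-\Qlb_{k,h}$ recursively, apply Lemma~\ref{lem:variancebound2} to trade the empirical variance $\sigma_{\hat P_k(s,a)}(\Vub_{k,h+1})$ for $14\,\sigma_{P(s,a)}(V^{\pi_k}_{h+1})$ plus a $\tfrac1H \hat P_k(s,a)\rho_{h+1}$ term, apply Lemma~\ref{lem:lowerorderbound} to swap $\hat P_k$ for $P$, and unroll with $(1+3/H)^{2H}\le e^{6}$ while carrying the $H\wedge(\cdot)$ truncation, exactly as the paper does. The only blemishes are cosmetic and do not change the argument: the transition conversion rests on $F^{P}$/$F^{PE}$ via Lemma~\ref{lem:lowerorderbound} rather than $F^{L1}$, and the variance-conversion remainder is $O(\numS H^2)\phi(n_k(s,a))^2$ (not $O(\numS H)$) once the free constant is taken proportional to $H$, which is precisely why $\beta=336\numS H^2$.
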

\begin{proof}
In this lemma, we use the definition of $\psilb_{k,h} = \psiub_{k,h} = \frac{1}{H} \hat P_k(s,a)( \Vub_{k,h+1} - \Vlb_{k,h+1}) +  45 \numS H^2 \phi(n_k(s,a))^2 + \left(1+ \sqrt{12} \sigma_{\hat P_k(s,a)}(\Vub_{k,h+1})\right) \phi(n_k(s,a))$ from the algorithm in the main text (Algorithm~\ref{alg:ulcr} and note that is an upper bound on the definition of the bonus terms in Algorithm~\ref{alg:ulcr_morecomplicated}. Hence, the lemma holds for both. We start by bounding the sum of confidence widths as
\begin{align}
\psiub_{k,h}(s,a) + \psilb_{k,h}(s,a)
    \leq & \frac 2 {H} \hat P_k(s,a)( \Vub_{k,h+1} - \Vlb_{k,h+1}) + \hat \beta \phi(n_k(s,a))^2 + \hat \gamma_{k,h}(s,a) \phi(n_k(s,a))
\end{align}
where we define
\begin{align}
    \hat \beta &=  90 \numS H^2\\
    \hat \gamma_{k,h}(s,a) &= 2\left(1+ \sqrt{12} \sigma_{\hat P_k(s,a)}(\Vub_{k,h+1})\right).
\end{align}
Before moving on, we further bound the final term using Lemma~\ref{lem:variancebound2} as
\begin{align}
	    \hat \gamma_{k,h}(s,a) \phi(n_k(s,a)) =&
	    2 \phi(n_k(s,a)) + 
	    \sqrt{48 \sigma^2_{\hat P_k(s,a)}(\Vub_{k,h+1})} \phi(n_k(s,a))\\
    \leq &   
     (14 \sigma_{P(s,a)}(V^{\pi_k}_{h+1}) + 2) \phi(n_k(s,a))\\
    & + (42 + 192H) V^{\max}_{h+1} \numS \phi(n_k(s,a))^2
    + \frac{1}{H} \hat P_k(s,a) (\Vub_{k,h+1} - \Vlb_{k,h+1}).
\end{align}
Hence, we can bound the sum of confidence widths as 
\begin{align}
\psiub_{k,h}(s,a) + \psilb_{k,h}(s,a)
    \leq & \frac 3 {H} \hat P_k(s,a)( \Vub_{k,h+1} - \Vlb_{k,h+1}) + \tilde \beta \phi(n_k(s,a))^2 + \gamma_{k,h}(s,a) \phi(n_k(s,a))
\end{align}
where we define
\begin{align}
    \tilde \beta &=   (42 + 192 + 90) \numS H^2 = 324 \numS H^2\\
    \gamma_{k,h}(s,a) &= 14 \sigma_{P(s,a)}(V^{\pi_k}_{h+1}) + 2.
\end{align}
By definition of the upper and lower bound estimates
	\begin{align}
		&\Qub_{k,h}(s,a) - \Qlb_{k,h}(s,a) \\
		& \leq 
        \psiub_{k,h}(s,a) + \psilb_{k,h}(s,a) + \hat P_k(s,a) \Vub_{k,h+1} - \hat P_k(s,a) \Vlb_{k,h+1}\\
        &\leq \left( 1 + \frac 3 {H}\right) \hat P_k(s,a)( \Vub_{k,h+1} - \Vlb_{k,h+1}) 
                    + \tilde \beta \phi(n_k(s,a))^2 
                    + \gamma_{k,h}(s,a) \phi(n_k(s,a)).
        \label{eqn:bound111}
	\end{align}
	We first treat  $\hat P_k(s,a)( \Vub_{k,h+1} - \Vlb_{k,h+1})$ separately as
	\begin{align}
	    & \hat P_k(s,a)( \Vub_{k,h+1} - \Vlb_{k,h+1})
	    \leq 
	    P(s,a)( \Vub_{k,h+1} - \Vlb_{k,h+1}) + (\hat P_k(s,a) - P(s,a))( \Vub_{k,h+1} - \Vlb_{k,h+1})
	    \\
	    \shortintertext{and bound the final term with Lemma~\ref{lem:lowerorderbound} binding $f = \Vub_{k,h+1} - \Vlb_{k,h+1} \in [0, H]$ and $C = H/3$}
	    & \leq \left(1 + \frac 3 H\right) P(s,a)(\Vub_{k,h+1} - \Vlb_{k,h+1}) 
			+ 3\numS H^2 \phi(n_k(s,a))^2.
	\end{align}
	Plugging this result back in the expression in \eqref{eqn:bound111} and setting $\beta = 336 \numS H^2 \geq \tilde \beta + 3(1 + 3/H) \numS H^2$ yields
	\begin{align}
	    		&\Qub_{k,h}(s,a) - \Qlb_{k,h}(s,a) \\
		& \leq 
         \left( 1 + \frac 3 {H}\right)^2 P(s,a)( \Vub_{k,h+1} - \Vlb_{k,h+1})
         +  \beta \phi(n_k(s,a))^2 
         + \gamma_{k,h}(s,a) \phi(n_k(s,a))\\
        & =          \left( 1 + \frac 3 {H}\right)^2 {P_h^{\pi_k}(s,a)}(\Qub_{k,h+1} - \Qlb_{k,h+1}) 
                 +  \beta \phi(n_k(s,a))^2 
         + \gamma_{k,h}(s,a) \phi(n_k(s,a)).
	\end{align}
	Here, $P_h^{\pi_k}(s,a)f = \Ex[f(s_{k,h+1}, \pi_k(s_{k,h+1}, h+1))| s_{k,h} = s, a_{k,h}=a, \pi_k]$ denotes the composition of $P(s,a)$ and the policy action selection operator at time $h+1$. In addition to the bound above, by construction also $0 \leq \Qub_{k,h}(s,a) - \Qlb_{k,h}(s,a) \leq V_h^{\max}$ holds at all times. Resolving this recursive bound yields
	\begin{align}
		&\epsilon_k = (\Vub_{k,1} - \Vlb_{k,1})(s_{k,1}) = (\Qub_{k,1} - \Qlb_{k,1})(s_{k,1},\pi_k(s_{k,1}, 1))\\
		&\leq  \sum_{s,a \in \saspace}\sum_{h=1}^H \left(1 + \frac 3 H \right)^{2h} w_{k,h}(s,a) 
		( 
		V^{\max}_h \wedge +  (\beta \phi(n_k(s,a))^2 
         + \gamma_{k,h}(s,a) \phi(n_k(s,a)))
	 )
		\\
		&\leq  \exp(6) \sum_{s,a \in \saspace}\sum_{h=1}^H w_{k,h}(s,a) 
	(H \wedge (\beta \phi(n_k(s,a))^2 + \gamma_{k,h}(s,a) \phi(n_k(s,a)))).
	\end{align}
	Here we bounded with $x = 2H$
	\begin{align}
	    \left(1 + \frac 3 H \right)^{2h} \leq 
	    \left(1 + \frac 3 H \right)^{2H} = 
	    \left(1 + \frac 6 x \right)^{x} \leq
	    \lim_{x \rightarrow \infty} \left(1 + \frac 6 x \right)^{x} = \exp(6).
	\end{align}
\end{proof}

We now follow the proof structure of \citet{dann2017unifying} and define \emph{nice} episodes, in which all state-action pairs either have low probability of occurring
or the sum of probability of occurring in the previous episodes is large enough
so that outside the failure event we can guarantee that
\begin{align}
    n_{k}(s,a) \geq \frac 1 4 \sum_{i < k} w_{i}(s,a).
\end{align}
This allows us then to bound the number of nice episodes with large certificates by the number of times terms of the form
\begin{align}
    \sum_{s,a \in L_{k}} w_{k}(s,a) \sqrt{\frac{\llnp (n_{k}(s,a)) + D}{n_{k}(s,a)}} 
\end{align}
can exceed a chosen threshold (see Lemma~\ref{lem:mainratelemma} below).
\begin{definition}[Nice Episodes]
An episode $k$ is \emph{nice} if and only if for all $s \in \statespace$, $a \in \actionspace$ the following condition holds:
    \begin{align}
        w_{k}(s,a) \leq \wmin \quad \vee \quad
        \frac 1 4 \sum_{i < k} w_{i}(s,a) \geq H\ln \frac{\numS \numA H}{\delta'}.
    \end{align}
    We denote the set of indices of all nice episodes as $\Ncal \subseteq \NN$.
\end{definition}

\begin{lemma}[Properties of nice episodes]
    If an episode $k$ is nice, i.e., $k \in \Ncal$, then in the good event $F^c$, for all $s \in \statespace$, $a \in \actionspace$ the following statement holds:
    \begin{align}
        w_{k}(s,a) \leq \wmin \quad \vee \quad
        n_{k}(s,a) \geq \frac 1 4 \sum_{i < k} w_{i}(s,a).
    \end{align}
    \label{lem:nicenessep}
\end{lemma}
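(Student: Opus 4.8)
The plan is to fix an arbitrary pair $(s,a) \in \saspace$ and argue by cases according to whether $w_k(s,a) \leq \wmin$. If $w_k(s,a) \leq \wmin$, the first disjunct of the claimed statement already holds and there is nothing to prove, so I would immediately reduce to the complementary case $w_k(s,a) > \wmin$ and aim to establish the count bound $n_k(s,a) \geq \tfrac14 \sum_{i<k} w_i(s,a)$.

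In that case I would invoke the definition of a nice episode. Since $k \in \Ncal$, for this particular $(s,a)$ either $w_k(s,a) \leq \wmin$ or $\tfrac14 \sum_{i<k} w_i(s,a) \geq H \ln \frac{\numS \numA H}{\delta'}$. The first alternative is ruled out by the case assumption, so the second must hold, which I rewrite as $H \ln \frac{\numS \numA H}{\delta'} \leq \tfrac14 \sum_{i<k} w_i(s,a)$.

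Next I would use that we are in the good event $F^c$, so in particular the failure event $F^N$ does not occur. By the definition of $F^N$, its complement guarantees for every $k,s,a$ the visitation lower bound $n_k(s,a) \geq \tfrac12 \sum_{i<k} w_i(s,a) - H \ln \frac{\numS \numA H}{\delta'}$. Substituting the bound on the logarithmic term just obtained from niceness yields
\begin{align}
n_k(s,a) \geq \frac12 \sum_{i<k} w_i(s,a) - \frac14 \sum_{i<k} w_i(s,a) = \frac14 \sum_{i<k} w_i(s,a),
\end{align}
which is exactly the second disjunct. Since $(s,a)$ was arbitrary, the statement holds for all state-action pairs.

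I do not expect any genuine obstacle here: the argument is a short two-case analysis that simply glues the niceness condition to the high-probability count bound encoded in $(F^N)^c$. The only point requiring care is the bookkeeping of constants — the factor $\tfrac12$ and the additive slack $H \ln \frac{\numS \numA H}{\delta'}$ appearing in $F^N$ are matched precisely by the $\tfrac14$-threshold in the definition of a nice episode, so that the additive term is fully absorbed and exactly the desired $\tfrac14$ factor survives.
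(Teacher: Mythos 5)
Your proposal is correct and follows exactly the paper's argument: combine the niceness condition (which, when $w_k(s,a) > \wmin$, forces $H \ln \frac{\numS \numA H}{\delta'} \leq \tfrac14 \sum_{i<k} w_i(s,a)$) with the count lower bound from $(F^N)^c$, so the additive slack is absorbed and the $\tfrac14$ factor survives. The paper's proof is just a terser rendering of the same two-case substitution.
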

\begin{proof}
    Since we consider the event ${F^N}^c$, it holds for all $s, a$ pairs with $w_{k}(s,a) > \wmin$
    \begin{align}
        n_{k}(s,a) \geq \frac 1 2 \sum_{i < k} w_{i}(s,a) -  H\ln \frac{\numS \numA H}{\delta'}
        \geq \frac 1 4 \sum_{i < k} w_{i}(s,a)
    \end{align}
    for $k \in N$.
\end{proof}

\begin{lemma}[Number of episodes that are not nice]
    On the good event $F^c$, the number of episodes that are not nice is at most
    \begin{align}
\frac{4\numS^2 \numA(\numA \wedge H) H^2}{c_\epsilon \epsilon}\ln \frac{\numS \numA H}{\delta'}.
    \end{align}
    \label{lem:num_nonnice}
\end{lemma}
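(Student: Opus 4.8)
The plan is to argue one state-action pair at a time and then union bound over $\saspace$. First I would unpack the definition: an episode $k$ fails to be nice precisely when there is a pair $(s,a)$ for which \emph{both} disjuncts in the niceness condition are violated, i.e. $w_k(s,a) > \wmin$ \emph{and} $\frac14\sum_{i<k}w_i(s,a) < H\ln\frac{\numS\numA H}{\delta'}$. Writing $B := 4H\ln\frac{\numS\numA H}{\delta'}$ for the induced threshold on the prefix occupancy, I would define, for each pair, the witness set $E_{s,a} := \{k \in \NN : w_k(s,a) > \wmin \text{ and } \sum_{i<k}w_i(s,a) < B\}$. Since every non-nice episode belongs to at least one $E_{s,a}$, the number of non-nice episodes is at most $\sum_{(s,a)\in\saspace}|E_{s,a}|$, and it remains to bound each $|E_{s,a}|$.

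The heart of the argument is an elementary pigeonhole bound on $|E_{s,a}|$ for a fixed pair. The key observation is that $k \mapsto \sum_{i<k}w_i(s,a)$ is non-decreasing, so once this prefix sum reaches $B$ it stays above $B$ and no later episode can lie in $E_{s,a}$; this is also what keeps the count finite despite the a priori infinite horizon $k=1,2,\dots$. Ordering $E_{s,a}=\{k_1<\dots<k_m\}$, the constraint at the largest index gives $\sum_{l=1}^{m-1}w_{k_l}(s,a) \le \sum_{i<k_m}w_i(s,a) < B$, and since each counted weight exceeds $\wmin$ this forces $(m-1)\wmin < B$, hence $|E_{s,a}| \le B/\wmin$. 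Intuitively, each episode in $E_{s,a}$ deposits strictly more than $\wmin$ into a budget capped at $B$, so there can be at most $B/\wmin$ of them.

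Finally I would substitute $\wmin = \frac{\epsilon c_\epsilon}{\numS(\numA\wedge H)H}$ and multiply by $\numsa = \numS\numA$:
\[
\sum_{(s,a)}|E_{s,a}| \le \numS\numA\cdot\frac{B}{\wmin}
= \numS\numA\cdot\frac{4H\ln\frac{\numS\numA H}{\delta'}\,\numS(\numA\wedge H)H}{\epsilon c_\epsilon}
= \frac{4\numS^2\numA(\numA\wedge H)H^2}{c_\epsilon\epsilon}\ln\frac{\numS\numA H}{\delta'},
\]
which is exactly the claimed bound. I would note that this argument is essentially deterministic in the realized occupancy weights $w_k(s,a)$: niceness is defined purely through the $w$'s, so the good event $F^c$ is not actually needed here (it enters only in the companion Lemma~\ref{lem:nicenessep}, which converts niceness into a guarantee on the visitation counts $n_k$). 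The only points requiring a little care are the monotonicity of the prefix sums and the off-by-one in the pigeonhole step; neither is a genuine obstacle once $E_{s,a}$ is set up as above, so I expect no hard part in this lemma.
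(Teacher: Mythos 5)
Your proposal is correct and follows essentially the same argument as the paper: every non-nice episode has a witness pair $(s,a)$ with $w_k(s,a) > \wmin$ and prefix occupancy below $B = 4H\ln\frac{\numS\numA H}{\delta'}$, each such witness occurrence deposits at least $\wmin$ into that pair's monotone prefix sum, so each pair can witness at most $B/\wmin$ failures, and summing over the $\numS\numA$ pairs gives the stated bound (your observation that $F^c$ is never actually used matches the paper's proof, which likewise invokes no probabilistic event). The only caveat is the harmless off-by-one in the pigeonhole step ($(m-1)\wmin < B$ strictly gives $m \leq \lceil B/\wmin\rceil$ rather than $m \leq B/\wmin$), an imprecision the paper's own one-line counting argument shares.
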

\begin{proof}
    If an episode $k$ is not nice, then there is $s,a$ with $w_{k}(s,a) > \wmin$ and \begin{align}
        \sum_{i < k} w_{i}(s,a) <  4H \ln \frac{\numS \numA H}{\delta'}.
    \end{align}
            The sum on the left-hand side of this inequality increases by at least $\wmin$ after the episode while the right hand side stays constant, this situation can occur at most 
            \begin{align}
\frac{4\numS \numA H}{\wmin}\ln \frac{\numS \numA H}{\delta'}
=\frac{4\numS^2 \numA (\numA \wedge H) H^2}{c_\epsilon \epsilon}\ln \frac{\numS \numA H}{\delta'}
\end{align} 
times in total.
\end{proof}

\subsection{Proof of IPOC bound of \ulcr, Theorem~\ref{thm:ulcripoc}}

We are now equipped with all tools to complete the proof of Theorem~\ref{thm:ulcripoc}:
\begin{proof}[Proof of Theorem~\ref{thm:ulcripoc}]
Consider event $F^c$ which has probability at least $1-\delta$ due to Lemma~\ref{lem:failureprob}.
In this event, all optimality guarantees are admissible by Lemma~\ref{lem:validguarantees}.
Further, using Lemma~\ref{lem:guaranteebound}, the optimality guarantees are bounded as
	\begin{align}
	    \epsilon_k 
	    \leq  \exp(6) \sum_{s,a \in \saspace}\sum_{h=1}^H w_{k,h}(s,a) 
	(H \wedge (\beta \phi(n_k(s,a))^2 + \gamma_{k,h}(s,a) \phi(n_k(s,a))))
	\end{align}
	where $\beta = 336 \numS H^2$ and $\gamma_{k,h}(s,a) = 14\sigma_{P(s,a)} (V^{\pi_k}_{h+1}) + 2$.
	It remains to show that for any given threshold $\epsilon > 0$ this bound does not exceed $\epsilon$ except for at most the number of times prescribed by Equation~\ref{eqn:ipocbound_ulcr}. Recall the definition of $L_k$ as the set of state-action pairs with significant probability of occurring, $L_{k} = \{ (s,a) \in \saspace \, : \, w_{k}(s,a) \geq \wmin \}$,
	and split the sum as
		\begin{align}
	    \epsilon_k 
	    \leq &\exp(6) \sum_{s,a \notin L_k} w_k(s,a) H \\
	    & + \exp(6) \beta  \sum_{s,a \in L_k}  w_k(s,a) \phi(n_k(s,a))^2
	    + \exp(6) \sum_{s,a \in L_k} \sum_{h=1}^H w_{k,h}(s,a) \gamma_{k,h}(s,a) \phi(n_k(s,a))
	\end{align}
	and bound each of the three remaining terms individually.
	First, the definition of $L_k$ was chosen such that
	\begin{align}
	    e^6 \sum_{s,a \notin L_k} w_k(s,a) H \leq e^6 H \wmin \numS (\numA \wedge H) = \frac{e^6 H \numS (\numA \wedge H) \epsilon c_\epsilon}{H \numS (\numA \wedge H)} 
	    = c_\epsilon e^6 \epsilon,
	\end{align}
	where we used the fact that the number of positive $w_k(s,a)$  is no greater than $\numS \numA$ or $\numS H$ per episode $k$.

	Second,  
	we use Corollary~\ref{cor:mainratelemma} with $r=1, C=0.728, D = 0.72 \ln \frac{5.2}{\delta'}$ (which satisfies $D > 1$ for any $\delta' \leq 1$) and $\epsilon' = \frac{c_\epsilon \epsilon}{\beta}$ to bound
	\begin{align}
	    \exp(6) \beta\sum_{s,a \in L_k}w_k(s,a) \phi(n_k(s,a))^2 \leq c_\epsilon \epsilon e^6
	\end{align}
	on all but at most 
	\begin{align}
	    O\left(\frac{\numS \numA \beta}{\epsilon}\polylog(\numS, \numA, H, 1/\epsilon, \ln(1/\delta))\right)
	    = O\left(\frac{\numS^2 \numA H^2}{\epsilon}\polylog(\numS, \numA, H, 1/\epsilon, \ln(1/\delta))\right)
	\end{align}
	nice episodes. Third,
	we use Lemma~\ref{lem:mainratelemma} with $r=2, C=0.728, D = 0.72 \ln \frac{5.2}{\delta'}$ and $\epsilon' = c_\epsilon \epsilon$ to bound
	\begin{align}
	\exp(6) \sum_{s,a \in L_k} \sum_{h=1}^H w_{k,h}(s,a) \gamma_{k,h}(s,a) \phi(n_k(s,a))
	    \leq c_\epsilon e^6 \epsilon
	\end{align}
	on all but at most 
	\begin{align}
	    O\left(\frac{\numS \numA B}{\epsilon^2}\polylog(\numS, \numA, H, 1/\epsilon, \ln(1/\delta))\right)
	    = 	    O\left(\frac{\numS \numA H^2}{\epsilon^2}\polylog(\numS, \numA, H, 1/\epsilon, \ln(1/\delta))\right)
	\end{align}
	nice episodes. 
	Here we choose $B = 400H^2 + 4H$ which is valid since 
	\begin{align}
	    & \sum_{s,a \in L_{k}} \sum_{h=1}^H w_{k,h}(s, a) \gamma_{k,h}(s,a)^2
	    \leq 
	    \sum_{s,a \in \saspace} \sum_{h=1}^H w_{k,h}(s, a) \gamma_{k,h}(s,a)^2\\
	    \leq &
	    2 \sum_{s,a \in \saspace} \sum_{h=1}^H w_{k,h}(s, a) 2^2
	    + 2 \sum_{s,a \in \saspace} \sum_{h=1}^H w_{k,h}(s, a) 14^2 \sigma_{P(s,a)}^2(V^{\pi_k}_{h+1})\\
	    \leq &
	    8 H + 400 \sum_{s,a \in \saspace} \sum_{h=1}^H w_{k,h}(s, a)  \sigma_{P(s,a)}^2(V^{\pi_k}_{h+1}) \leq 8 H + 400 H^2.
	\end{align}
	The first inequality comes from the definition of $\gamma_{k,h}$ and applying the fact $(a+b)^2 \leq 2a^2 + 2b^2$. The second inequality follows from the fact that $w_{k,h}$ is a probability distribution over state and actions and hence their total sum over all time steps is $H$. Finally, we applied Lemma~4 by \citet{dann2015sample} which tells us that the sum of variances is simply the variance of the sum of rewards per episode and hence bounded by $H^2$.
	
	Combining the bounds for the three terms above, we obtain that $\epsilon_k \leq 3 c_\epsilon \epsilon \leq \epsilon$ on all nice episodes except at most
	\begin{align}
	    O\left(\left(\frac{\numS \numA H^2}{\epsilon^2} + \frac{\numS^2 \numA H^2}{\epsilon}\right) \polylog(\numS, \numA, H, 1/\epsilon, \ln(1/\delta))\right)
	\end{align}
	nice episodes.
	Further, Lemma~\ref{lem:num_nonnice} states that the number of episodes that are not nice is bounded by
	\begin{align}
	    O\left(\frac{\numS^2 \numA (\numA \wedge H) H^2}{\epsilon}\polylog(\numS, \numA, H, 1/\epsilon, \ln(1/\delta))\right).
	\end{align}
	Taking all these bounds together, we can bound $\epsilon_k \leq 4 c_\epsilon \epsilon \leq \epsilon$ for all episodes $k$ except at most
	\begin{align}
		    O\left(\left(\frac{\numS \numA H^2}{\epsilon^2} + \frac{\numS^2 \numA(\numA \wedge H)H^2}{\epsilon}\right) \polylog(\numS, \numA, H, 1/\epsilon, \ln(1/\delta))\right)
	\end{align}
	which completes the proof.
\end{proof}

\subsection{Technical Lemmas}
\begin{lemma}
\label{lem:llxoverxupperbound}
Let $\tau \in (0, \hat \tau]$ and $D \geq 1$. Then  
    for all $x \geq \check x = \frac{\ln(C / \tau) + D}{\tau}$ with $C = 16 \vee \hat \tau D^2$, the following inequality holds
    \begin{align}
        \frac{\llnp(x) + D}{x} \leq \tau.
    \end{align}
\end{lemma}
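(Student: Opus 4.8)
The plan is to reduce the claim, which is an upper bound on $g(x) \defeq \frac{\llnp(x) + D}{x}$ valid for all $x \geq \check x$, to the single boundary inequality $g(\check x) \leq \tau$ by showing that $g$ is non-increasing. First I would establish monotonicity of $g$ on $(0,\infty)$. For $x \leq e$ we have $\max\{x,e\} = e$ and hence $\llnp(x) = \ln\ln e = 0$, so $g(x) = D/x$ is decreasing there. For $x > e$ a direct computation gives $g'(x) = x^{-2}\left(\frac{1}{\ln x} - \llnp(x) - D\right)$, and since $\frac{1}{\ln x} < 1$, $\llnp(x) \geq 0$ and $D \geq 1$ on this range, the bracket is negative, so $g' < 0$. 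As $g$ is continuous at $x=e$, it is non-increasing on all of $(0,\infty)$, and therefore $\sup_{x \geq \check x} g(x) = g(\check x)$. It thus suffices to prove $g(\check x) \leq \tau$.

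Next I would unfold this boundary inequality. Since $\tau \check x = \ln(C/\tau) + D$ by definition of $\check x$, the inequality $g(\check x) \leq \tau$ is equivalent to $\llnp(\check x) + D \leq \ln(C/\tau) + D$, i.e.\ to
\begin{align}
\llnp(\check x) \leq \ln(C/\tau).
\end{align}
If $\check x \leq e$ the left side is $0$ and the right side is non-negative because $C/\tau \geq 1$ (indeed $C \geq \hat\tau D^2 \geq \hat\tau \geq \tau$), so the inequality holds trivially. The remaining and substantive case is $\check x > e$, where $\llnp(\check x) = \ln\ln\check x$ and it suffices to show $\ln \check x \leq C/\tau$, after which monotonicity of $\ln$ finishes the argument.

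The main obstacle is this last estimate $\ln\check x \leq C/\tau$, which I would handle by a short chain of elementary bounds. Writing $v \defeq C/\tau$, note first that $v \geq D^2 \geq 1$, using $C \geq \hat\tau D^2$ and $\tau \leq \hat\tau$; in particular $D \leq \sqrt v$ and, since $\ln v \leq \sqrt v$ for $v \geq 1$, we get $\ln v + D \leq 2\sqrt v$. From $\check x = v(\ln v + D)/C$ (equivalently $1/\tau = v/C$) we have
\begin{align}
\ln \check x = \ln v + \ln(\ln v + D) - \ln C \leq \ln v + \ln(2\sqrt v) - \ln C = \tfrac{3}{2}\ln v + \ln 2 - \ln C.
\end{align}
Because $C \geq 16 \geq 2$ the term $\ln 2 - \ln C$ is non-positive, so $\ln\check x \leq \tfrac{3}{2}\ln v$, and finally $\tfrac{3}{2}\ln v \leq v$ since $\ln v \leq \tfrac{2}{3}v$ for all $v > 0$. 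This yields $\ln\check x \leq v = C/\tau$, completing the boundary case and hence the lemma. The only points requiring care are the correct sign bookkeeping in $\ln\check x = \ln v + \ln(\ln v + D) - \ln C$ and verifying the two scalar inequalities $\ln v \leq \sqrt v$ (for $v\ge 1$) and $\ln v \leq \tfrac23 v$, both of which follow from a one-line calculus check of the relevant difference.
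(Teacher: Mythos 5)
Your proof is correct and follows essentially the same route as the paper's: reduce to the boundary point $\check x$ via monotonicity of $x \mapsto \frac{\llnp(x)+D}{x}$ (which the paper cites from its Lemma on properties of $\llnp$ rather than re-deriving by differentiation, as you do), and then verify $\ln \check x \leq C/\tau$ by elementary estimates that exploit both parts of the definition $C = 16 \vee \hat\tau D^2$. The only differences are cosmetic: you expand $\ln\check x = \ln v + \ln(\ln v + D) - \ln C$ with $v = C/\tau$ and use $\ln v \leq \sqrt v$, $D \leq \sqrt v$, and $\ln v \leq \tfrac{2}{3}v$, whereas the paper nests the bound $\ln y \leq 2\sqrt{y}$ twice to reach $\ln\check x \leq (C/\tau)^{3/4} \leq C/\tau$ — both chains are sound.
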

\begin{proof}
Since by Lemma~\ref{lem:llnpprop} the function $\frac{\llnp(x) + D}{x}$ is monotonically decreasing in $x$, we can bound 
    \begin{align}
        \frac{\llnp(x) + D}{x} \leq \frac{\llnp(\check x) + D}{\check x} = \frac{\llnp(\check x) + D}{\ln(C / \tau) + D} \tau.
        \end{align}
        It remains to show that $\ln(\check x) \vee 1 \leq \frac{C}{\tau}$. First, note that $\frac C \tau \geq \frac C {\hat \tau} \geq D^2 \geq 1$.
        Also, we can bound using $\ln(x) \leq 2 \sqrt{x}$
        \begin{align}
            \ln(\check x) =& \ln\left(\frac{\ln(C / \tau) + D}{\tau}\right)
            \leq 2 \sqrt{\frac{\ln(C / \tau) + D}{\tau}}
            \leq 2 \sqrt{\frac{2\sqrt{C / \tau} + D}{\tau}}\\
			 \leq & 4 \left(\frac{C}{\tau^3}\right)^{1/4}
			 \leq \left(\frac{C}{\tau}\right)^{3/4}
			 \leq \frac{C}{\tau},
        \end{align}
        since $\sqrt{C} \geq 4$ and $C/\tau \ge 1$.
\end{proof}

\begin{lemma}
\label{lem:lowerorderbound}
Let $f: \statespace \mapsto [0, \infty]$ be a (potentially random) function.
In the good event $F^c$, for all episodes $k$, states $s \in \statespace$ and actions $a \in \actionspace$, the following bound holds for any $C > 0$
\begin{align}
    |(\hat P_k - P)(s,a) f| \leq & 1.56 \|f\|_1  \phi(n_k(s,a))^2 +
    2\phi(n_k(s,a)) \sum_{s \in \statespace} \sqrt{P(s' | s, a)} f(s') \\
    \leq & (4C+1.56) \|f\|_1  \phi(n_k(s,a))^2 + \frac{1}{C} P(s,a) f
\end{align}
and 
\begin{align}
    |(\hat P_k - P)(s,a) f|
    \leq & 4.66 \|f\|_1  \phi(n_k(s,a))^2 +
    2\phi(n_k(s,a)) \sum_{s \in \statespace} \sqrt{\hat P_k(s' | s, a)} f(s')
    \\
    \leq & (4C+4.66) \|f\|_1  \phi(n_k(s,a))^2 + \frac{1}{C} \hat P_k(s,a) f.
\end{align}
\end{lemma}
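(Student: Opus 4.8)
The plan is to prove both two-line bounds by the same elementary recipe: pass the absolute value inside the successor-state sum using $f \geq 0$, apply the per-coordinate transition concentration guaranteed on the good event, and finally trade off the resulting $\sqrt{P}\,f$ (respectively $\sqrt{\hat P_k}\,f$) term against a pure $\phi^2\|f\|_1$ term and a $P(s,a)f$ (respectively $\hat P_k(s,a)f$) term via Young's inequality. A useful observation at the outset is that the concentration estimates on the transition probabilities do not involve $f$ at all, so they hold pointwise for every realization; hence $f$ being random causes no difficulty and can be treated as fixed throughout.

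For the first line of the first bound I would start from
\begin{align}
|(\hat P_k - P)(s,a) f| \leq \sum_{s' \in \statespace} |\hat P_k(s'|s,a) - P(s'|s,a)|\, f(s'),
\end{align}
which uses the triangle inequality and $f(s') \geq 0$. On the good event $F^c \subseteq (F^{P})^c$, the complement of the failure event $F^{P}$ guarantees, for every $s'$, the bound $|\hat P_k(s'|s,a) - P(s'|s,a)| \leq 2\sqrt{P(s'|s,a)}\,\phi(n_k(s,a)) + 1.56\,\phi(n_k(s,a))^2$. Substituting this into each summand and splitting off the two contributions, while recognizing $\sum_{s'} f(s') = \|f\|_1$, produces exactly the first line. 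The first line of the second bound is obtained identically, except that I invoke $F^c \subseteq (F^{PE})^c$, which replaces $P(s'|s,a)$ by $\hat P_k(s'|s,a)$ under the root and the constant $1.56$ by $4.66$.

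For the second line of each bound I would bound the cross term coordinate-wise with Young's inequality $2xy \leq \lambda x^2 + \lambda^{-1} y^2$, applied with $x = \phi(n_k(s,a))\sqrt{f(s')}$ and $y = \sqrt{P(s'|s,a) f(s')}$ so that $2xy = 2\phi(n_k(s,a))\sqrt{P(s'|s,a)}\,f(s')$. Choosing the Young parameter $\lambda = 4C$ yields the per-coordinate term $4C\,\phi(n_k(s,a))^2 f(s') + \tfrac{1}{4C} P(s'|s,a) f(s')$, and since $\tfrac{1}{4C} \leq \tfrac{1}{C}$ and $P(s'|s,a)f(s') \geq 0$, this is at most $4C\,\phi(n_k(s,a))^2 f(s') + \tfrac{1}{C} P(s'|s,a) f(s')$. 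Summing over $s' \in \statespace$ and adding the previously established $1.56\|f\|_1\phi(n_k(s,a))^2$ term gives the coefficient $(4C+1.56)$ on $\|f\|_1\phi(n_k(s,a))^2$ together with $\tfrac{1}{C} P(s,a)f$; the same computation with $\hat P_k$ in place of $P$ and $4.66$ in place of $1.56$ gives the second line of the second bound.

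I do not expect a genuine obstacle here: the result is a routine consequence of the two pointwise concentration events already established in Lemma~\ref{lem:failureprob}. The only place that warrants a little care is the constant bookkeeping in the Young step, where the stated coefficient $4C$ is deliberately loose (the sharp split with $\lambda = C$ would already suffice), chosen so that the companion $P(s,a)f$ term carries the clean factor $\tfrac{1}{C}$.
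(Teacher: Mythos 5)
Your proposal is correct and follows essentially the same route as the paper's proof: the triangle inequality with $f \geq 0$, then the per-successor-state concentration bounds from $(F^{P})^c$ and $(F^{PE})^c$ (with constants $2\sqrt{\cdot}\,\phi$ and $1.56$ resp. $4.66$), then a trade-off of the cross term $2\phi(n_k(s,a))\sum_{s'}\sqrt{\cdot}\,f(s')$ against a $\phi^2$ term and a $\frac{1}{C}$-weighted expectation term. The only difference is cosmetic: where you apply Young's inequality per coordinate with parameter $4C$, the paper splits the sum over $s'$ according to whether $\sqrt{P(s'|s,a)}$ (resp. $\sqrt{\hat P_k(s'|s,a)}$) exceeds the threshold $2C\phi(n_k(s,a))$ — an equivalent elementary trick that yields the identical constants $(4C+1.56)$ and $(4C+4.66)$.
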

\begin{proof}
	\begin{align}
		&|(\hat P_k - P)(s,a) f| \leq \sum_{s' \in \statespace} |(\hat P_k-P)(s' | s,a)| f(s')\\
		\shortintertext{We now apply the definition of $F^c$ on each $|(\tilde P_k-P)(s' | s,a)|$ individually.  Specifically, we use $F^{P}$ and $F^{PE}$ for the first and second bound respectively. To unify their treatment, we use $\tilde P$ for $P$ and $\hat P_k$, $c_1=4$ and $c_2$ for $1.56$ and $4.66$ respectively.}
		\leq & \sum_{s' \in \statespace} f(s') c_2  \phi(n_k(s,a)^2 + \sqrt{c_1 \tilde P(s'|s,a)} \phi(n_k(s,a))f(s'))
		\\
		= & c_2 \|f\|_1 \phi(n_k(s,a))^2 + \phi(n_k(s,a)) \sum_{s' \in \statespace} \sqrt{c_1 \tilde P(s'|s,a)} f(s')
		\\
		\shortintertext{This is the first inequality to show but we can further rewrite this to show the second inequality as follows}
		= & c_2 \|f\|_1 \phi(n_k(s,a))^2 + \phi(n_k(s,a)) \sum_{s' \in \statespace} \sqrt{\frac{c_1}{\tilde P(s'|s,a)}} \tilde P(s' |s, a) f(s')\,.
		\\
		\shortintertext{Splitting the last sum based on whether $\sqrt{\tilde P(s'|s,a)}$ is smaller or larger than $\sqrt{c_1} C \phi(n_k(s,a))$}
		\leq & c_2 \|f\|_1 \phi(n_k(s,a))^2 + \frac{1}{C} \tilde P(s,a) f\\
		&+ \phi(n_k(s,a)) \sum_{s' \in \statespace} \sqrt{\frac{c_1}{\tilde P(s'|s,a)}} \tilde P(s' |s, a) f(s')\one\{ \sqrt{\tilde P(s' | s, a)} < \sqrt{c_1} C \phi(n_k(s,a))\}
		\\
				\leq & c_2 \|f\|_1 \phi(n_k(s,a))^2 + \frac{1}{C} \tilde P(s,a) f + \phi(n_k(s,a))^2 \sum_{s' \in \statespace} c_1 C  f(s')
		\\
		\leq & c_2 \|f\|_1 \phi(n_k(s,a))^2 + \frac{1}{C} \tilde P(s,a) f
		+ \phi(n_k(s,a))^2 c_1 C \|f\|_1\\
		\leq &  (c_1 C+c_2) \|f\|_1 \phi(n_k(s,a))^2 + \frac{1}{C} \tilde P(s,a) f \,.
	\end{align}\end{proof}

\begin{lemma}[Rate Lemma, Adaption of Lemma~E.3 by \citet{dann2017unifying}]
Fix $r \geq 1$, $\epsilon' > 0$, $C>0$ and $D \geq 1$, where $C$ and $D$ may depend polynomially on relevant quantities.  Let $w_{k,h}(s,a) = \prob(s_{k,h}=s, a_{k,h}=a |s_{k,1}, \pi_k)$ be the probability of encountering $s,a$ at time $h$ in the $k$th episode. Then for any functions $\gamma_{k,h}~:~ \saspace \rightarrow \reals_+$ indexed by $h \in [H]$
\begin{align}
    \sum_{s,a \in L_{k}} \sum_{h=1}^H
    w_{k,h}(s, a) \gamma_{k,h}(s,a) \left(\frac{C (\llnp(2n_{k}(s,a)) + D)}{n_{k}(s,a)} \right)^{1/r} \leq \epsilon'
\end{align}
on all but at most 
\begin{align}
    \frac{6C \numA \numS B^{r-1}}{\epsilon'^r} \polylog(\numS, \numA, H, \delta^{-1}, \epsilon'^{-1}),
    \quad \textrm{where} \quad B \geq  \sum_{s,a \in L_{k}} \sum_{h=1}^H w_{k,h}(s, a) \gamma_{k,h}(s,a)^{r/(r-1)}
    \end{align}
    nice episodes.
    \label{lem:mainratelemma}
\end{lemma}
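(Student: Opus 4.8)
The plan is to peel the $\gamma$-weights off the $n_k$-dependent rate factor with H\"older's inequality, and then reduce the claim to the canonical cumulative-visitation counting argument, adapting Lemma~E.3 of \citet{dann2017unifying}. Writing $g_k(s,a)=\frac{C(\llnp(2n_k(s,a))+D)}{n_k(s,a)}$, the left-hand side is $\sum_{s,a\in L_k}\sum_{h=1}^H w_{k,h}(s,a)\,\gamma_{k,h}(s,a)\,g_k(s,a)^{1/r}$. For $r>1$ I would apply H\"older with the conjugate exponents $\tfrac{r}{r-1}$ and $r$, splitting each summand as $\bigl(w_{k,h}\gamma_{k,h}^{r/(r-1)}\bigr)^{(r-1)/r}\bigl(w_{k,h}g_k\bigr)^{1/r}$. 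Bounding the first factor by the defining property of $B$ and using $\sum_{h}w_{k,h}(s,a)=w_k(s,a)$ for the second gives
\[
\sum_{s,a\in L_k}\sum_{h=1}^H w_{k,h}\gamma_{k,h}g_k^{1/r}\le B^{(r-1)/r}\Bigl(\sum_{s,a\in L_k} w_k(s,a)g_k(s,a)\Bigr)^{1/r}.
\]
Hence the event that the left-hand side exceeds $\epsilon'$ is contained in $\bigl\{\sum_{s,a\in L_k}w_k(s,a)g_k(s,a)>(\epsilon')^{r}/B^{r-1}\bigr\}=:\{S_k>\tilde\epsilon\}$ with $\tilde\epsilon=(\epsilon')^{r}/B^{r-1}$; the case $r=1$ is immediate since then $B^{r-1}=1$ and no splitting is needed.

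It then remains to bound the number of nice episodes with $S_k>\tilde\epsilon$. On a nice episode Lemma~\ref{lem:nicenessep} gives $n_k(s,a)\ge\tfrac14\sum_{i<k}w_i(s,a)=:\tfrac14 N_k(s,a)$ for every $(s,a)\in L_k$, so $g_k(s,a)\le \frac{4C(\llnp(2n_k(s,a))+D)}{N_k(s,a)}$, replacing the random counts by the deterministic cumulative visitation $N_k$. The number of episodes on which $S_k$ exceeds $\tilde\epsilon$ is at most $\tfrac1{\tilde\epsilon}\sum_{k\in\Ncal}S_k$, so it suffices to bound this total weighted sum.

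For the counting step I would fix $(s,a)$ and use that $N_k(s,a)$ is nondecreasing and increases by exactly $w_k(s,a)$ each episode. A telescoping/integral comparison (the heart of Lemma~E.3 in \citet{dann2017unifying}) bounds $\sum_k w_k(s,a)/N_k(s,a)$ by a logarithm of the final value of $N_k$, while Lemma~\ref{lem:llxoverxupperbound} controls the $\bigl(\llnp(2n_k)+D\bigr)/n_k$ factor; together these make each per-$(s,a)$ contribution $C\cdot\polylog$. Summing over the at most $\numS\numA$ pairs yields $\sum_{k\in\Ncal}S_k=C\,\numS\numA\,\polylog$, and dividing by $\tilde\epsilon=(\epsilon')^{r}/B^{r-1}$ gives the claimed $\frac{6C\numA\numS B^{r-1}}{(\epsilon')^{r}}\polylog$.

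The main obstacle is this last step: one must check that the scaling is $\numS\numA$ rather than $\numS^2\numA$ (a naive per-$(s,a)$ saturation count that charges increments of size $\wmin$ would lose exactly these extra factors), and, more subtly, that the range of the telescoping together with the $\llnp$ term collapse into $\polylog(\numS,\numA,H,\delta^{-1},(\epsilon')^{-1})$ \emph{without} leaving explicit dependence on the number of episodes. The latter relies on the self-bounding nature of the count (the episode index appears only inside a logarithm that can be re-expressed through the bound itself) and on the specific choices of $\wmin$ in the definitions of $L_k$ and of nice episodes; verifying that all these slowly growing factors are genuinely absorbed into $\polylog$ is where the careful adaptation of \citet{dann2017unifying}'s rate lemma does the real work.
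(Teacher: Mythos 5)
Your first half matches the paper's proof: H\"older's inequality with conjugate exponents $\tfrac{r}{r-1}$ and $r$ to absorb the $\gamma_{k,h}$ weights into $B^{r-1}$, then Lemma~\ref{lem:nicenessep} to replace $n_k(s,a)$ by the cumulative weight $\sum_{i\le k}w_i(s,a)$ up to a constant, using monotonicity of $x \mapsto (\llnp(x)+D)/x$. The gap is in the counting step. You bound the number of exceedances by $\frac{1}{\tilde\epsilon}\sum_{k\in\Ncal}S_k$, a Markov-type argument over \emph{all} nice episodes, and then claim $\sum_{k\in\Ncal}S_k = C\,\numS\numA\,\polylog$. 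That sum diverges: for a fixed $(s,a)$ visited with constant weight, the telescoping sum $\sum_k w_k(s,a)/\sum_{i\le k}w_i(s,a)$ behaves like $\ln T$ after $T$ episodes, and the lemma quantifies over an unbounded stream of episodes, so $\sum_{k\in\Ncal}S_k=\infty$ and the resulting bound is vacuous. No choice of $\polylog$ factors repairs this, because the final count must be uniform in the number of episodes; you flag this issue yourself at the end, but the ``self-bounding'' mechanism you allude to is never instantiated.

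The paper closes this with a two-stage argument that is absent from your proposal. Stage one is precisely the ``naive'' saturation count you dismiss: if $\Delta_k>\epsilon'$ on a nice episode, then some $(s,a)$ with $w_k(s,a)\ge\wmin$ has cumulative weight below the threshold $C'^2+3C'D$, where $C'=\frac{6CHB^{r-1}}{\epsilon'^r}$; since that cumulative weight grows by at least $\wmin$ on each such episode, the set $K$ of nice episodes with $\Delta_k>\epsilon'$ satisfies the crude bound $|K|\le m:=\frac{\numS\numA(C'^2+3C'D)}{\wmin}$ --- loose (it carries $1/\epsilon'^{2r}$ and $1/\wmin$ factors) but finite. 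Stage two then runs your Markov argument restricted to $K$ rather than $\Ncal$: each $k \in K$ contributes at least $\epsilon'^r$, so $|K|\,\epsilon'^r\le\sum_{k\in K}\Delta_k^r$, and now the telescoping bound of Lemma~\ref{lem:logseq}, applied to the at most $m$ relevant terms per $(s,a)$, gives $\ln(Hme/\wmin)$, while the $\llnp$ factor is capped at $\llnp(C'^2+3C'D)+D$ by the saturation threshold. The crude $m$ enters only inside logarithms; this is exactly how the episode count gets absorbed into $\polylog(\numS,\numA,H,\delta^{-1},\epsilon'^{-1})$ and how the bound keeps the $\numS\numA$ (not $\numS^2\numA$) scaling. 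Without this bootstrapping --- crude finite bound first, then the refined sum over $K$ only --- your counting argument does not close.
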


\begin{proof}
    Define 
\begin{align}
    \Delta_k =&
    \sum_{s,a \in L_{k}}
    \sum_{h=1}^H
    w_{k,h}(s, a) \gamma_{k,h}(s,a) \left(\frac{C(\llnp(2n_{k}(s,a))+D)}{n_{k}(s,a)}\right)^{1/r}\\
    =&
    \sum_{s,a \in L_{k}} \sum_{h=1}^H 
    (w_{k}(s, a) \gamma_{k,h}(s,a)^{r/(r-1)} )^{1 - \frac 1 r} \left(w_{k,h}(s, a)\frac{C(\llnp(2n_{k}(s,a))+D)}{n_{k}(s,a)}\right)^{1/r}.
\end{align}
    We first bound using H\"older's inequality
    \begin{align}
        \Delta_k \leq &
        \left( \sum_{s,a \in L_{k}} \sum_{h=1}^H w_{k,h}(s, a) \gamma_{k,h}(s,a)^{r/(r-1)} \right)^{1 - \frac 1 r} 
        \left(\sum_{s, a \in L_{k}} \sum_{h=1}^H 
        \frac{C w_{k,h}(s, a)(\llnp(2n_{k}(s,a))+D)}{n_{k}(s,a)}\right)^{\frac 1 r}\\
        \leq & \left(\sum_{s, a \in L_{k}} 
        \frac{C B^{r-1} w_{k}(s, a)(\llnp(2n_{k}(s,a))+D)}{n_{k}(s,a)}\right)^{\frac 1 r}.
    \end{align}
    Using the property in Lemma~\ref{lem:nicenessep} of nice episodes as well as the fact that $w_{k}(s,a) \leq H$ and $\sum_{i < k} w_{i}(s,a) \geq 4 H \ln \frac {\numS \numA H}{\delta'} \geq 4 H \ln(2) \geq 2 H$, we bound
    \begin{align}
        n_{k}(s,a) \geq \frac 1 4 \sum_{i < k} w_{i}(s,a) \geq \frac{1}{6} \sum_{i \leq k} w_{i}(s,a).
    \end{align}
    The function $\frac{\llnp(2x) + D}{x}$ is monotonically decreasing in $x \geq 0$ since $D \geq 1$ (see Lemma~\ref{lem:llnpprop}).
    This allows us to bound
    \begin{align}
        \Delta^r_k \leq &
        \sum_{s, a \in L_{k}}
        \frac{CB^{r-1} w_{k}(s, a)(\llnp(2n_{k}(s,a))+D)}{n_{k}(s,a)}\\
        \leq & 6CB^{r-1}
        \sum_{s, a \in L_{k}}
        \frac{w_{k}(s, a)\left(\llnp\left( \frac 1 3 \sum_{i \leq k} w_{i}(s,a)\right)+D\right)}{\sum_{i \leq k} w_{i}(s,a)}\\
        \leq & 6 CB^{r-1}
        \sum_{s, a \in L_{k}}
        \frac{w_{k}(s, a)\left(\llnp\left(\sum_{i \leq k} w_{i}(s,a)\right)+D\right)}{\sum_{i \leq k} w_{i}(s,a)}\\
        \leq & 6 CB^{r-1}
         \left[ 0 \vee 
        \max_{s, a \in L_{k}} \frac{ \llnp\left(\sum_{i \leq k} w_{i}(s,a)\right)+D}{\sum_{i \leq k} w_{i}(s,a)} \right] \sum_{s, a \in L_{k}} w_{k}(s, a) \\
        \shortintertext{which can be further bounded by leveraging that the sum of all weights $w_k$ sum to $H$ for each episode $k$}
                \leq & 6 C H B^{r-1}
         \left[ 0 \vee
        \max_{s, a \in L_{k}} \frac{ \llnp\left(\sum_{i \leq k} w_{i}(s,a)\right)+D}{\sum_{i \leq k} w_{i}(s,a)} \right].
    \end{align}
    Assume now $\Delta_k > \epsilon'$. In this case, the right-hand side of the inequality above is also larger than $\epsilon'^r$ and there is at least one $(s, a)$ for which $w_{k}(s,a) > w_{\min}$
    and
    \begin{align}
        \frac{ 6 C H B^{r-1}\left(\llnp\left(\sum_{i \leq k} w_{i}(s,a) \right)+D \right)}{\sum_{i \leq k} w_{i}(s, a)} >& \epsilon'^r \\
        \Leftrightarrow
        \frac{\llnp\left(\sum_{i \leq k} w_{i}(s,a) \right)+D}{\sum_{i \leq k} w_{i}(s, a)} >& \frac{\epsilon'^r}{6 C H B^{r-1}}.
    \end{align}
    Let us denote $C' = \frac{6 C H B^{r-1}}{\epsilon'^r}$. Since
    $\frac{\llnp(x) + D}{x}$ is monotonically decreasing and $x = C'^2 + 3C'D$
    satisfies $\frac{\llnp(x) + D}{x} \leq \frac{\sqrt{x} + D}{x} \leq \frac 1
    {C'}$, we know that if $\sum_{i \leq k} w_{i}(s,a) \geq C'^2 + 3C' D$ then
    the above condition cannot be satisfied for $s,a$. Since each time the
    condition is satisfied, it holds that $w_{k}(s,a) > w_{\min}$ and so
    $\sum_{i \leq k} w_{i}(s,a)$ increases by at least $w_{\min}$, it can happen at most  
    \begin{align}
        m \leq \frac{\numS \numA (C'^2 + 3C' D)}{w_{\min}} 
    \end{align}
    times that $\Delta_k > \epsilon'$. Define $K = \{ k : \Delta_k > \epsilon' \} \cap N$ and we know that $|K| \leq m$.
        Now we consider the sum
 \begin{align}
        \sum_{k \in K} \Delta_k^r 
        \leq &
        \sum_{k \in K} 
        6CB^{r-1}
        \sum_{s, a \in L_{k}}
        \frac{w_{k}(s, a) \left(\llnp\left(\sum_{i \leq k} w_{i}(s,a)\right)+D\right)}{\sum_{i \leq k} w_{i}(s,a)}\\
        \leq &
        6CB^{r-1}\left(\llnp\left(C'^2 + 3C'D \right)+D\right)
        \sum_{s, a \in L_{k}} 
            \sum_{k \in K}
        \frac{w_{k}(s, a) }{\sum_{i \leq k} w_{i}(s,a) \one\{w_{i}(s, a) \geq w_{\min}\}}\,.
    \end{align}
    For every $(s, a)$, we consider the sequence of $w_{i}(s, a) \in [w_{\min}, H]$ with $i \in I = \{ i \in \mathbb N \, : \, w_{i}(s, a) \geq w_{\min}\}$ and apply Lemma~\ref{lem:logseq}.
    This yields that
    \begin{align}
            \sum_{k \in K}
        \frac{w_{k}(s, a)}{\sum_{i \leq k} w_{i}(s,a) \one\{w_{i}(s, a) \geq w_{\min}\}}
        \leq 
        1 + \ln(mH / w_{\min}) = \ln\left(\frac{Hme}{w_{\min}}\right)
    \end{align}
    and hence
    \begin{align}
        \sum_{k \in K} \Delta_k^r 
        \leq &
        6 C\numA \numS B^{r-1} \ln\left(\frac{Hme}{w_{\min}}\right)\left(\llnp\left(C'^2 + 3C'D \right)+D\right)\,.
    \end{align}
    Since each element in $K$ has to contribute at least $\epsilon'^r$ to this bound, we can conclude that
    \begin{align}
        \sum_{k \in N} \one\{\Delta_k \geq \epsilon'\} \leq \sum_{k \in K} \one\{\Delta_k \geq \epsilon'\} \leq |K| \leq  
        \frac{6 C\numA \numS B^{r-1}}{\epsilon'^r} \ln\left(\frac{Hme}{w_{\min}}\right)\left(\llnp\left(C'^2 + 3C'D \right)+D\right).
    \end{align}
    Since $\ln\left(\frac{Hme}{w_{\min}}\right)\left(\llnp\left(C'^2 + 3C'D \right)+D\right)$ is $\polylog(\numS, \numA, H, \delta^{-1}, \epsilon'^{-1})$, the proof is complete.
\end{proof}
\begin{corollary}
Fix $r \geq 1$, $\epsilon' > 0$, $C > 0$, and $D \geq 1$, where $C$ and $D$ may depend polynomially on relevant quantities.  
Then,
\begin{align}
    \sum_{s,a \in L_{k}}
    w_{k}(s, a) \left(\frac{C (\llnp(2n_{k}(s,a)) + D)}{n_{k}(s,a)} \right)^{1/r} \leq \epsilon'
\end{align}
on all but at most 
\begin{align}
    \frac{6C \numA \numS H^{r-1}}{\epsilon'^r} \polylog(\numS, \numA, H, \delta^{-1}, \epsilon'^{-1}).
    \end{align}
    nice episodes.
    \label{cor:mainratelemma}
\end{corollary}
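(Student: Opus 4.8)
The plan is to obtain this statement as an immediate specialization of the Rate Lemma (Lemma~\ref{lem:mainratelemma}) with the constant choice $\gamma_{k,h} \equiv 1$. First I would observe that the summand $\left(\frac{C(\llnp(2n_k(s,a))+D)}{n_k(s,a)}\right)^{1/r}$ carries no dependence on the time index $h$. Hence introducing an inner sum over $h$ and invoking the identity $w_k(s,a) = \sum_{h=1}^H w_{k,h}(s,a)$ rewrites the left-hand side of the corollary exactly as
\begin{align}
\sum_{s,a\in L_k}\sum_{h=1}^H w_{k,h}(s,a)\,\gamma_{k,h}(s,a)\left(\frac{C(\llnp(2n_k(s,a))+D)}{n_k(s,a)}\right)^{1/r}
\end{align}
with $\gamma_{k,h}(s,a) = 1$, i.e.\ precisely the quantity controlled by Lemma~\ref{lem:mainratelemma}.

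Second, I would verify that $B = H$ is an admissible choice of the constant required by the Rate Lemma. With $\gamma_{k,h} \equiv 1$ we have $\gamma_{k,h}(s,a)^{r/(r-1)} = 1$, so that
\begin{align}
\sum_{s,a\in L_k}\sum_{h=1}^H w_{k,h}(s,a)\,\gamma_{k,h}(s,a)^{r/(r-1)} = \sum_{s,a\in L_k} w_k(s,a) \leq \sum_{s,a\in\saspace}\sum_{h=1}^H w_{k,h}(s,a) = H,
\end{align}
since for each fixed $h$ the weights $w_{k,h}(\cdot,\cdot)$ form a probability distribution over $\saspace$ and there are $H$ time steps. Thus $B = H$ satisfies the hypothesis of Lemma~\ref{lem:mainratelemma} uniformly over all nice episodes.

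Finally, substituting $B = H$ into the bound furnished by the Rate Lemma shows that the displayed inequality can fail on at most
\begin{align}
\frac{6C\numA\numS B^{r-1}}{\epsilon'^r}\polylog(\numS,\numA,H,\delta^{-1},\epsilon'^{-1}) = \frac{6C\numA\numS H^{r-1}}{\epsilon'^r}\polylog(\numS,\numA,H,\delta^{-1},\epsilon'^{-1})
\end{align}
nice episodes, which is exactly the claimed bound. The only point demanding care is the boundary case $r = 1$, where the exponent $r/(r-1)$ is formally undefined; there, however, the value of $B$ enters the final bound only through the factor $B^{r-1} = B^0 = 1$, so the choice of $B$ is immaterial and no difficulty arises. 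I therefore do not anticipate a genuine obstacle: the corollary is essentially a direct reading-off of the more general lemma, with the sole substantive observation being the collapse of the $h$-sum into $w_k(s,a)$ and the uniform bound $\sum_{s,a} w_k(s,a) = H$.
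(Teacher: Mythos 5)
Your proposal is correct and follows essentially the same route as the paper's own proof: instantiate Lemma~\ref{lem:mainratelemma} with $\gamma_{k,h}(s,a) = 1$ and note that $B = H$ is admissible because $\sum_{s,a \in L_k}\sum_{h=1}^H w_{k,h}(s,a) \leq H$. Your additional remark about the boundary case $r=1$ is a nice point of care that the paper omits, but it does not change the argument.
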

\begin{proof}
This corollary follows directly from Lemma~\ref{lem:mainratelemma} with $\gamma_h(s,a) = 1$ and noting that $H \geq  
\sum_{s,a \in \saspace} \sum_{h=1}^H w_{k,h}(s, a) \geq 
\sum_{s,a \in L_{k}} \sum_{h=1}^H w_{k,h}(s, a) \gamma_{k,h}(s,a)^{r/(r-1)}$.
\end{proof}

\begin{lemma}[Lemma~E.5 by \citet{dann2017unifying}]
    Let $a_i$ be a sequence taking values in $[a_{\min}, a_{\max}]$ with $a_{\max} \geq a_{\min} > 0$ and $m > 0$, then
    \begin{align}
        \sum_{k=1}^m \frac{a_k}{\sum_{i=1}^k a_i} \leq \ln\left( \frac{mea_{\max}}{a_{\min}}\right).
    \end{align}
    \label{lem:logseq}
\end{lemma}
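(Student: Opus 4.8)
The plan is to exploit the telescoping structure of the partial sums. Let me write $S_k = \sum_{i=1}^k a_i$ for the partial sums, with the convention $S_0 = 0$, so that each summand becomes $\frac{a_k}{S_k} = \frac{S_k - S_{k-1}}{S_k}$. The key observation is that, because $t \mapsto 1/t$ is decreasing, for every $k \geq 2$ (where $S_{k-1} > 0$) one has
\[
    \frac{S_k - S_{k-1}}{S_k} = \int_{S_{k-1}}^{S_k} \frac{1}{S_k}\,dt \leq \int_{S_{k-1}}^{S_k}\frac{1}{t}\,dt = \ln S_k - \ln S_{k-1}.
\]

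First I would split off the $k=1$ term, which equals $a_1/S_1 = 1$ exactly. Then I would sum the bound above over $k = 2, \dots, m$; since the right-hand side is a difference of consecutive logarithms, it telescopes to $\ln S_m - \ln S_1 = \ln(S_m/S_1)$. Combining the two contributions gives
\[
    \sum_{k=1}^m \frac{a_k}{S_k} \leq 1 + \ln\frac{S_m}{S_1}.
\]

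To finish, I would control the ratio $S_m / S_1$ using the range constraint on the terms: since every $a_i \leq a_{\max}$ we have $S_m \leq m\, a_{\max}$, and since $a_1 \geq a_{\min}$ we have $S_1 \geq a_{\min}$, whence $S_m/S_1 \leq m\, a_{\max}/a_{\min}$. Absorbing the leading constant as $1 = \ln e$ into the logarithm then yields $1 + \ln\bigl(m\, a_{\max}/a_{\min}\bigr) = \ln\bigl(m e\, a_{\max}/a_{\min}\bigr)$, which is exactly the claimed bound.

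There is no serious obstacle here; the only point requiring a little care is that the integral comparison is valid only for $k \geq 2$, since it needs $S_{k-1} > 0$, which is why the first term must be handled separately and is precisely what contributes the extra factor of $e$ inside the logarithm. An equivalent route would replace the integral step by the elementary inequality $1 - y \leq \ln(1/y)$ applied to $y = S_{k-1}/S_k$, producing the same telescoping sum.
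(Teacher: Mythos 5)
Your proof is correct: the integral comparison $a_k/S_k = (S_k-S_{k-1})/S_k \le \ln S_k - \ln S_{k-1}$ for $k\ge 2$ (valid since $S_{k-1}\ge a_{\min}>0$), the exact value $a_1/S_1=1$ for the first term, the telescoping sum, and the bounds $S_m \le m\,a_{\max}$, $S_1 \ge a_{\min}$ combine to give precisely $1+\ln(m\,a_{\max}/a_{\min}) = \ln(me\,a_{\max}/a_{\min})$. Note that this paper does not prove the lemma itself but imports it from \citet{dann2017unifying}; your argument is essentially the standard one used in that reference (equivalently phrased via $1-y \le \ln(1/y)$ with $y = S_{k-1}/S_k$, as you observe), so there is no substantive difference in approach.
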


\begin{lemma}[Properties of $\llnp$, Lemma~E.6 by \citet{dann2017unifying}]
    The following properties hold:
    \begin{enumerate}
        \item $\llnp$ is continuous and nondecreasing.
        \item $f(x) = \frac{\llnp(nx) + D}{x}$ with $n \geq 0$ and $D \geq 1$ is monotonically decreasing on $\reals_+$.
        \item $\llnp(x y) \leq \llnp(x) + \llnp(y) + 1$ for all $x,y \geq 0$.
    \end{enumerate}
    \label{lem:llnpprop}
\end{lemma}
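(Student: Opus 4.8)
The plan is to prove all three properties directly from the definition $\llnp(x) = \ln(\ln(\max\{x,e\}))$, separating the two regimes $x \le e$, where $\max\{x,e\} = e$ and hence $\llnp(x) = \ln\ln e = \ln 1 = 0$, and $x > e$, where $\llnp(x) = \ln\ln x$. For the first property, I would note that $x \mapsto \max\{x,e\}$ is continuous and nondecreasing with range $[e,\infty)$, on which $\ln(\cdot) \ge 1 > 0$ keeps the outer logarithm well-defined; composing with the (continuous, nondecreasing) map $\ln$ preserves both properties. Continuity at the gluing point $x = e$ holds because the right-hand value $\ln\ln e = 0$ matches the constant value $0$ taken on $(0,e]$, so $\llnp$ is continuous and nondecreasing everywhere.

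For the second property, the case $n=0$ is immediate since $\llnp(0)=0$ and $f(x)=D/x$ is decreasing. For $n>0$ I would substitute $y = nx$ and factor out the positive constant, reducing the claim to showing $g(y) = \frac{\llnp(y)+D}{y}$ is decreasing in $y$. On $(0,e]$ this equals $D/y$, clearly decreasing; on $(e,\infty)$ I would differentiate, obtaining $g'(y) = \frac{1/\ln y - \ln\ln y - D}{y^2}$, and argue the numerator is negative using $1/\ln y < 1 \le D$ together with $\ln\ln y \ge 0$ for $y \ge e$. Continuity at $y=e$ from Part~1 then glues the two decreasing pieces into a globally decreasing $g$.

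The crux is the third property. Writing $u = \max\{x,e\}$ and $v = \max\{y,e\}$, monotonicity of $\ln$ combined with $0 \le xy \le uv$ and $uv \ge e^2 \ge e$ gives $\ln(\max\{xy,e\}) \le \ln(uv) = \ln u + \ln v$, and hence $\llnp(xy) \le \ln(\ln u + \ln v)$. Setting $a = \ln u \ge 1$ and $b = \ln v \ge 1$, it suffices to establish $\ln(a+b) \le \ln a + \ln b + 1$, equivalently $a + b \le e\,ab$. The key elementary observation is that for $a,b \ge 1$ one has $ab \ge \max\{a,b\}$, so $2ab \ge a+b$, and since $e > 2$ this yields $a+b \le 2ab \le e\,ab$. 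Exponentiating back gives exactly $\llnp(xy) \le \llnp(x) + \llnp(y) + 1$ (the boundary case $xy \le e$ being trivial, as the left side is $0$ while the right side is at least $1$).

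I expect the main obstacle to be locating the correct intermediate bound in Part~3: a direct attempt to push the product through the nested logarithms does not close, and the argument succeeds only after the monotonicity step reduces everything to the elementary inequality $a+b \le e\,ab$ on $a,b \ge 1$. The additive constant $1$ in the statement is precisely what absorbs the gap between $\ln(a+b)$ and $\ln(ab)$, so identifying that this constant is both necessary and sufficient is the one genuinely non-routine part of the proof.
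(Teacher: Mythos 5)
Your proof is correct in all three parts: the case split at $x=e$, the derivative computation $g'(y) = \frac{1/\ln y - \ln\ln y - D}{y^2}$ with the sign argument using $1/\ln y < 1 \leq D$, and the reduction of Part~3 to $a+b \leq e\,ab$ for $a,b \geq 1$ all check out. Note that this paper does not actually prove the lemma — it imports it verbatim as Lemma~E.6 of \citet{dann2017unifying} — so there is no in-paper proof to compare against; your argument is a valid self-contained reconstruction directly from the definition $\llnp(x) = \ln(\ln(\max\{x,e\}))$, and it follows the natural route (the same style of elementary argument as in the cited source).
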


\begin{corollary}
\label{cor:subgammaboundary}
    Using the terminology by \citet{howard2018uniform}, for any $c > 0$, $\delta \in (0,1)$, the following function is a sub-gamma boundary (and as such also a sub-exponential boundary) with scale parameter $c$ and crossing probability $\delta$:
    \begin{align}
        u_{c, \delta}(v) = 1.44 \sqrt{ v \left( 1.4 \llnp(2v) + \log \frac{5.2}{\delta}\right)} + 2.42 c \left(1.4 \llnp(2v) + \log \frac{5.2}{\delta}\right).
    \end{align}
    Further $u_{0, \delta}$ is a sub-Gaussian boundary with crossing probability $\delta$ and $u_{c/3, \delta}$ is a sub-Poisson boundary with crossing probability $\delta$ for scale parameter $c$.
\end{corollary}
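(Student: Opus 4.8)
The plan is to derive $u_{c,\delta}$ as a specialization of the polynomial stitching boundary of \citet{howard2018uniform} for sub-gamma processes, and then to dispatch the sub-Gaussian and sub-Poisson claims through the standard reductions between these $\psi$-families.

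First I would recall the stitching construction: for a sub-$\psi$ process with $\psi = \psi_{G,c}$ (sub-gamma, scale $c$), and for any admissible choice of an intrinsic-time lower bound $m > 0$, a geometric spacing $\eta > 1$, and a polynomial exponent $s > 1$, \citet{howard2018uniform} produce a uniform boundary of the form
\[
\mathcal{S}(v) = k_1 \sqrt{v\bigl(s\,\ell(v) + \log(\zeta(s)/\delta)\bigr)} + k_2\, c\,\bigl(s\,\ell(v) + \log(\zeta(s)/\delta)\bigr),
\]
where $\ell(v) \approx \log\log(\eta v/m)$, the $\zeta(s)$ term arises from the union bound over geometric epochs, and $k_1,k_2$ are explicit functions of $\eta$. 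I would then fix the free parameters numerically so that the clean constants of the statement appear: choose $m$ and $\eta$ so that $\eta/m = 2$ (turning $\ell(v)$ into $\log\log(2v)$, matching $\llnp(2v)=\ln\ln\max\{2v,e\}$, where the $\max\{\cdot,e\}$ truncation absorbs the small-$v$ regime) and $s = 1.4$, and then verify that these yield $k_1 \le 1.44$ and $k_2 \le 2.42$ while $s\,\ell(v) + \log(\zeta(s)/\delta) \le 1.4\,\llnp(2v) + \log(5.2/\delta)$.

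Since a uniform boundary stays valid under pointwise enlargement — if $u' \ge \mathcal{S}$ everywhere then $\{\exists t: S_t \ge u'(V_t)\} \subseteq \{\exists t: S_t \ge \mathcal{S}(V_t)\}$, so the crossing probability can only decrease — this establishes that $u_{c,\delta}$ is a sub-gamma boundary with scale $c$ and crossing probability $\delta$. The sub-exponential claim is then immediate, since \citet{howard2018uniform} record that every sub-exponential process is also sub-gamma at the same scale, so a boundary valid for all sub-gamma processes applies verbatim to sub-exponential ones. For $c=0$ the sub-gamma condition degenerates to the sub-Gaussian one and the linear term vanishes, giving that $u_{0,\delta}$ is a sub-Gaussian boundary. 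For the sub-Poisson statement I would invoke the elementary inequality $e^{x}-x-1 \le \tfrac{x^2}{2(1-x/3)}$ for $x<3$, which shows $\psi_{P,c} \le \psi_{G,c/3}$, i.e.\ a sub-Poisson process with scale $c$ is sub-gamma with scale $c/3$; hence the already-proven boundary $u_{c/3,\delta}$ is a valid sub-Poisson boundary for scale parameter $c$.

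The main obstacle is the numerical bookkeeping in the middle step: one must exhibit a single admissible triple $(\eta,s,m)$ for which both multiplicative constants $k_1,k_2$ come out below $1.44$ and $2.42$ and the additive overhead $\log\zeta(s)$ is small enough to be folded into the constant $5.2$, all while reconciling the precise double-logarithm $\ell(v)$ of \citet{howard2018uniform} (and its behaviour near $v=0$) with the $\llnp(2v)$ of the statement. Each individual piece is elementary, but making all of the constants line up simultaneously against the published boundary tables is the delicate part.
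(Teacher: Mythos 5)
Your proposal takes essentially the same route as the paper's own proof, which simply instantiates Theorem~1 of \citet{howard2018uniform} with the polynomial stitching choice $h(k) = (k+1)^s \zeta(s)$, $s = 1.4$, $\eta = 2$ (yielding $k_1 = (\eta^{1/4}+\eta^{-1/4})/\sqrt{2} \leq 1.44$, $2k_2 = \sqrt{\eta}+1 \leq 2.42$, and $\zeta(1.4)/\log^{1.4} 2 \leq 5.2$), and then settles the remaining claims via $\psi_N = \psi_{G,0}$ and Proposition~5 of that reference. Your $\psi$-family reductions (sub-exponential implies sub-gamma at the same scale; sub-Poisson at scale $c$ implies sub-gamma at scale $c/3$ via $e^x - x - 1 \leq \tfrac{x^2}{2(1-x/3)}$) are exactly the content of that proposition, so the argument is correct and matches the paper.
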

\begin{proof}
This result follows directly from Theorem~1 by \citet{howard2018uniform} instantiated with $h(k) = (k+1)^s \zeta(s), s=1.4$ and $\eta = 2$. The final statements follows from the fact that $\psi_{N}$ is a special case of $\psi_{G}$ with $c=0$ and Proposition~5 in \citet{howard2018uniform}.
\end{proof}

\section{Theoretical analysis of Algorithm~\ref{alg:ofucontext} for finite episodic MDPs with side information}
\subsection{Failure event and bounding the failure probability}
We define the following failure event
\begin{align}
    F = F\fr \cup F\fp \cup F^O 
\end{align}
where
\begin{align}
    F\fr =& \left\{\exists s,a \in \saspace, k \in \NN~:~ 
    \| \hat \theta\fr_{k,s,a} - \theta\fr_{s,a} \|_{N\fr_{k,s,a}} 
    \geq 
    \sqrt{\lambda} \|\theta\fr_{s,a}\|_2
    + \sqrt{\frac 1 2 \log \frac 1 {\delta'} + \frac 1 4 \log \frac{\det N\fr_{k,s,a}}{\det \lambda I}}
    \right\}\,,\\
    F\fp =& \bigg\{\exists s',s,a \in \Scal \times \saspace, k \in \NN~:~ 
    \| \hat \theta\fp_{k,s',s,a} - \theta\fp_{s',s,a} \|_{N\fp_{k,s,a}} 
    \\&\qquad\geq 
    \sqrt{\lambda} \|\theta\fp_{s',s,a}\|_2
    + \sqrt{\frac 1 2  \log \frac 1 {\delta'} + \frac 1 4\log \frac{\det N\fp_{k,s,a}}{\det \lambda I}}
    \bigg\}\,,\\
    F^O =& \bigg \{ \exists T \in \NN~:~
    \sum_{k=1}^T \sum_{s,a \in \saspace}\sum_{h=1}^H [\prob(s_{k,h}=s, a_{k,h}=a |s_{k,1}, \pi_k) - \one\{s = s_{k,h}, a = a_{k,h}\}] 
    \\ & \qquad\geq \numS H \sqrt{T \log \frac{6 \log (2T)}{\delta'}}
    \bigg\}\,, \\
\delta' =& \frac{\delta}{\numS \numA + \numS^2  \numA + \numS H}\,.
\end{align}

\begin{lemma}
\label{lem:context_failureprob}
The failure probability $\prob(F)$ is bounded by $\delta$.
\end{lemma}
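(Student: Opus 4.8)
The plan is to union-bound over the three constituent events and establish
\[
\prob(F\fr) \le \numS\numA\,\delta', \qquad
\prob(F\fp) \le \numS^2\numA\,\delta', \qquad
\prob(F^O) \le \numS H\,\delta',
\]
so that $\prob(F) \le (\numS\numA + \numS^2\numA + \numS H)\delta' = \delta$ by the choice of $\delta'$. It therefore suffices to treat each event separately.

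For $F\fr$ and $F\fp$ I would invoke the standard self-normalized tail inequality for regularized least-squares estimators (as used by \citet{abbasi2014online}). Fix a pair $(s,a)$ and view the data feeding $\hat\theta\fr_{k,s,a}$ as a martingale: whenever $(s_{i,h},a_{i,h})=(s,a)$, the observed reward obeys $r_{i,h} = (x_i\fr)^\top\theta\fr_{s,a} + \eta_{i,h}$, where the context $x_i\fr$ is predictable (revealed at the start of episode $i$, before the reward) and the centered noise $\eta_{i,h} = r_{i,h} - r(s,a)$ is a bounded, conditionally zero-mean increment. Since $r_{i,h}\in[0,1]$, $\eta_{i,h}$ lies in an interval of length one and is hence $\tfrac12$-sub-Gaussian. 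With $N\fr_{k,s,a} = \lambda I + \sum \one\{\cdots\}\, x_i\fr (x_i\fr)^\top$ the accumulated design matrix, the self-normalized bound gives, with probability at least $1-\delta'$ simultaneously for all $k$,
\[
\| \hat \theta\fr_{k,s,a} - \theta\fr_{s,a} \|_{N\fr_{k,s,a}}
\le \sqrt{\lambda}\,\|\theta\fr_{s,a}\|_2
+ \tfrac12\sqrt{2\log\tfrac1{\delta'} + \log\tfrac{\det N\fr_{k,s,a}}{\det(\lambda I)}},
\]
which is exactly the complement of the per-$(s,a)$ slice of $F\fr$ (the $\tfrac12$ prefactor produces the $\tfrac14$ and $\tfrac12$ inside the square root). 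A union bound over the $\numS\numA$ reward estimators yields $\prob(F\fr)\le\numS\numA\delta'$. The event $F\fp$ is identical after replacing rewards by the indicators $\one\{s_{i,h+1}=s'\}$, whose conditional mean is $P(s'|s,a)=(x_i\fp)^\top\theta\fp_{s',s,a}$ and whose centered noise again lies in $[-1,1]$; a union bound over the $\numS^2\numA$ triples $(s',s,a)$ gives $\prob(F\fp)\le\numS^2\numA\delta'$.

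For $F^O$ I would treat $D_k := \sum_{s,a\in\saspace}\sum_{h=1}^H[\,w_{k,h}(s,a) - \one\{s=s_{k,h},a=a_{k,h}\}\,]$, with $w_{k,h}(s,a)=\prob(s_{k,h}=s,a_{k,h}=a\mid s_{k,1},\pi_k)$, as a martingale difference with respect to the pre-episode filtration: since $\pi_k$ and $s_{k,1}$ are determined before episode $k$, we have $\Ex[\one\{s=s_{k,h},a=a_{k,h}\}\mid \mathcal F_{(k-1)H+1}]=w_{k,h}(s,a)$, so $\Ex[D_k\mid \mathcal F_{(k-1)H+1}]=0$ and the increments have bounded range. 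Applying a time-uniform Hoeffding-type concentration (Corollary~\ref{cor:subgammaboundary}, following \citet{howard2018uniform}) to the partial sums $\sum_{k\le T}D_k$ controls them uniformly in $T$ by $\numS H\sqrt{T\log\frac{6\log(2T)}{\delta'}}$ with crossing probability at most $\numS H\delta'$, so that $\prob(F^O)\le\numS H\delta'$.

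The main obstacle is the martingale bookkeeping for $F\fr$ and $F\fp$: the instants at which a given $(s,a)$ is visited are random and can recur several times within one episode, so one must work with the natural global time filtration $\mathcal F_{(i-1)H+h}$ and verify that the active increments $\one\{\cdots\}\,x_i\,\eta_{i,h}$ form a martingale difference sequence whose predictable covariation is precisely $N\fr_{k,s,a}-\lambda I$ (respectively $N\fp_{k,s,a}-\lambda I$) before the self-normalized inequality can be applied — the same care taken via the stopped filtrations $\Gcal^{s,a}_i$ in the tabular analysis. Once this is in place, matching the numerical constants in the stated confidence radius to the sub-Gaussian parameter $R=\tfrac12$ of bounded $[0,1]$ observations is routine.
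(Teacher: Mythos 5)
Your proposal is correct and takes essentially the same approach as the paper: the identical allocation of $\delta'$ via a union bound over the $\numS\numA$ reward estimators, $\numS^2\numA$ transition estimators, and $\numS H$ visitation terms; the same self-normalized least-squares tail inequality for $F\fr$ and $F\fp$ (your sub-Gaussian parameter $\tfrac12$ for noise supported on a length-one interval is exactly the paper's device of scaling the noise by $2$ and invoking the $R=1$ bound, Theorem~20.2 of \citet{lattimore2018bandit}, which produces the $\tfrac12$ and $\tfrac14$ constants); and the time-uniform Hoeffding bound of \citet{howard2018uniform} for $F^O$. The only cosmetic difference is that the paper handles $F^O$ by fixing each pair $(s,h)$, using determinism of $\pi_k$ to collapse the sum over actions into a single range-one increment, and union bounding over the $\numS H$ pairs---the decomposition your stated crossing probability $\numS H\,\delta'$ implicitly presupposes---whereas you phrase it as one aggregated per-episode martingale; either reading delivers the claimed bound.
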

\begin{proof}
Consider an arbitrary $s \in \statespace$, $a \in \actionspace$ and define $\Fcal_t$ where $t = H k + h$ with $h \in [H]$ is the running time step index as follows: $\Fcal_t$ is the sigma-field induced by all observations up to $s_{k,h}$ and $a_{k, h}$ including $x_k$ but not $r_{k,h}$ and not $s_{k,h+1}$. Then
$\eta_{t} = 2\one\{ s_{k,h} = s, a_{k,h} = a\}((x\fr_k)^\top \theta\fr_{s,a} - r_{k,h})$ is a martingale difference sequence adapted to $\Fcal_t$.  Moreover, since $\eta_t$ takes values in $[2(x\fr_k)^\top \theta\fr_{s,a} - 2, 2(x\fr_k)^\top \theta\fr_{s,a}]$ almost surely it is conditionally sub-Gaussian with parameter $1$. We can then apply Theorem~20.2 in \citet{lattimore2018bandit} to get 
\begin{align}
    2\| \hat \theta\fr_{k,s,a} - \theta\fr_{s,a} \|_{N\fr_{k,s,a}} 
    \leq 
    \sqrt{\lambda} 2\|\theta\fr_{s,a}\|_2
    + \sqrt{2 \log \frac 1 {\delta'} + \log \frac{\det N\fr_{k,s,a}}{\det \lambda I}}
\end{align}
for all $k \in \NN$ with probability at least $ 1- \delta'$. Similarly for any fixed $s' \in \statespace$, using $\eta_t = 2\one\{ s_{k,h} = s, a_{k,h} = a\}((x\fp_k)^\top \theta\fp_{s',s,a} - \one\{s_{k,h+1} = s'\})$, it holds with probability at least $1 - \delta'$ that 
\begin{align}
        \| \hat \theta\fp_{k,s',s,a} - \theta\fp_{s',s,a} \|_{N\fp_{k,s,a}} 
    \leq 
    \sqrt{\lambda} \|\theta\fp_{s',s,a}\|_2
    + \sqrt{\frac 1 2  \log \frac 1 {\delta'} + \frac 1 4\log \frac{\det N\fp_{k,s,a}}{\det \lambda I}}
\end{align}
for all episodes $k$. Finally, for a fixed $s \in \statespace$ and $h \in [H]$ the sequence 
\begin{align}
    \eta_k = \sum_{a \in \actionspace} [\prob(s_{k,h}=s, a_{k,h}=a |s_{k,1}, \pi_k) - \one\{s = s_{k,h}, a = a_{k,h}\}]
\end{align}
is a martingale difference sequence with respect to $\Gcal_k$, defined as the sigma-field induced by all observations up to including episode $k-1$ and $x_k$ and $s_{k,1}$. 
All but at most one action has zero probability of occurring ($\pi_k$ is deterministic)
and therefore $\eta_k \in [c, c+1]$ with probability $1$ for some $c$ that is measurable in $\Gcal_k$. Hence, $S_t = \sum_{k=1}^t \eta_k$ satisfies Assumption~1 with $V_t = t/4$ and $\psi_N$ and $\Ex L_0 = 1$ (Hoeffding I case in Table~2 of the appendix). This allows us to apply Theorem~1 by \citet{howard2018uniform} where we choose $h(k) = (1 + k)^s \zeta(s)$ with $s = 1.4$ and $\eta = 2$, which gives us (see Eq. (8) and Eq. (9))
specifically) that with probability at least $1 - \delta'$ for all $T \in \NN$
\begin{align}
    \sum_{k=1}^T \sum_{a \in \actionspace} [\prob(s_{k,h}=s, a_{k,h}=a |s_{k,1}, \pi_k) - \one\{s = s_{k,h}, a = a_{k,h}\}] = \sum_{k=1}^T \eta_k 
    \leq \sqrt{T (\log \log (T/2) + \log (6 / \delta'))}.
\end{align}
Setting $\delta' = \frac{\delta}{\numS \numA + \numS^2  \numA + \numS H}$, all statements above hold for all $s', s, a, h$ jointly using a union bound with probability at least $1 - \delta$. This implies that $\prob(F) \leq \delta$.
\end{proof}

Using the bounds on the linear parameter estimates, the following lemma derives bounds on the empirical model.
\begin{lemma}[Bounds on model parameters]
\label{lem:modelbounds}
Outside the failure event $F$, assuming $\|\theta\fp_{s',s,a}\|_2 \leq \xi_{\theta\fp}$ and $\|\theta\fr_{s,a}\|_2 \leq \xi_{\theta\fr}$ for all $s',s \in \statespace$ and $a \in \actionspace$ we have
\begin{align}
    |\hat r_k(s,a) - r_k(s,a)| &\leq 1 \wedge \alpha_{k,s,a} \|x_k\fr\|_{(N\fr_{k,s,a})^{-1}} \\
    |\hat P_k(s' | s,a) - P_k(s,a)\|_1 &\leq 1 \wedge \gamma_{k,s,a} \|x_k\fp\|_{(N\fp_{k,s,a})^{-1}}
\end{align}
where
\begin{align}
    \alpha_{k,s,a} =& \sqrt{\lambda}\xi_{\theta\fr} + \sqrt{\frac 1 2 \log \frac 1 {\delta'} + \frac 1 4 \log \frac{\det N\fr_{k,s,a}}{\det (\lambda I)}}\\
    \gamma_{k,s,a} =& \sqrt{\lambda}\xi_{\theta\fp} + \sqrt{\frac 1 2 \log \frac 1 {\delta'} + \frac 1 4 \log \frac{\det N\fp_{k,s,a}}{\det (\lambda I)}}\,.
\end{align}
\end{lemma}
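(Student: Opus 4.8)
The plan is to reduce each of the two bounds to a statement about the error of the underlying \emph{linear} predictor and then invoke the self-normalized tail bound already encoded in the complement of $F$ (Lemma~\ref{lem:context_failureprob}). I would treat the reward bound first and then note that the transition bound follows by the identical argument applied separately for each successor state $s'$.

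For the reward term, recall that $\hat r_k(s,a) = 0 \vee (x_k\fr)^\top \hat\theta\fr_{k,s,a} \wedge 1$ whereas $r_k(s,a) = (x_k\fr)^\top \theta\fr_{s,a}$. Since per-step rewards lie in $[0,1]$, the mean $r_k(s,a)$ also lies in $[0,1]$, and clipping to $[0,1]$ is a projection onto a convex set containing the target, hence non-expansive; this gives
\[
|\hat r_k(s,a) - r_k(s,a)| \leq \left|(x_k\fr)^\top(\hat\theta\fr_{k,s,a} - \theta\fr_{s,a})\right|.
\]
I would then apply the matrix-weighted Cauchy--Schwarz inequality with the design matrix $N\fr_{k,s,a}$, which is positive definite thanks to the $\lambda I$ regularizer, to obtain
\[
\left|(x_k\fr)^\top(\hat\theta\fr_{k,s,a} - \theta\fr_{s,a})\right| \leq \|x_k\fr\|_{(N\fr_{k,s,a})^{-1}}\, \|\hat\theta\fr_{k,s,a} - \theta\fr_{s,a}\|_{N\fr_{k,s,a}}.
\]
On the complement of $F\fr$ the second factor is at most $\sqrt{\lambda}\|\theta\fr_{s,a}\|_2 + \sqrt{\frac{1}{2}\log\frac{1}{\delta'} + \frac{1}{4}\log\frac{\det N\fr_{k,s,a}}{\det(\lambda I)}}$, which the regularity assumption $\|\theta\fr_{s,a}\|_2 \leq \xi_{\theta\fr}$ bounds by $\alpha_{k,s,a}$. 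Combining the two displays gives $\alpha_{k,s,a}\|x_k\fr\|_{(N\fr_{k,s,a})^{-1}}$, and intersecting with the trivial bound $|\hat r_k - r_k| \leq 1$ (both lie in $[0,1]$) yields the claimed $1 \wedge \alpha_{k,s,a}\|x_k\fr\|_{(N\fr_{k,s,a})^{-1}}$.

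The transition bound is identical with $\fr$ replaced by $\fp$: for each fixed $s'$, the clipped estimate $\hat P_k(s'|s,a)$ and the true $P_k(s'|s,a) = (x_k\fp)^\top \theta\fp_{s',s,a} \in [0,1]$ satisfy $|\hat P_k(s'|s,a) - P_k(s'|s,a)| \leq |(x_k\fp)^\top(\hat\theta\fp_{k,s',s,a} - \theta\fp_{s',s,a})|$ by the same non-expansiveness of clipping, and Cauchy--Schwarz together with the complement of $F\fp$ and $\|\theta\fp_{s',s,a}\|_2 \leq \xi_{\theta\fp}$ produces $\gamma_{k,s,a}\|x_k\fp\|_{(N\fp_{k,s,a})^{-1}}$; here $N\fp_{k,s,a}$ depends only on $(s,a)$, so the same norm serves every $s'$. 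I expect no serious obstacle, since the probabilistic content has already been isolated in Lemma~\ref{lem:context_failureprob} and what remains is deterministic bookkeeping. The two points that need care are (i) justifying that clipping to $[0,1]$ only shrinks the error, which relies on the targets $r_k(s,a)$ and $P_k(s'|s,a)$ genuinely lying in $[0,1]$, and (ii) keeping the weighted norms consistent, i.e.\ applying Cauchy--Schwarz in the inner product induced by $N\fr_{k,s,a}$ (resp.\ $N\fp_{k,s,a}$) so that the estimation-error factor is exactly the quantity controlled by the failure event.
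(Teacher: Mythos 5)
Your proposal is correct and follows essentially the same route as the paper's proof: non-expansiveness of clipping to $[0,1]$ (valid since $r_k(s,a)$ and $P_k(s'|s,a)$ lie in $[0,1]$), the matrix-weighted Cauchy--Schwarz/H\"older inequality, the definition of the failure events $F\fr$, $F\fp$, and the norm bounds on the parameter vectors, intersected with the trivial bound of $1$. No gaps; the per-$s'$ treatment of the transition estimates with the shared design matrix $N\fp_{k,s,a}$ matches the paper exactly.
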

\begin{proof}
Since $\hat r_k \in [0,1]$ and $r_k \in [0,1]$, we have    
\begin{align}
    |\hat r_k(s,a) - r_k(s,a)| 
    \leq 1 \wedge |(x\fr_k)^\top \hat \theta\fr_{k,s,a} - r_k(s,a)|. 
\end{align}
The last term can be bounded as
\begin{align}
    &|(x\fr_k)^\top \hat \theta\fr_{k,s,a} - r_k(s,a)| = |(x\fr_k)^\top (\hat \theta\fr_{k,s,a} - \theta\fr_{s,a})|
    \leq \|x_k\fr\|_{(N\fr_{k,s,a})^{-1}} \|\hat \theta\fr_{k,s,a} - \theta\fr_{s,a}\|_{N\fr_{k,s,a}} \\
    \leq& \|x_k\fr\|_{(N\fr_{k,s,a})^{-1}}  \left[
    \sqrt{\lambda}\|\theta\fr_{s,a}\|_2 + 
    \sqrt{\frac 1 2 \log \frac{1}{\delta'} + \frac 1 4 \log \frac{\det(N\fr_{k,s,a})}{\det(\lambda I)}}
    \right]\\
    \leq& \alpha_{k,s,a} \|x_k\fr\|_{(N\fr_{k,s,a})^{-1}} 
\end{align}
where we first used H\"older's inequality, then the definition of $F\fr$, and finally the assumption $\|\theta\fr_{s,a}\|_2 \leq \xi_{\theta\fr}$.
This proves the first inequality. Consider now the second inequality, which we bound analogously as
\begin{align}
    &|\hat P_k(s'|s,a) - P_k(s'|s,a)|
    \leq 1 \wedge |(x_k\fp)^\top\hat \theta\fp_{k,s',s,a} - P_k(s'|s,a)|
    \\
    =& 1 \wedge |(x_k\fp)^\top(\hat \theta\fp_{k,s',s,a} - \theta\fp_{s',s,a})| 
    \leq  1 \wedge \|x_k\fp\|_{(N\fp_{k,s,a})^{-1}} \|\hat \theta\fp_{k,s',s,a} - \theta\fp_{s',s,a}\|_{N\fp_{k,s,a}} \\
    \leq& 1 \wedge \|x_k\fp\|_{(N\fp_{k,s,a})^{-1}} \left[
    \sqrt{\lambda} \|\theta_{s',s,a}\|_2 + 
    \sqrt{\frac 1 2 \log \frac{1}{\delta'} + \frac 1 4 \log \frac{\det(N\fp_{k,s,a})}{\det(\lambda I)}}
    \right]\\
    \leq &1 \wedge \gamma_{k,s,a} \|x_k\fp\|_{(N\fp_{k,s,a})^{-1}}\,.
\end{align}
\end{proof}

\subsection{Admissibility of guarantees}
\begin{lemma}[Upper bound admissible]
\label{lem:validupperboundcon}
Outside the failure event $F$, for all episodes $k$, $h \in [H]$ and $s,a \in \saspace$
    \begin{align}
        Q^\star_{k,h}(s,a) \leq \Qub_{k,h}(s,a).
    \end{align}
\end{lemma}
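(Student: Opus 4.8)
The plan is to prove the claim by backward induction on $h$, in the same spirit as the tabular Lemma~\ref{lem:validupperbound}, but now controlling the one-step error with the ellipsoid confidence widths supplied by Lemma~\ref{lem:modelbounds} instead of Bernstein-style bonuses. Concretely, I would strengthen the statement to the joint claim that, outside $F$, for every $k$ and every $h \in [H+1]$ one has $\Vub_{k,h}(s) \geq V^\star_{k,h}(s) \geq 0$ for all $s$, and $\Qub_{k,h}(s,a) \geq Q^\star_{k,h}(s,a)$ for all $s,a$. The base case $h = H+1$ is immediate, since $\Vub_{k,H+1}$ and $V^\star_{k,H+1}$ are both identically zero by definition.

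For the inductive step, assume $\Vub_{k,h+1} \geq V^\star_{k,h+1} \geq 0$ componentwise and fix $s,a$. If $\Qub_{k,h}(s,a)$ is clipped to $V^{\max}_h$, the inequality holds trivially because $Q^\star_{k,h}(s,a) \leq V^{\max}_h$; so it suffices to treat the unclipped value and show $\hat r_k(s,a) + \hat P_k(s,a)\Vub_{k,h+1} + \psi_{k,h}(s,a) \geq r_k(s,a) + P_k(s,a)V^\star_{k,h+1}$. I would decompose the difference as
\begin{align}
&\hat r_k(s,a) - r_k(s,a) + \hat P_k(s,a)\big(\Vub_{k,h+1} - V^\star_{k,h+1}\big) \notag\\
&\quad + \big(\hat P_k(s,a) - P_k(s,a)\big)V^\star_{k,h+1} + \psi_{k,h}(s,a).
\end{align}
The middle term is nonnegative: each entry $\hat P_k(s'|s,a) \geq 0$ (entries are clipped to $[0,1]$) and $\Vub_{k,h+1} - V^\star_{k,h+1} \geq 0$ by the induction hypothesis, so this term can be dropped.

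It then remains to show that $\psi_{k,h}(s,a)$ dominates the two estimation errors. For the reward, Lemma~\ref{lem:modelbounds} gives $|\hat r_k(s,a) - r_k(s,a)| \leq \phi(N\fr_{k,s,a}, x\fr_k, \xi_{\theta\fr})$, which is exactly the second summand of $\psi_{k,h}$. For the transition term I would use that the per-entry bound of Lemma~\ref{lem:modelbounds} is uniform in $s'$ (the width $\phi(N\fp_{k,s,a}, x\fp_k, \xi_{\theta\fp})$ does not depend on $s'$) together with $V^\star_{k,h+1} \geq 0$, so that
\begin{align}
\big|(\hat P_k - P_k)(s,a)V^\star_{k,h+1}\big|
&\leq \sum_{s'} \big|\hat P_k(s'|s,a) - P_k(s'|s,a)\big|\, V^\star_{k,h+1}(s') \notag\\
&\leq \phi(N\fp_{k,s,a}, x\fp_k, \xi_{\theta\fp})\, \|V^\star_{k,h+1}\|_1 .
\end{align}
Finally $\|V^\star_{k,h+1}\|_1 \leq \|\Vub_{k,h+1}\|_1$ follows from $0 \leq V^\star_{k,h+1} \leq \Vub_{k,h+1}$ componentwise, which matches the first summand of $\psi_{k,h}$. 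Adding the two bounds shows the bracketed errors are at least $-\psi_{k,h}(s,a)$, so the unclipped value exceeds $Q^\star_{k,h}(s,a) \geq 0$ and the lower clip at $0$ is harmless. The componentwise claim for $\Vub_{k,h}$ then follows from $\Vub_{k,h}(s) = \max_a \Qub_{k,h}(s,a) \geq \max_a Q^\star_{k,h}(s,a) = V^\star_{k,h}(s)$, closing the induction.

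The only delicate point is the transition step: one must exploit that the ellipsoid confidence width is the \emph{same} for all successor states $s'$, so that the triangle/H\"older bound collapses to $\phi \cdot \|V^\star_{k,h+1}\|_1$ rather than a state-dependent quantity, and then replace the unobservable $\|V^\star_{k,h+1}\|_1$ by the observable $\|\Vub_{k,h+1}\|_1$ using the induction hypothesis. Everything else is the standard optimism telescoping.
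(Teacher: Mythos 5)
Your proposal is correct and follows essentially the same route as the paper's proof: the same backward induction with the same decomposition $\hat r_k - r_k + \hat P_k(\Vub_{k,h+1} - V^\star_{k,h+1}) + (\hat P_k - P_k)V^\star_{k,h+1} + \psi_{k,h}$, dropping the middle term via the induction hypothesis and nonnegativity of $\hat P_k$, and absorbing the reward and transition errors into $\psi_{k,h}$ via Lemma~\ref{lem:modelbounds} together with the replacement of $V^\star_{k,h+1}$ by $\Vub_{k,h+1}$. The only cosmetic differences are the order in which you apply the uniform-in-$s'$ width versus the induction hypothesis, and that you make explicit the value-level step $\Vub_{k,h}(s) = \max_a \Qub_{k,h}(s,a) \geq V^\star_{k,h}(s)$ and the harmlessness of the lower clip, which the paper leaves implicit.
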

\begin{proof}
        Consider a fixed episode $k$. For $h= H+1$ the claim holds by definition. Assume the claim holds for $h+1$ and consider
        $\Qub_{k,h}(s,a) - Q^\star_{k,h}(s,a)$. Since $Q^\star_{k,h} \leq V^{\max}_h$, this quantity is non-negative when $\Qub_{k,h}(s,a) = V^{\max}_h$. In the other case
    \begin{align}
    &\Qub_{k,h}(s,a) - Q^\star_{k,h}(s,a) \\
    \geq &
    \hat r_k(s,a)  + \hat P_k(s,a) \Vub_{k,h+1}+ \psi_{k,h}(s,a) - P_k(s,a) V^\star_{k,h+1} - r_k(s,a) \\
    = &
    \hat r_k(s,a) - r_k(s,a) + \psi_{k,h}(s,a) + \hat P_{k}(s,a)(\Vub_{k,h+1} - V^\star_{k,h+1})\\
   & 
   + (\hat P_k(s,a) - P_k(s,a))V^\star_{k,h+1} \\
   \shortintertext{by induction hypothesis and $\hat P_k(s'|s,a) \geq 0$ }
   \geq &
    \hat r_k(s,a) - r_k(s,a) + \psi_{k,h}(s,a)
   + (\hat P_k(s,a) - P_k(s,a))V^\star_{k,h+1} \\
   \geq &
    -|\hat r_k(s,a) - r_k(s,a)| + \hat \psi_{kh}(s,a)
   - \sum_{s' \in \statespace} V^\star_{k,h+1}(s')|\hat P_k(s' |s,a) - P_k(s' |s,a)| \\
   \shortintertext{by induction hypothesis}
   \geq &
    -|\hat r_k(s,a) - r_k(s,a)| + \hat \psi_{kh}(s,a)
   - \sum_{s' \in \statespace} \Vub_{h+1}(s')|\hat P_k(s' |s,a) - P_k(s' |s,a)| \\
    \shortintertext{using Lemma~\ref{lem:modelbounds}}
    \geq &
    \psi_{k,h}(s,a) - \alpha_{k,s,a} \|x_k\fr\|_{(N\fr_{k,s,a})^{-1}} - \|\Vub_{h+1}\|_1 
    \gamma_{k,s,a}\|x\fp_k\|_{(N\fp_{k,s,a})^{-1}} = 0\,.
    \end{align}
\end{proof}

Using the same technique, we can prove the following result.
\begin{lemma}[Lower bound admissible]
\label{lem:validlowerboundcon}
Outside the failure event $F$, for all episodes $k$, $h \in [H]$ and $s,a \in \saspace$
    \begin{align}
        Q^{\pi_k}_{k,h}(s,a) \geq \Qlb_{k,h}(s,a).
    \end{align}
\end{lemma}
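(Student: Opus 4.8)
The plan is to mirror the proof of Lemma~\ref{lem:validupperboundcon} almost verbatim, exchanging the roles of the upper and lower bounds, by backward induction on $h$ from $H+1$ down to $1$; the $Q$-level statement implies the corresponding $V$-level statement $\Vlb_{k,h} \leq V^{\pi_k}_{k,h}$ since $\pi_k$ selects the same action in both. The base case $h = H+1$ holds by definition, as $\Vlb_{k,H+1} = 0 = V^{\pi_k}_{k,H+1}$. For the inductive step, I fix $k,h,(s,a)$ and recall that $\Qlb_{k,h}(s,a) = 0 \vee (\hat r_k(s,a) + \hat P_k(s,a)\Vlb_{k,h+1} - \psi_{k,h}(s,a)) \wedge V^{\max}_h$. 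If the floor is active then $\Qlb_{k,h}(s,a) = 0 \leq Q^{\pi_k}_{k,h}(s,a)$ trivially; otherwise $\Qlb_{k,h}(s,a) \leq \hat r_k(s,a) + \hat P_k(s,a)\Vlb_{k,h+1} - \psi_{k,h}(s,a)$, so it suffices to lower-bound $Q^{\pi_k}_{k,h}(s,a)$ against this inner expression.

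Writing $Q^{\pi_k}_{k,h}(s,a) = r_k(s,a) + P_k(s,a)V^{\pi_k}_{k,h+1}$ and splitting
\[
P_k(s,a)V^{\pi_k}_{k,h+1} - \hat P_k(s,a)\Vlb_{k,h+1}
= \hat P_k(s,a)(V^{\pi_k}_{k,h+1} - \Vlb_{k,h+1}) + (P_k(s,a) - \hat P_k(s,a))V^{\pi_k}_{k,h+1},
\]
the induction hypothesis $\Vlb_{k,h+1} \leq V^{\pi_k}_{k,h+1}$ together with $\hat P_k(\cdot|s,a) \geq 0$ makes the first summand nonnegative, so dropping it yields
\[
Q^{\pi_k}_{k,h}(s,a) - \Qlb_{k,h}(s,a)
\geq \psi_{k,h}(s,a) - |r_k(s,a) - \hat r_k(s,a)| - \sum_{s'} V^{\pi_k}_{k,h+1}(s')\,|P_k(s'|s,a) - \hat P_k(s'|s,a)|.
\]
I would then invoke Lemma~\ref{lem:modelbounds}, which bounds the reward error by $\phi(N\fr_{k,s,a}, x\fr_k, \xi_{\theta\fr})$ and each per-coordinate transition error by $\phi(N\fp_{k,s,a}, x\fp_k, \xi_{\theta\fp})$ (these are exactly the widths appearing in $\psi_{k,h}$, since $1/\delta' = \numS(\numS\numA+\numA+H)/\delta$).

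The one ingredient not supplied by the induction hypothesis is the treatment of $V^{\pi_k}_{k,h+1}$ in the transition term: I would replace it coordinatewise by $\Vub_{k,h+1}$, using $V^{\pi_k}_{k,h+1} \leq V^\star_{k,h+1} \leq \Vub_{k,h+1}$, where the last inequality is precisely the already-established upper-bound admissibility (Lemma~\ref{lem:validupperboundcon}). Since $\Vub_{k,h+1} \geq 0$, the transition term is then at most $\|\Vub_{k,h+1}\|_1\,\phi(N\fp_{k,s,a}, x\fp_k, \xi_{\theta\fp})$, and substituting the definition $\psi_{k,h}(s,a) = \|\Vub_{k,h+1}\|_1\,\phi(N\fp_{k,s,a}, x\fp_k, \xi_{\theta\fp}) + \phi(N\fr_{k,s,a}, x\fr_k, \xi_{\theta\fr})$ makes the right-hand side exactly $0$, closing the step. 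The main (and only genuinely non-mechanical) point is this coupling: the lower confidence width is built with the factor $\|\Vub_{k,h+1}\|_1$ rather than $\|\Vlb_{k,h+1}\|_1$ so that bounding $V^{\pi_k}_{k,h+1}$ from above by $\Vub_{k,h+1}$ is what closes the inequality, which makes the lower-bound proof dependent on the upper-bound lemma. Note that no $\ell_1$ concentration of the kernel is needed here, since the ellipsoid confidence sets and the clipping to $[0,1]$ already give uniform per-coordinate control.
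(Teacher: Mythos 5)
Your proof is correct, but it closes the induction via a genuinely different decomposition than the paper, and the difference is worth recording. The paper splits
\begin{align}
P_k(s,a)V^{\pi_k}_{k,h+1} - \hat P_k(s,a)\Vlb_{k,h+1}
= P_k(s,a)\bigl(V^{\pi_k}_{k,h+1} - \Vlb_{k,h+1}\bigr) + \bigl(P_k(s,a) - \hat P_k(s,a)\bigr)\Vlb_{k,h+1},
\end{align}
i.e., it puts the \emph{true} kernel on the difference term (nonnegative by the induction hypothesis) and applies the per-coordinate model-error bound of Lemma~\ref{lem:modelbounds} to the \emph{computed} vector $\Vlb_{k,h+1}$, producing the factor $\|\Vlb_{k,h+1}\|_1$. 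This keeps the lemma self-contained: no appeal to Lemma~\ref{lem:validupperboundcon} is needed, at the price that the final comparison against $\psi_{k,h}$ (which carries $\|\Vub_{k,h+1}\|_1$) implicitly uses the deterministic, by-construction fact $\Vlb_{k,h+1} \leq \Vub_{k,h+1}$; indeed the paper's closing ``$=0$'' is, strictly speaking, a ``$\geq 0$'' for exactly this reason. You instead put the \emph{empirical} kernel on the difference term and the model error on $V^{\pi_k}_{k,h+1}$, then dominate $V^{\pi_k}_{k,h+1} \leq V^\star_{k,h+1} \leq \Vub_{k,h+1}$ via the already-established upper-bound lemma; this mirrors the paper's proof of Lemma~\ref{lem:validupperboundcon} (where the model error lands on $V^\star_{k,h+1}$ and is dominated by $\Vub_{k,h+1}$ through the induction hypothesis) and makes the cancellation against $\psi_{k,h}$ exact. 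There is no circularity in doing so, since Lemma~\ref{lem:validupperboundcon} is proved independently and holds outside the same event $F$, so your route is valid; your final remark is only slightly overstated in suggesting that the dependence on the upper-bound lemma is forced by the $\|\Vub_{k,h+1}\|_1$ factor in $\psi_{k,h}$ --- the paper's decomposition shows it can be avoided, because $\|\Vlb_{k,h+1}\|_1 \leq \|\Vub_{k,h+1}\|_1$ holds deterministically. What each approach buys: the paper's is marginally more economical and keeps the two admissibility lemmas independent, while yours makes transparent why the width must be built from the upper rather than the lower value bound.
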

\begin{proof}
        Consider a fixed episode $k$. For $h= H+1$ the claim holds by definition. Assume the claim holds for $h+1$ and consider
        $Q^{\pi_k}_{k,h}(s,a) - \Qlb_{k,h}(s,a)$. Since $Q^{\pi_k}_{k,h} \geq 0$, this quantity is non-negative when $\Qlb_{k,h}(s,a) = 0$. In the other case
    \begin{align}
    &Q^{\pi_k}_{k,h}(s,a) - \Qlb_{k,h}(s,a)\\
    = &
    P_k(s,a) V^{\pi_k}_{k,h+1} + r_k(s,a)
    - \hat r_k(s,a)  - \hat P_k(s,a) \Vlb_{k,h+1} + \psi_{k,h}(s,a) \\
    = &
    r_k(s,a) - \hat r_k(s,a) +  \psi_{k,h}(s,a) 
    + P_{k}(s,a)(V^{\pi_k}_{k,h+1} - \Vlb_{k, h+1})  + (P_k - \hat P_k)(s,a)\Vlb_{k,h+1}
   \shortintertext{by induction hypothesis and $P_k(s'|s,a) \geq 0$}
   \geq &
   \psi_{k,h}(s,a) - |r_k(s,a) - \hat r_k(s,a)| 
   - |(P_k(s,a) - \hat P_k(s,a))\Vlb_{k,h+1}| \\
   \shortintertext{using Lemma~\ref{lem:modelbounds}}
    \geq &
    \psi_{k,h}(s,a) - \alpha_{k,s,a}\|x\fr_k\|_{(N\fr_{k,s,a})^{-1}} - \|\Vlb_{h+1}\|_1 \gamma_{k,s,a}\|x\fp_k\|_{(N\fp_{k,s,a})^{-1}} = 0\,.
    \end{align}
\end{proof}

\subsection{Cumulative certificate bound}

\begin{lemma}
\label{lem:certbound_context}
Outside the failure event $F$, the cumulative certificates after $T$ episodes for all $T$ are bounded by
\begin{align}
        \sum_{k=1}^T \epsilon_k
     \leq & \tilde O \left(\sqrt{\numS^3 \numA H^2 T}V^{\max}_1 \lambda (\xi^2_{\theta\fp}  + \xi^2_{\theta\fr} + d\fp + d\fr)
    \log \frac{\xi^2_{x\fp}  + \xi^2_{x\fr}}{\lambda \delta} \right).
\end{align}
\end{lemma}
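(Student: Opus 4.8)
The plan is to bound each certificate $\epsilon_k = \Vub_{k,1}(s_{k,1}) - \Vlb_{k,1}(s_{k,1})$ by unrolling the Bellman recursion and then summing over episodes with an elliptical-potential (log-determinant) argument. First, in the non-clipped case the definitions of $\Qub$ and $\Qlb$ give the one-step relation $\Qub_{k,h}(s,a) - \Qlb_{k,h}(s,a) \leq \hat P_k(s,a)(\Vub_{k,h+1}-\Vlb_{k,h+1}) + 2\psi_{k,h}(s,a)$, and clipping to $[0,V^{\max}_h]$ preserves it. I would replace $\hat P_k$ by the true kernel $P_k$ via the per-coordinate model bound of Lemma~\ref{lem:modelbounds}, paying an additive error at most $\|\hat P_k(s,a)-P_k(s,a)\|_1 V^{\max}_{h+1} \leq \numS V^{\max}_{h+1}\gamma_{k,s,a}\|x\fp_k\|_{(N\fp_{k,s,a})^{-1}}$. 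Since $\Vub_{k,h+1}-\Vlb_{k,h+1}$ evaluated at the policy action equals $\Qub_{k,h+1}-\Qlb_{k,h+1}$, and the bounds are admissible (Lemmas~\ref{lem:validupperboundcon} and~\ref{lem:validlowerboundcon}), unrolling along the true dynamics and $\pi_k$ expresses the certificate as the visitation-weighted sum
\begin{align}
\epsilon_k \leq \sum_{h=1}^H \sum_{s,a \in \saspace} w_{k,h}(s,a)\bigl(2\psi_{k,h}(s,a) + \numS V^{\max}_{h+1}\gamma_{k,s,a}\|x\fp_k\|_{(N\fp_{k,s,a})^{-1}}\bigr),
\end{align}
where $w_{k,h}(s,a)=\prob(s_{k,h}=s,a_{k,h}=a\mid s_{k,1},\pi_k)$.

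Second, I would bound the width using $\|\Vub_{k,h+1}\|_1 \leq \numS V^{\max}_{h+1}$ together with the identity $\phi(N\fp_{k,s,a},x\fp_k,\xi_{\theta\fp}) = \gamma_{k,s,a}\|x\fp_k\|_{(N\fp_{k,s,a})^{-1}}$ (which holds because $\delta'$ matches the $\phi$-argument in Alg.~\ref{alg:ofucontext}), so that $\psi_{k,h}(s,a) \leq \numS V^{\max}_{h+1}\gamma_{k,s,a}\|x\fp_k\|_{(N\fp_{k,s,a})^{-1}} + \alpha_{k,s,a}\|x\fr_k\|_{(N\fr_{k,s,a})^{-1}}$. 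The dominant contribution to $\sum_{k\le T}\epsilon_k$ is then a sum of $w_{k,h}(s,a)\,\numS V^{\max}_{h+1}\gamma_{k,s,a}\|x\fp_k\|_{(N\fp_{k,s,a})^{-1}}$ plus the analogous reward term. The next step passes from the \emph{expected} weights $w_{k,h}$ to the \emph{realized} visit indicators $\one\{s_{k,h}=s,a_{k,h}=a\}$: outside the failure event $F^O$ the signed discrepancy between expected and observed visitations is $\otil(\numS H\sqrt{T})$, so after bounding the widths by a uniform $\gamma_{\max}$ to let them factor through this discrepancy, the exchange costs only a lower-order additive term.

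Third, with realized counts I would invoke the log-determinant lemma separately for each $(s,a)$. Because $N\fp_{k,s,a}$ is refreshed only between episodes while $(s,a)$ may be visited up to $H$ times within one episode, a batched version of the lemma gives $\sum \|x\fp_k\|^2_{(N\fp_{k,s,a})^{-1}} = \otil(H\,d\fp)$ over the $n_{s,a}$ visits to $(s,a)$. Cauchy--Schwarz over those visits yields $\sum \|x\fp_k\|_{(N\fp_{k,s,a})^{-1}} \leq \sqrt{n_{s,a}}\,\otil(\sqrt{H d\fp})$, and a second Cauchy--Schwarz over $\saspace$ with $\sum_{s,a}n_{s,a}=HT$ gives $\sum_{s,a}\sqrt{n_{s,a}}\leq\sqrt{\numS\numA HT}$. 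Bounding $\gamma_{k,s,a},\alpha_{k,s,a}=\otil(\sqrt{\lambda}\,\xi+\sqrt{d})$, using $V^{\max}_{h+1}\leq V^{\max}_1$, and collecting the factors $\numS\cdot V^{\max}_1\cdot\sqrt{d}\cdot\sqrt{Hd}\cdot\sqrt{\numS\numA HT}$ reproduces the claimed $\otil(\sqrt{\numS^3\numA H^2 T}\,V^{\max}_1\,\lambda(\xi^2_{\theta\fp}+\xi^2_{\theta\fr}+d\fp+d\fr)\log\frac{\xi^2_{x\fp}+\xi^2_{x\fr}}{\lambda\delta})$.

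The main obstacle I anticipate is the elliptical-potential step under the \emph{per-episode} (batched) design-matrix updates: within a single episode a pair $(s,a)$ can be visited repeatedly, all visits sharing the same context $x_k$ and the same matrix $N\fp_{k,s,a}$, whereas the standard potential lemma assumes the design matrix is refreshed after every observation. Controlling $\sum_k m_k\|x_k\|^2_{(N\fp_{k,s,a})^{-1}}$ for multiplicities $m_k\le H$ carefully is exactly what injects the extra horizon factor and must be tracked to land on the stated exponents. The secondary subtlety is justifying the expected-to-realized visitation exchange through $F^O$ despite the \emph{non-uniform} width weighting, which forces the uniform $\gamma_{\max}$ bound before the concentration can be applied and must be shown to remain lower order.
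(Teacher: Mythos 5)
Your proposal follows the paper's proof in all essential respects: the same one-step bound $\Qub_{k,h}-\Qlb_{k,h}\leq 2\psi_{k,h}+\hat P_k(s,a)(\Vub_{k,h+1}-\Vlb_{k,h+1})$, the same substitution of $\hat P_k$ by $P_k$ via Lemma~\ref{lem:modelbounds}, the same unrolling into a visitation-weighted sum, the same expected-to-realized exchange through $F^O$, and the same endgame of uniform radius bounds, Cauchy--Schwarz, and a per-$(s,a)$ elliptical potential. In particular, your anticipated ``batched update'' obstacle is resolved in the paper exactly as you propose: bound the within-episode multiplicity of each $(s,a)$ by $H$, then apply Lemma~\ref{lem:ellipticalpotential} to the subsequence of episodes in which $(s,a)$ is actually visited, giving $\otil(H d\fp)$ per pair; your per-pair-then-over-$\saspace$ arrangement of Cauchy--Schwarz is an immaterial reordering of the paper's global one over $(k,h)$.

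The one step that would fail as written is the exchange. Your displayed visitation-weighted bound drops the truncation $V^{\max}_h\wedge(\cdot)$, and you then push the widths through the $F^O$ discrepancy after bounding them by a uniform $\gamma_{\max}$. But the best uniform bound on the transition width is of order $\numS V^{\max}_1\sqrt{d\fp}$ (up to $\sqrt{\lambda}\,\xi_{\theta\fp}$ and logs), so this exchange costs $\otil\left(\numS V^{\max}_1\sqrt{d\fp}\cdot\numS H\sqrt{T}\right)=\otil\left(\numS^2 H V^{\max}_1\sqrt{d\fp}\sqrt{T}\right)$, which is \emph{not} lower order relative to the claimed bound $\otil\left(\numS^{3/2}\sqrt{\numA}\,H V^{\max}_1 (d\fp+d\fr)\sqrt{T}\right)$ whenever $\numS\gtrsim\numA d\fp$; in that regime your claim that the exchange ``costs only a lower-order additive term'' breaks down. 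The paper avoids this by carrying the clipping through the exchange: the summand is $V^{\max}_h\wedge(\text{widths})\leq V^{\max}_h$, so the discrepancy from $F^O$ is weighted only by $V^{\max}_h$ and contributes $\otil\left(V^{\max}_1\numS H\sqrt{T}\right)$, which is dominated by the main term in every regime. Since you established the clipped recursion yourself (``clipping to $[0,V^{\max}_h]$ preserves it''), the fix is simply to keep that truncation as the uniform envelope in the exchange rather than substituting $\gamma_{\max}$.
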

\begin{proof}
Let $\psi_{k,h}(s,a) = \alpha_{k,s,a}\|x\fr_k\|_{(N\fr_{k,s,a})^{-1}} +  V^{\max}_{h+1} \numS \gamma_{k,s,a}\|x\fp_k\|_{(N\fp_{k,s,a})^{-1}}$
We bound the difference between upper and lower Q-estimate as
\begin{align}
    &\Qub_{k,h}(s,a) - \Qlb_{k,h}(s,a)\\
    &\leq 2\psi_{k,h}(s,a) 
         + \hat P_k(s,a)^\top (\Vub_{k,h+1} - \Vlb_{k, h+1})\\
    &= 2\psi_{k,h}(s,a)
        + (\hat P_k(s,a) - P_k(s,a))^\top (\Vub_{k,h+1} - \Vlb_{k, h+1})
         + P_k(s,a)(\Vub_{k,h+1} - \Vlb_{k, h+1})\\
    &\leq 2\psi_{k,h}(s,a) 
         + V^{\max}_{h+1}\|\hat P_k(s,a) - P_k(s,a)\|_1
          + P_k(s,a)(\Vub_{k,h+1} - \Vlb_{k, h+1})
\end{align}
and by construction we also can bound $\Qub_{k,h}(s,a) - \Qlb_{k,h}(s,a) \leq V^{\max}_{h}$. Applying both bounds above recursively, we arrive at
\begin{align}
		&\epsilon_k = (\Vub_{k,1} - \Vlb_{k,1})(s_{k,1}) = (\Qub_{k,1} - \Qlb_{k,1})(s_{k,1},\pi_k(s_{k,1}, 1))\\
		&\leq  \sum_{s,a \in \saspace}\sum_{h=1}^H \prob(s_h=s, a_h=a |s_{k,1}, \pi_k) [V_{h}^{\max} \wedge (2\psi_{k,h}(s,a) 
         + V_{h+1}^{\max}\|\hat P_k(s,a) - P_k(s,a)\|_1)]
		\\
		&\leq  \sum_{s,a \in \saspace}\sum_{h=1}^H \prob(s_h=s, a_h=a |s_{k,1}, \pi_k) [V_{h}^{\max} \wedge
		(2\alpha_{k,s,a}\|x\fr_k\|_{(N\fr_{k,s,a})^{-1}} + 3 V^{\max}_{h+1} \numS\gamma_{k,s,a}\|x\fp_k\|_{(N\fp_{k,s,a})^{-1}})]
\end{align}
where we used Lemma~\ref{lem:modelbounds} in the last step.
We are now ready to bound the cumulative certificates after $T$ episodes as 
\begin{align}
    \lefteqn{\sum_{k=1}^T \epsilon_k} \nonumber \\
     \leq & \sum_{k=1}^T  \sum_{s,a \in \saspace}\sum_{h=1}^H \prob(s_{k,h}=s, a_{k,h}=a |s_{k,1}, \pi_k) [V_{h}^{\max} \wedge
		(2\alpha_{k,s,a}\|x\fr_k\|_{(N\fr_{k,s,a})^{-1}} + 3 V^{\max}_{h+1} \numS \gamma_{k,s,a}\|x\fp_k\|_{(N\fp_{k,s,a})^{-1}})]\\
     \leq & \sum_{k=1}^T \sum_{h=1}^H [V_h^{\max} \wedge
		(2\alpha_{k,s_{k,h},a_{k,h}}\|x\fr_k\|_{(N\fr_{k,s_{k,h},a_{k,h}})^{-1}} + 3 V^{\max}_{h+1} \numS \gamma_{k,s_{k,h},a_{k,h}}\|x\fp_k\|_{(N\fp_{k,s_{k,h},a_{k,h}})^{-1}})]\\
     &+ \sum_{k=1}^T \sum_{s,a \in \saspace}\sum_{h=1}^H [\prob(s_{k,h}=s, a_{k,h}=a |s_{k,1}, \pi_k) - \one\{s = s_{k,h}, a = a_{k,h}\}] V^{\max}_h\\
     \intertext{applying definition of failure event $F^O$}
     \leq & \sum_{k=1}^T \sum_{h=1}^H [V_h^{\max} \wedge
		(2\alpha_{k,s_{k,h},a_{k,h}}\|x\fr_k\|_{(N\fr_{k,s_{k,h},a_{k,h}})^{-1}} + 3 V^{\max}_{h+1} \numS \gamma_{k,s_{k,h},a_{k,h}}\|x\fp_k\|_{(N\fp_{k,s_{k,h},a_{k,h}})^{-1}})]\\
		&+ V_1^{\max} \numS H \sqrt{T  \log \frac{6 \log (2T)}{\delta'}}\\
	\shortintertext{splitting reward and transition terms}
	     \leq & \sum_{k=1}^T \sum_{h=1}^H [V_h^{\max} \wedge
		2\alpha_{k,s_{k,h},a_{k,h}}\|x\fr_k\|_{(N\fr_{k,s_{k,h},a_{k,h}})^{-1}}]
		\label{eqn:term1_ellipsoid_potential}\\
		&+ \sum_{k=1}^T \sum_{h=1}^H [V_h^{\max} \wedge
		 3 V^{\max}_{h+1} \numS\gamma_{k,s_{k,h},a_{k,h}}\|x\fp_k\|_{(N\fp_{k,s_{k,h},a_{k,h}})^{-1}}]
		 \label{eqn:term2_ellipsoid_potential}
		 \\
		&+ V_1^{\max} \numS H \sqrt{T  \log \frac{6 \log (2T)}{\delta'}}.
		\label{eqn:term3_martingale}
\end{align}
Before bounding the first two terms further, we first derive the following useful inequality using AM-GM inequality which holds for any $s \in \actionspace$ and $s \in \statespace$
\begin{align}
\log \frac{\det N\fr_{k,s,a}}{\det(\lambda I)}
\leq \log \frac{ \left(\frac 1 d\fr \tr N\fr_{k, s, a}\right)^{d\fr}}{\lambda^{d\fr}}
= d\fr \log \frac{ \tr N\fr_{k, s, a}}{d\fr \lambda}
\leq 
 d\fr\log \frac{ d\fr \lambda + \xi_{x\fr}^2 (k-1) H }{d\fr \lambda}\label{eqn:logdetineq}
\end{align}
where in the last inequality we used the fact that $N\fr_{k, s, a}$ is the sum of $\lambda I$ and at most $H(k-1)$ outer products of feature vectors. Analogously, the following inequality holds for the covariance matrix of the transition features
\begin{align}
    \log \frac{\det N\fp_{k,s,a}}{\det(\lambda I)}
\leq 
 d\fp\log \frac{ d\fp \lambda + \xi_{x\fp}^2 (k-1) H }{d\fp \lambda}.
\end{align}
This inequality allows us to upper-bound for $k \leq T$
\begin{align}
    \alpha_{k,s_{k,h},a_{k,h}} 
    =& \sqrt{\lambda}\xi_{\theta\fr} + \sqrt{\frac 1 2 \log \frac 1 {\delta'} + \frac 1 4 \log \frac{\det N\fr_{k,s_{k,h},a_{k,h}}}{\det (\lambda I)}}\\
    \leq & \sqrt{\lambda}\xi_{\theta\fr} + \sqrt{\frac 1 2 \log \frac 1 {\delta'} + \frac 1 4 d\fr\log \frac{ d\fr \lambda + \xi_{x\fr}^2 (k-1) H }{d\fr \lambda}}\\
        \leq & \sqrt{\lambda}\xi_{\theta\fr} + \sqrt{\frac 1 2 d\fr\log \frac{ d \lambda + \xi_{x\fr}^2 HT }{d\fr \lambda \delta'}} \\
        \shortintertext{using the fact that $\sqrt{a} + \sqrt{b} \leq 2 \sqrt{a + b}$ for all $a,b \in \RR_+$}
        \leq & 2\sqrt{\lambda \xi_{\theta\fr}^2 + \frac 1 2 d\fr\log \frac{ d \lambda + \xi_{x\fr}^2 HT }{d\fr \lambda \delta'}} \\
        \leq & 2 V_1^{\max}\sqrt{\frac 1 4 + \lambda \xi_{\theta\fr}^2 + \frac 1 2 d\fr\log \frac{ d\fr \lambda + \xi_{x\fr}^2 HT }{d\fr \lambda \delta'}} 
        =: \alpha_T. \label{eqn:alphaT}
\end{align}
Note that the last inequality ensures $\alpha_T \geq V_1^{\max}$.
We now use $\alpha_T$ to bound the term in Equation~\eqref{eqn:term1_ellipsoid_potential}
\begin{align}
    & \sum_{k=1}^T \sum_{h=1}^H [V_h^{\max} \wedge
	2\alpha_{k,s_{k,h},a_{k,h}}\|x\fr_k\|_{(N\fr_{k,s_{k,h},a_{k,h}})^{-1}}]
	    \leq 
	        \sum_{k=1}^T \sum_{h=1}^H [V_h^{\max} \wedge
	2\alpha_T\|x\fr_k\|_{(N\fr_{k,s_{k,h},a_{k,h}})^{-1}}]\\
	    \leq 
	        & 2 \alpha_T \sum_{k=1}^T \sum_{h=1}^H [1 \wedge
	\|x\fr_k\|_{(N\fr_{k,s_{k,h},a_{k,h}})^{-1}}]\\
	\shortintertext{using Cauchy-Schwarz inequality}
		    \leq 
	        & \sqrt{4 \alpha_T^2 TH \sum_{k=1}^T \sum_{h=1}^H [1 \wedge
	\|x\fr_k\|^2_{(N\fr_{k,s_{k,h},a_{k,h}})^{-1}}]}\,.
	\label{eqn:sqrtpotential1}
\end{align}
Leveraging Lemma~\ref{lem:ellipticalpotential}, we can bound the elliptical potential inside the square-root as
\begin{align}
    \sum_{k=1}^T \sum_{h=1}^H [1 \wedge
	\|x\fr_k\|^2_{(N\fr_{k,s_{k,h},a_{k,h}})^{-1}}]
= & \sum_{s,a \in \saspace}\sum_{h=1}^H \sum_{k=1}^T
\one\{s = s_{k,h}, a = a_{k,h}\}[1 \wedge
		\|x\fr_k\|^2_{(N\fr_{k,s,a})^{-1}}]\\
		\leq &   \sum_{s,a \in \saspace} H 
		\sum_{k=1}^T  [1 \wedge
		\|x\fr_k\|^2_{(N\fr_{k,s,a})^{-1}} ]
		\leq \sum_{s,a \in \saspace} 2H \log \frac{\det N\fr_{k,s,a}}{\det \lambda I}\\
		\shortintertext{applying Equation~\eqref{eqn:logdetineq}}
		\leq &
		2\numS \numA H d\fr \log \frac{d\fr \lambda + \xi_{x\fr}^2 HT}{d\fr \lambda}\\
		\shortintertext{and applying the definition of $\alpha_T$}
		\leq &
		2\numS \numA H \frac{\alpha_T^2}{2(V^{\max}_1)^2} \leq \frac{ \numS \numA H \alpha_T^2}{(V^{\max}_1)^2}\,.
\end{align}
We plug this bound back in \eqref{eqn:sqrtpotential1} to get
\begin{align}
\lefteqn{\sum_{k=1}^T \sum_{h=1}^H [V_h^{\max} \wedge 2\alpha_{k,s_{k,h},a_{k,h}}\|x\fr_k\|_{(N\fr_{k,s_{k,h},a_{k,h}})^{-1}}]} \nonumber \\
	\leq &
	\sqrt{\frac{4\alpha_T^4 \numS \numA H^2 T}{(V^{\max}_1)^2}}
	= 
	\sqrt{\numS \numA H^2 T}\frac{2 \alpha_T^2}{V^{\max}_1}
	\\
	\leq &
	\sqrt{\numS \numA H^2 T} V_1^{\max} \left[2 +  8\lambda \xi^2_{\theta\fr} + 4 d\fr\log \frac{d\fr \lambda + \xi_{x\fr}^2 HT}{d\fr \lambda \delta'}\right].
	\label{eqn:term1_ifinal}
\end{align}
After deriving this upper bound on the term in Equation~\eqref{eqn:term1_ellipsoid_potential}, we bound the term in Equation~\eqref{eqn:term2_ellipsoid_potential} in similar fashion. We start with an upper bound on $\numS \gamma_{k, s_{k,h}, a_{k,h}}$ which holds for $k \leq T$:
\begin{align}
    \numS \gamma_{k, s_{k,h}, a_{k,h}}
            \leq &\sqrt{1 + 4\lambda \numS^2 \xi_{\theta\fp}^2 + 2 \numS^2 d\fp\log \frac{ d\fp \lambda + \xi_{x\fp}^2 HT }{d\fp \lambda \delta'} }
        =: \gamma_T \,, \label{eqn:gammaT}
\end{align}
which is by construction at least $1$. We now use this definition to bound as above
\begin{align}
    & \sum_{k=1}^T \sum_{h=1}^H [V_h^{\max} \wedge
		 3 V^{\max}_{h+1} \numS \gamma_{k,s_{k,h},a_{k,h}}\|x\fp_k\|_{(N\fp_{k,s_{k,h},a_{k,h}})^{-1}}]
		 \leq
	    3V_1^{\max} \gamma_T \sum_{k=1}^T \sum_{h=1}^H [1 \wedge
		 \|x\fp_k\|_{(N\fp_{k,s_{k,h},a_{k,h}})^{-1}}]	 \\
		 \leq &
	    3V_1^{\max} \gamma_T \sqrt{TH \sum_{k=1}^T \sum_{h=1}^H [1 \wedge
		 \|x\fp_k\|^2_{(N\fp_{k,s_{k,h},a_{k,h}})^{-1}}]}
		 \leq
	    3V_1^{\max} \gamma_T \sqrt{TH 2\numS \numA H d\fp \log \frac{d\fp \lambda + \xi^2_{x\fp} HT}{d\fp \lambda}}\\
	    		 \leq &
	    3V_1^{\max} \gamma_T \sqrt{2\numS \numA H^2 T  \frac{\gamma_T^2}{2\numS^2}}
	    \leq 
	    	\sqrt{\numS^3 \numA H^2 T} V_1^{\max} \left[3 +  12\lambda \xi^2_{\theta\fp} + 6 d\fp\log \frac{d\fp \lambda + \xi_{x\fp}^2 HT}{d\fp \lambda \delta'}\right].
	    	\label{eqn:term2_final}
\end{align}
Combining \eqref{eqn:term3_martingale}, \eqref{eqn:term1_ifinal} and \eqref{eqn:term2_final}, the cumulative certificates after $T$ episodes are bounded by
\begin{align}
    \sum_{k=1}^T \epsilon_k
    \leq & \sqrt{\numS^3 \numA H^2 T} V_1^{\max} \left[14 +  12\lambda (\xi^2_{\theta\fp}  + \xi^2_{\theta\fr}) + 6 (d\fp + d\fr)\log \frac{(d\fp + d\fr) \lambda + (\xi_{x\fp}^2 + \xi_{x\fr}^2) HT}{(d\fp \wedge d\fr) \lambda \delta'}\right]\\
    & + V_1^{\max} \numS H \sqrt{T  \log \frac{6 \log (2T)}{\delta'}}\\
    = & \tilde O \left(\sqrt{\numS^3 \numA H^2 T}V^{\max}_1 \lambda (\xi^2_{\theta\fp}  + \xi^2_{\theta\fr} + d\fp + d\fr)
    \log \frac{\xi^2_{x\fp}  + \xi^2_{x\fr}}{\lambda \delta}\right)\,.
\end{align}
\end{proof}

\subsection{Proof of Theorem~\ref{thm:ufolsic_certs}}
We are now ready to assemble the arguments above and prove the cumulative IPOC bound for Algorithm~\ref{alg:ofucontext}:
\begin{proof}
By Lemma~\ref{lem:context_failureprob}, the failure event $F$ has probability at most $\delta$. Outside the failure event, for every episode $k$, the upper and lower Q-value estimates are valid upper bounds on the optimal Q-function and lower bounds on the Q-function of the current policy $\pi_k$, respectively (Lemmas~\ref{lem:validupperboundcon} and \ref{lem:validlowerboundcon}). Further, Lemma~\ref{lem:certbound_context} shows that the cumulative certificates grow at the desired rate
\begin{align}
    \tilde O \left(\sqrt{\numS^3 \numA H^2 T}V^{\max}_1 \lambda (\xi^2_{\theta\fp}  + \xi^2_{\theta\fr} + d\fp + d\fr)
    \log \frac{\xi^2_{x\fp}  + \xi^2_{x\fr}}{\lambda \delta}\right).
\end{align}
\end{proof}

\subsection{Technical Lemmas}
We now state two existing technical lemmas used in our proof.
\begin{lemma}[Elliptical confidence sets; Theorem~20.1 in \citet{lattimore2018bandit}]
\label{lem:ellipticalconf}
Let $\lambda > 0$, $\theta \in \RR^d$ and $(r_i)_{i \in \NN}$ and $(x_i)_{i \in \NN}$ random processes adapted to a filtration $\Fcal_i$ so that $r_i - x_i^\top \theta$ are conditionally 1-sub-Gaussian. Then with probability at least $1 - \delta$ for all $k \in \NN$
\begin{align}
    \|\theta - \tilde \theta_k\|_{N_{k}(\lambda)} \leq 
    \sqrt{\lambda}\|\theta\|_2 + 
    \sqrt{2 \log \frac{1}{\delta} + \log \frac{\det(N_{k}(\lambda))}{\det(\lambda I)}}
\end{align}
where $N_{k}(\lambda) = \lambda I + \sum_{i = 1}^k x_{i} x_i^\top$ is the covariance matrix
and $\tilde \theta_k = N_k(\lambda)^{-1} \sum_{i=1}^k r_i x_i$ is the least-squares estimate.
\end{lemma}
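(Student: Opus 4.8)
The plan is to reduce the claim to a \emph{self-normalized} tail bound on the noise martingale and then to establish that bound by the method of mixtures (Laplace method). First I would decompose the estimation error algebraically. Writing $\eta_i = r_i - x_i^\top\theta$ for the conditionally $1$-sub-Gaussian noise and $S_k = \sum_{i=1}^k \eta_i x_i$, substituting $r_i = x_i^\top\theta + \eta_i$ into the least-squares estimate gives $\tilde\theta_k - \theta = N_k^{-1}(S_k - \lambda\theta)$, where $N_k = N_k(\lambda) = \lambda I + \sum_{i=1}^k x_i x_i^\top$. Hence $\|\tilde\theta_k - \theta\|_{N_k} = \|S_k - \lambda\theta\|_{N_k^{-1}}$, and the triangle inequality together with $N_k \succeq \lambda I$ (so that $\lambda\|\theta\|_{N_k^{-1}} \leq \sqrt\lambda\,\|\theta\|_2$) yields $\|\tilde\theta_k - \theta\|_{N_k} \leq \|S_k\|_{N_k^{-1}} + \sqrt\lambda\,\|\theta\|_2$. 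It therefore suffices to show that, with probability at least $1-\delta$, simultaneously for all $k$, one has $\|S_k\|_{N_k^{-1}}^2 \leq 2\log(1/\delta) + \log(\det N_k/\det(\lambda I))$.

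For the self-normalized bound I would use an exponential supermartingale mixed over a Gaussian prior. For each fixed $\nu\in\RR^d$ the process $M_k^\nu = \exp(\nu^\top S_k - \tfrac12\|\nu\|_{V_k}^2)$, with $V_k = \sum_{i=1}^k x_i x_i^\top$, is a nonnegative supermartingale adapted to $\Fcal_k$ with $\Ex[M_k^\nu]\le 1$; this is exactly where conditional $1$-sub-Gaussianity of $\eta_i$ enters, since $\Ex[\exp(\langle\nu,\eta_i x_i\rangle - \tfrac12\langle\nu,x_i\rangle^2)\mid\Fcal_{i-1}]\le 1$. Mixing over $\nu$ against the density of $\nu\sim\Ncal(0,\lambda^{-1}I)$ produces another nonnegative supermartingale $\bar M_k = \int M_k^\nu\, dh(\nu)$ with $\Ex[\bar M_k]\le 1$, and the Gaussian integral can be evaluated in closed form by completing the square in the exponent $\nu^\top S_k - \tfrac12\nu^\top N_k\nu$, giving $\bar M_k = (\det(\lambda I)/\det N_k)^{1/2}\exp(\tfrac12\|S_k\|_{N_k^{-1}}^2)$.

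Finally I would apply Ville's maximal inequality for nonnegative supermartingales, $\prob(\sup_k \bar M_k \geq 1/\delta)\leq \delta$, to conclude that with probability at least $1-\delta$, for all $k$, $\tfrac12\|S_k\|_{N_k^{-1}}^2 \leq \log(1/\delta) + \tfrac12\log(\det N_k/\det(\lambda I))$, which rearranges to the self-normalized bound claimed above; combined with the first paragraph this proves the lemma. The main obstacle is the second step: setting up the exponential supermartingale and, crucially, mixing it against a Gaussian prior and evaluating the integral so that the deterministic normalizer becomes precisely the log-determinant ratio, all while preserving the supermartingale property uniformly in $k$ so that a single maximal inequality covers every $k$ at once (rather than paying a union bound over time). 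Everything else is linear algebra.
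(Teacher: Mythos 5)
Your proposal is correct and complete: the decomposition $\tilde\theta_k-\theta=N_k^{-1}(S_k-\lambda\theta)$ with the triangle inequality and $N_k\succeq\lambda I$, the mixed supermartingale $\bar M_k=\bigl(\det(\lambda I)/\det N_k\bigr)^{1/2}\exp\bigl(\tfrac12\|S_k\|_{N_k^{-1}}^2\bigr)$ obtained by Gaussian mixing, and Ville's maximal inequality yield exactly the stated bound, with the only implicit assumption being the standard convention that $x_i$ is $\Fcal_{i-1}$-measurable, which is what conditional sub-Gaussianity of $r_i - x_i^\top\theta$ presupposes here. The paper itself does not prove this lemma at all---it imports it verbatim as Theorem~20.1 of \citet{lattimore2018bandit}---and your method-of-mixtures argument is precisely the proof by which that cited result is established, so you have in effect reproduced the external proof that the paper takes as a black box.
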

\begin{lemma}[Elliptical potential; Lemma~19.1 in \citet{lattimore2018bandit}]
\label{lem:ellipticalpotential}
Let $x_1, \dots, x_n \in \RR^d$ with $L \geq \max_{i} \|x_i\|_2$ 
and $N_i = N_0 + \sum_{j=1}^i x_j x_j^\top$ with $N_0$ being psd. Then
\begin{align}
    \sum_{i=1}^n 1 \wedge \|x_i\|_{N_{i-1}^{-1}} 
    \leq 2 \log \frac{\det N_n}{\det N_0}
    \leq 2 d \log \frac{\tr(N_0) + n L^2}{d\det(N_0)^{1/d}}.
\end{align}
\end{lemma}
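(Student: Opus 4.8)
The plan is to reduce the sum to a telescoping product of determinants and then control that product in two different ways. (Here and below I read the summand as the \emph{squared} Mahalanobis norm $1 \wedge \|x_i\|_{N_{i-1}^{-1}}^2$, matching the cited lemma and its use in the proof of Lemma~\ref{lem:certbound_context}; with the literal unsquared norm the determinant bound cannot hold, as a small-$\|x_i\|$ term already violates it.) First I would record the one-step determinant identity: since $N_i = N_{i-1} + x_i x_i^\top$ with $N_{i-1}$ invertible (guaranteed as soon as $N_0 \succ 0$), the matrix determinant lemma gives $\det N_i = \det N_{i-1}\,(1 + x_i^\top N_{i-1}^{-1} x_i) = \det N_{i-1}\,(1 + \|x_i\|_{N_{i-1}^{-1}}^2)$. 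Multiplying over $i = 1, \dots, n$ telescopes to $\det N_n / \det N_0 = \prod_{i=1}^n (1 + \|x_i\|_{N_{i-1}^{-1}}^2)$, so that
\[
\log \frac{\det N_n}{\det N_0} = \sum_{i=1}^n \log\!\left(1 + \|x_i\|_{N_{i-1}^{-1}}^2\right).
\]

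For the first inequality I would then bound each summand using the elementary fact that $\min(1,t) \le 2\log(1+t)$ for all $t \ge 0$. I would verify this by checking that $g(t) = 2\log(1+t) - t$ satisfies $g(0)=0$ and $g'(t) = (1-t)/(1+t) \ge 0$ on $[0,1]$, hence $g \ge 0$ there, while for $t \ge 1$ one has $2\log(1+t) \ge 2\log 2 > 1 = \min(1,t)$. Substituting $t = \|x_i\|_{N_{i-1}^{-1}}^2$ and summing gives
\[
\sum_{i=1}^n 1 \wedge \|x_i\|_{N_{i-1}^{-1}}^2 \le 2\sum_{i=1}^n \log\!\left(1 + \|x_i\|_{N_{i-1}^{-1}}^2\right) = 2\log\frac{\det N_n}{\det N_0}.
\]

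For the second inequality I would bound $\det N_n$ from above by AM--GM applied to its eigenvalues $\lambda_1, \dots, \lambda_d \ge 0$: $\det N_n = \prod_j \lambda_j \le (\tfrac1d \sum_j \lambda_j)^d = (\tr N_n / d)^d$. Since $\tr N_n = \tr N_0 + \sum_{i=1}^n \|x_i\|_2^2 \le \tr N_0 + n L^2$, this yields $\det N_n \le ((\tr N_0 + nL^2)/d)^d$, and therefore
\[
2\log\frac{\det N_n}{\det N_0} \le 2\Bigl(d\log\tfrac{\tr N_0 + nL^2}{d} - \log\det N_0\Bigr) = 2d\log\frac{\tr N_0 + n L^2}{d\,\det(N_0)^{1/d}},
\]
which is the claimed right-hand bound.

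The argument is classical, so I do not expect a genuine obstacle; the points needing care are only (i) well-definedness of $N_{i-1}^{-1}$ and of the determinant identity, which requires $N_0$ to be positive definite — so I would read the ``psd'' hypothesis as strict positivity, exactly as in the application with $N_0 = \lambda I$, $\lambda > 0$ — and (ii) pinning down the constant $2$ in $\min(1,t) \le 2\log(1+t)$, which is just tight enough to absorb the worst case near $t = 1$ and is what propagates into the factor $2d$ in the final bound.
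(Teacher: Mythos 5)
Your proposal is correct and follows essentially the same route as the paper's source for this statement: the paper does not reprove the lemma but cites Lemma~19.4 of \citet{lattimore2018bandit}, whose proof is exactly your argument --- the matrix determinant lemma telescoped into $\log\frac{\det N_n}{\det N_0}=\sum_{i=1}^n\log\bigl(1+\|x_i\|^2_{N_{i-1}^{-1}}\bigr)$, the elementary bound $1\wedge t\leq 2\log(1+t)$, and AM--GM on the eigenvalues of $N_n$. You were also right on both reading choices: the summand must be the squared norm $1\wedge\|x_i\|^2_{N_{i-1}^{-1}}$ (the unsquared display is a typo, and the paper indeed only ever applies the lemma with squared norms, e.g.\ in the proof of Lemma~\ref{lem:certbound_context}), and ``psd'' must be read as positive definite, as in the application with $N_0=\lambda I$, $\lambda>0$.
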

\section{Mistake IPOC Bound for Algorithm~\ref{alg:ofucontext}?}
\label{sec:nomistakebound}
By Proposition~\ref{prop:ipoc_properties_mis}, a mistake IPOC bound is stronger than the cumulative version we proved for Algorithm~\ref{alg:ofucontext}. One might wonder whether Algorithm~\ref{alg:ofucontext} also satisfies this stronger bound, but this is not the case:
\begin{proposition}
For any $\epsilon < 1$, there is an MDP with linear side information such that Algorithm~\ref{alg:ofucontext} outputs certificates $\epsilon_k \geq \epsilon$ infinitely often with probability $1$.
\label{prop:nomistakebound}
\end{proposition}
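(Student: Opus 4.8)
The plan is to exhibit a single, essentially trivial MDP on which the reward confidence width of Algorithm~\ref{alg:ofucontext} fails to shrink, so that the certificate saturates its trivial upper bound in all but finitely many episodes. Concretely, I would take $\numS=\numA=1$, horizon $H=1$, and reward context dimension $d\fr=1$. With $H=1$ the transition part is irrelevant, since $\Vub_{k,2}=\Vlb_{k,2}=0$ makes $\psi_{k,1}(s,a)$ collapse to its reward term $\phi(N\fr_{k,s,a},x\fr_k,\xi_{\theta\fr})$. I set the reward parameter to $\theta\fr=0$, so rewards are identically $0$ and trivially lie in $[0,1]$, and I feed the algorithm the geometrically growing context sequence $x\fr_k=2^k$. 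Since the unique transition and the (zero) rewards are deterministic, every quantity the algorithm computes is deterministic, so it suffices to show $\epsilon_k\ge\epsilon$ for all large $k$.

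First I would unwind the certificate. As the unique state--action pair $(s_0,a_0)$ is visited exactly once per episode, $N\fr_k=\lambda+\sum_{i=1}^{k-1}(x\fr_i)^2=\lambda+\tfrac{4^k-4}{3}$ and $\hat r_k=0$. Writing $\phi_k\defeq\phi(N\fr_k,x\fr_k,\xi_{\theta\fr})\ge 0$ and using $V^{\max}_1=H=1$ together with the clipping in the definitions of $\Qub$ and $\Qlb$, one gets $\Qub_{k,1}(s_0,a_0)=0\vee\phi_k\wedge 1=\min(\phi_k,1)$ and $\Qlb_{k,1}(s_0,a_0)=0\vee(-\phi_k)\wedge 1=0$. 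With a single action this gives $\Vub_{k,1}(s_{1,1})=\min(\phi_k,1)$ and $\Vlb_{k,1}(s_{1,1})=0$, so the certificate is exactly $\epsilon_k=\min(\phi_k,1)$, and the entire statement reduces to proving $\phi_k\to\infty$.

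The crux --- and the precise manifestation of the ``non-decreasing ellipsoid confidence sets'' --- is to show that the predictive width does not vanish even though $N\fr_k\to\infty$. Recall $\phi_k=\bigl[\sqrt{\lambda}\,\xi_{\theta\fr}+\sqrt{\tfrac12\ln\tfrac{\numS(\numS\numA+\numA+H)}{\delta}+\tfrac14\ln\tfrac{\det N\fr_k}{\det(\lambda I)}}\bigr]\,\|x\fr_k\|_{(N\fr_k)^{-1}}$, and I would bound the two factors separately. For the ellipsoidal factor, $\|x\fr_k\|^2_{(N\fr_k)^{-1}}=\tfrac{4^k}{\lambda+(4^k-4)/3}\to 3$, so it stays bounded away from $0$; this is the key step and the only place the geometric growth of the contexts is used, since for \emph{bounded} contexts this quantity would instead decay like $1/k$ and force $\phi_k\to 0$. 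For the bracketed ``radius'' factor, $\tfrac14\ln\tfrac{\det N\fr_k}{\det(\lambda I)}=\tfrac14\ln\tfrac{N\fr_k}{\lambda}=\Theta(k)\to\infty$. Multiplying the two, $\phi_k\to\infty$, hence $\epsilon_k=\min(\phi_k,1)=1$ for every $k$ beyond some $k_0$. Since $\epsilon<1$, we obtain $\epsilon_k=1>\epsilon$ for infinitely many (indeed all large) $k$, deterministically, which proves the claim with probability $1$.

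I expect the main obstacle to be conceptual rather than computational: one must isolate why growing contexts are genuinely necessary. The elliptical potential bound (Lemma~\ref{lem:ellipticalpotential}) shows that under the boundedness regularity condition the summed squared widths are only $\tilde O(\log T)$, and in fixed dimension with bounded contexts each ``fresh'' direction can be exploited only finitely often --- so a bounded-context construction cannot keep $\epsilon_k$ large infinitely often. The construction therefore deliberately steps outside the boundedness assumption; this is consistent with (and complementary to) Theorem~\ref{thm:ufolsic_certs}, whose cumulative bound relies on exactly that assumption through the $\log(\xi^2_{x\fp}+\xi^2_{x\fr})$ term. The example thus shows that, absent a mechanism to shrink confidence in newly appearing scales/directions, the algorithm's non-decreasing ellipsoids permit repeated maximal certificates, ruling out any mistake IPOC bound.
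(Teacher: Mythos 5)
Your construction does not prove the proposition as stated, for a reason you yourself flag: the contexts $x\fr_k = 2^k$ are unbounded, whereas the paper's setting imposes $\|x\fr_k\|_2 \leq \xi_{x\fr}$ as a regularity condition of the problem class, and Proposition~\ref{prop:nomistakebound} asserts the existence of an MDP \emph{within} that class. The calculation inside your example is correct ($\|x\fr_k\|^2_{(N\fr_k)^{-1}} \to 3$, radius of order $\sqrt{k}$, hence $\phi_k \to \infty$ and $\epsilon_k = 1$ for all large $k$), but it only exhibits failure outside the problem class; that is logically compatible with Algorithm~\ref{alg:ofucontext} satisfying a mistake IPOC bound on every instance that does satisfy the regularity conditions, so the actual claim is left unproved.

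The deeper gap is your supporting argument that ``a bounded-context construction cannot keep $\epsilon_k$ large infinitely often'' --- this is false, and the paper's own proof is exactly such a construction. It uses a two-armed bandit with two fixed, alternating, bounded context directions (bandit A, where action 1 is optimal, and bandit B, where action 2 is optimal, with gap $\epsilon$). The width for action 2 under context A is essentially $\sqrt{\ln \det N_2}\,\|x_A\|_{N_2^{-1}} \wedge 1 = \sqrt{\bigl(\ln(\lambda+n_{A,2})+\ln(\lambda+n_{B,2})\bigr)/n_{A,2}} \wedge 1$: the ellipsoidal factor depends only on $n_{A,2}$ and freezes once the agent stops playing action 2 in A, while the log-determinant radius keeps growing because action 2 is played forever in B. Hence $\Qub(2)$ creeps back up to the cap, eventually exceeds $\Qub(1) \to (1+\epsilon)/2$, the agent replays the $\epsilon$-suboptimal arm in A, and by admissibility of the (here deterministic) bounds the certificate in that episode is at least $\epsilon$; this recurs infinitely often. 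Your appeal to the elliptical potential lemma fails on two counts. First, an $O(d \log T)$ bound on $\sum_k 1 \wedge \|x_k\|^2_{N_k^{-1}}$ still permits infinitely many exceedances of any fixed threshold, merely with vanishing frequency --- which is all the proposition needs, and is precisely why it coexists with the $\tilde O(\sqrt{T})$ cumulative bound of Theorem~\ref{thm:ufolsic_certs}. Second, the certificate width is the \emph{product} of the ellipsoidal factor and the radius $\sqrt{\ln \det N}$, and the radius grows without bound from observations accumulated in other directions; this pooling effect is the true meaning of ``non-decreasing ellipsoid confidence sets,'' is the effect your own unbounded-context example exploits, and is available with bounded contexts --- no violation of the regularity conditions is needed.
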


\begin{proof}
    Consider a two-armed bandit where the two-dimensional context is identical to the deterministic reward for both actions. The context alternates between $x_A := \begin{bmatrix}(1 + \epsilon) / 2 \\ (1 - \epsilon) / 2
    \end{bmatrix}$ and $x_B := \begin{bmatrix}(1 - \epsilon) / 2 \\ (1 + \epsilon) / 2
    \end{bmatrix}$. That means in odd-numbered episodes, the agent receives reward $\frac{1 + \epsilon}{2}$ for action $1$ and reward $\frac{1 - \epsilon}{2}$ for action $2$ (bandit A) and conversely in even-numbered episodes (bandit B).  Let $n_{A,i}$ and $n_{B, i}$ be the current number of times action $i$ was played in each bandit and $N_i = \operatorname{diag}(n_{A,i} + \lambda, n_{B, i} + \lambda)$ the covariance matrix.  One can show that
    the optimistic Q-value of action $2$ in bandit A is lower bounded as
    \begin{align}
        \Qub(2) \geq & \sqrt{\ln \det N_{2}} \|x_A\|_{N_2^{-1}} \wedge 1 \\
        = & \sqrt{\frac{\ln (\lambda + n_{A,2}) + \ln (\lambda + n_{B, 2})}{n_{A, 2}}}\wedge 1 .
        \label{eqn:bb1}
    \end{align}
    Assume now the agent stops playing action 2 in bandit A and playing action 1 in bandit B at some point. Then the denominator in Eq~\eqref{eqn:bb1} stays constant but the numerator grows unboundedly as $n_{B, 2} \rightarrow \infty$. That implies that $\Qub(2) \rightarrow 1$ but the optimistic Q-value for the other action $\Qub(1)  \rightarrow \frac{1 + \epsilon}{2} \leq 1$ approaches the true reward. Eventually $\Qub(2) > \Qub(1)$ and the agent will play the $\epsilon$-suboptimal action $2$ in bandit A again. Hence, Algorithm~\ref{alg:ofucontext} has to output infinitely many $\epsilon_k \geq \epsilon$.
\end{proof}

This negative result is due to the non-decreasing nature of the ellipsoid confidence intervals. It does not rule out alternative algorithms with mistake IPOC bounds for this setting, but they would likely require entirely different estimators and confidence bounds. 

\section{Additional Experimental Results}
\label{sec:experiemntalsetup}

\subsection{More Details on Experimental Results in Contextual Problems}
The results presented in the main paper are generated on the following MDP with side information. It has $\numS = 10$ states,  $\numA = 40$ actions, horizon of $H=5$, reward context dimension $d\fr=10$, and transition context dimension $d\fp = 1$. The transition context $x\fp_k$ is always constant $1$. We sample the reward parameters independently for all $s \in \statespace$, $a \in \actionspace$ and $i \in [d\fr]$ as
\begin{align}
    \theta\fr_{i, s, a} = X_{i,s,a} Y_{i,s,a}, \quad
    X_{i, s, a} \sim \operatorname{Bernoulli}(0.5), \quad
    Y_{i, s, a} \sim \operatorname{Unif}(0,1).
\end{align}
and the transition kernel for each $s \in \statespace, a \in \actionspace$ as
\begin{align}
    P(s,a) = \theta\fp_{s,a} \sim \operatorname{Dirichlet}(\alpha\fp)
\end{align}
where $\alpha\fp \in \RR^{\numS}$ with $\alpha\fp_i = 0.3$ for $i \in [\numS]$.
The reward context is again sampled from a Dirichlet distribution with parameter $\alpha\fr \in \RR^{d\fr}$ where $\alpha\fr_i = 0.01$ for $i \leq 4$ in the first $2$ million episodes and all other times $\alpha\fr_i = 0.7$. This shift in context distribution after $2$ million episodes simulates rare contexts becoming more frequent.

In addition, we applied Algorithm~\ref{alg:ofucontext} to randomly generated contextual bandit problems ($\numS = H = 1$) with $d\fr = 10$ dimensional context and $40$ actions.
We sample the reward parameters independently for all $s=1$, $a \in \actionspace$ and $i \in [d\fr]$ as
\begin{align}
    \theta\fr_{i, s, a} = X_{i,s,a} Y_{i,s,a}, \quad \quad X_{i, s, a} \sim \operatorname{Bernoulli}(0.9), \quad Y_{i, s, a} \sim \operatorname{Unif}(0,1).
\end{align}
The context in each episode is sampled from a Dirichlet distribution with parameter $\alpha \in \RR^{d\fr}$ where $\alpha_i = 0.7$ for $i \leq 7$ and $\alpha_i = 0.01$ for $i \geq 10$. This choice was made to simulate both frequent as well as a few rare context dimensions.
The \ofulsic algorithm was run for $8$ million episodes and we changed context, certificate and policy only every $1000$ episodes for faster experimentation. Figure~\ref{fig:bandit_res} shows certificates and optimality gaps of a representative run. Note that we sub-sampled the number of episodes shown for clearer visualization. 

Certificates and optimality gaps have a correlation of $0.88$ which confirms that certificates are informative about the policy's return. If one for example needs to intervene when the policy is more than $0.2$ from optimal (e.g., by reducing the price for that customer), then in more than $42\%$ 
of the cases where the certificate is above $0.2$, the policy is worse than $0.2$ suboptimal.

In both experiments, we use a slightly more complicated version of \ofulsic listed in Algorithm~\ref{alg:ofucontext_pen} which computes the optimistic and pessimistic Q estimates $\Qub$ and $\Qlb$ using subroutine ProbEstNorm in Algorithm~\ref{alg:l1shift}. For the sake of clarity, we presented a simplified version with the same guarantees in the main text. While this simplified version of \ofulsic does not leverage that the true transition kernel $P_{k}(s,a)$ has total mass $1$, Algorithm~\ref{alg:ofucontext_pen} adds this as a constraint (see Lemma~\ref{lem:probestnorm} below) similar to \citet{abbasi2014online}. This change yielded improved estimates empirically in our simulation. Note that this does not harm the theoretical properties. One can show the same cumulative IPOC bound for Algorithm~\ref{alg:ofucontext_pen} by slightly modifying the proof for Algorithm~\ref{alg:ofucontext}.
\begin{figure}[t]
    \centering
    \includegraphics[width=.5\textwidth]{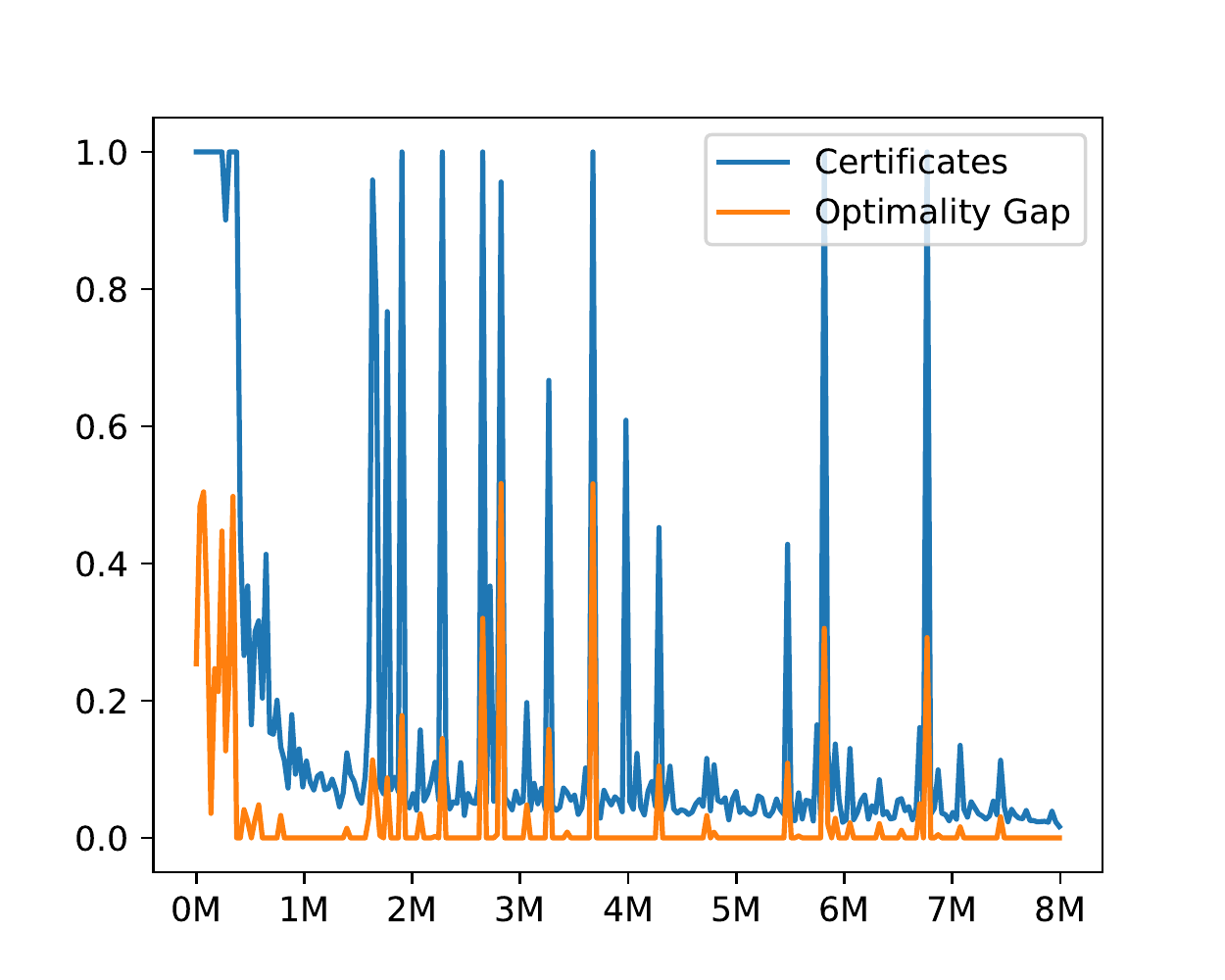}
    \caption{Results of \ofulsic for 8M episodes on a linear contextual bandit problem; certificates are shown in blue and the true (unobserved) optimality gap in orange for increasing number of episodes.}
    \label{fig:bandit_res}
\end{figure}

\begin{algorithm}[ht!]
\SetKwInOut{Inputa}{Input}
\Inputa{failure prob. $\delta \in (0,1]$, regularizer $\lambda > 0$}
$\forall s, s' \in \Scal, a \in \Acal, h \in [H]:$\\
$N\fp_{s,a} \gets \lambda I_{d\fp \times d\fp};\quad N\fr_{s,a} \gets \lambda I_{d\fr \times d\fr}$\;
$M\fr_{s,a} \gets \vec 0_{d\fr}; \quad M\fp_{s', s, a} \gets \vec 0_{d\fp}$\;
$\Vub_{H+1} \gets \vec 0_{\numS} \quad \Vlb_{H+1} \gets \vec 0_{\numS}\quad V^{\max}_{h} \gets (H-h+1)$\;
$\xi_{\theta\fr} \gets \sqrt{d}; ~ \xi_{\theta\fp} \gets \sqrt{d} \quad \delta' \gets \frac{\delta}{\numS (\numS\numA + \numA + H) }$\;
$\phi(N, x, \xi) \defeq \left[\sqrt{\lambda}\xi + \sqrt{\frac 1 2 \ln \frac{1 }{\delta'} + \frac 1 4 \ln \frac{\det N}{\det(\lambda I)}}\right]\|x\|_{N^{-1}}$
\label{lin:confsize_context2}
\;

\For{$k=1, 2, 3, \dots$}{
    Observe current contexts $x_k\fr$ and $x_k\fp$\;
    \tcc{estimate model with least squares}
    \For{$s,s' \in \Scal, a \in \Acal$}{
    $\hat \theta\fr_{s,a} \gets (N\fr_{s,a})^{-1} M\fr_{s,a}$\;
    $\hat r(s,a) \gets 0 \vee (x_k\fr)^\top \hat \theta\fr_{s,a} \wedge  1$\;
    $\hat \theta\fp_{s',s,a} \gets (N\fp_{s,a})^{-1} M\fp_{s',s,a}$\;
    $\hat P(s'|s,a) \gets 0 \vee (x_k\fp)^\top \hat \theta\fp_{s',s,a} \wedge 1$\;
    }
    \tcc{optimistic planning}
    \For{$h=H$ \KwTo $1$ \textrm{\textbf{and}} $s \in \statespace$}
        {
            \For{$a \in \actionspace$}
            {
            $\psiub_h(s,a) \gets 
            \phi(N\fr_{s,a},x\fr_k, \xi_{\theta\fr})$\;
            $\psilb_h(s,a) \gets 
            \phi(N\fr_{s,a},x\fr_k, \xi_{\theta\fr})$\;
            $\Qub_h(s,a) \gets \hat r(s,a) + \operatorname{ProbEstNorm}(\hat P(s,a), \phi(N\fp_{s,a},x\fp_k, \xi_{\theta\fp}), \Vub_{h+1}) + \psiub_h(s,a)$\;
            $\Qlb_h(s,a) \gets \hat r(s,a) -\operatorname{ProbEstNorm}(\hat P(s,a), \phi(N\fp_{s,a},x\fp_k, \xi_{\theta\fp}), -\Vlb_{h+1}) - \psilb_h(s,a)$\;
            \tcp{clip values}
            $\Qub_h(s,a) \gets 0 \vee \Qub_h(s,a) \wedge V^{\max}_h$\;
            $\Qlb_h(s,a) \gets 0 \vee \Qlb_h(s,a) \wedge V^{\max}_h$\;
             }
 $\pi_k(s, h) \gets \argmax_{a} \Qub_h(s, a)$\;
 $\Vub_h(s) \gets \Qub_h(s, \pi_k(s, t))$\;
 $\Vlb_h(s) \gets \Qlb_h(s, \pi_k(s, t)) $\;
}
    \tcc{Execute policy for one episode}
    $s_{k,1} \sim P_0$\; 
$\epsilon_k \gets \Vub_1(s_{k,1}) - \Vlb_1(s_{k,1})$\;
\textbf{output policy $\pi_k$ with certificate $\epsilon_k$}\;
    \For{$h=1$ \KwTo $H$}
    {
        $a_{k,h} \gets \pi_{k}(s_{k,h}, h)$\;
        $r_{k,h} \sim P_R(s_{k,h}, a_{k,h});\quad s_{k,h+1} \sim P(s_{k,h},a_{k,h})$\;
        \tcp{Update statistics}
        $N\fp_{s_{k,h},a_{k,h}} \gets N\fp_{s_{k,h}, a_{k,h}} + x_k\fp (x_k\fp)^\top$\;
        $N\fr_{s_{k,h},a_{k,h}} \gets N\fr_{s_{k,h}, a_{k,h}} + x_k\fr (x_k\fr)^\top$\;
        $M\fp_{s_{k,h+1}, s_{k,h}, a_{k,h}} \gets M\fp_{s_{k,h+1}, s_{k,h}, a_{k,h}} + x_k\fp$\;
        $M\fp_{s_{k,h}, a_{k,h}} \gets M\fp_{s_{k,h}, a_{k,h}} + x_k\fp$\;
    }
}

\caption{\ofulsic algorithm with probability mass constraints}
\label{alg:ofucontext_pen}
\end{algorithm}

\begin{algorithm}[t!]
\SetKwInOut{Inputa}{Input}
\SetKwInOut{Reta}{Return}

\Inputa{estimated probability vector $\hat p \in [0,1]^{\numS}$}
\Inputa{confidence width $\psi \in \RR_{+}$}
\Inputa{value vector $v \in \RR^{\numS}$}
Compute sorting $\sigma$ of $v$ so that $v_{\sigma_i} \geq v_{\sigma_j}$ for all $i \leq j$\;
$p \gets \hat p - \psi \vee 0$\;
$m \gets p^\top \one$\;
$r \gets 0$\;
\For{$i \in [\numS]$}
{
    $s \gets m \wedge ((\hat p_{\sigma_i} + \psi \wedge 1) - p_{\sigma_i})$\;
    $m \gets m - s$\;
    $r \gets r + v_{\sigma_i} (p_{\sigma_i} + s)$\;
}
\Reta{$r$}

\caption{ProbEstNorm($\hat p$, $\psi$, $v$) function to compute normalized estimated expectation of $v$}
\label{alg:l1shift}
\end{algorithm}

\begin{lemma}
\label{lem:probestnorm}
Let $\hat p \in [0, 1]^d$, $\psi \geq 0$ and $v \in \RR^d$ and define
$\Pcal_{\hat p} = \{ p \in [0,1]^d ~ : ~\leq \hat p - \psi \one_d \leq  p \leq \hat p + \psi \one_d  ~\wedge ~ \|p\|_1 = 1 \}$. Then, as long as $\Pcal_{\hat p} \neq \emptyset$, the value returned by Algorithm~\ref{alg:l1shift} satisfies
\begin{align}
    \operatorname{ProbEstNorm}(\hat p, \psi, v) =& \max_{p \in \Pcal_{\hat p}} p^\top v\\
    -\operatorname{ProbEstNorm}(\hat p, \psi, -v) =& \min_{p \in \Pcal_{\hat p}} p^\top v
\end{align}
and for any two $p, \tilde p \in \Pcal_{\hat p}$ it holds that $|p^\top v - \tilde p^\top v| \leq \|v\|_1 \| p - \tilde p \|_\infty = 2 \psi \|v\|_1$.
\end{lemma}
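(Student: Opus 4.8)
The plan is to recognize $\Pcal_{\hat p}$ as the intersection of the box $\prod_{j}[\ell_j, u_j]$, with $\ell_j = (\hat p_j - \psi) \vee 0$ and $u_j = (\hat p_j + \psi) \wedge 1$, and the single linear equality $\|p\|_1 = 1$. Maximizing the linear functional $p \mapsto p^\top v$ over this polytope is a continuous-knapsack problem whose optimum is attained by greedy water-filling, which is exactly what Algorithm~\ref{alg:l1shift} computes. I would first establish feasibility and identify the algorithm's output, then prove optimality, and finally dispose of the two easy corollaries.

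First I would argue that the point $p^*$ constructed by the algorithm lies in $\Pcal_{\hat p}$ and satisfies $r = (p^*)^\top v$: every coordinate starts at its lower bound $\ell_j$ and, processing coordinates in decreasing value order $v_{\sigma_1} \geq \dots \geq v_{\sigma_d}$, is raised toward $u_{\sigma_i}$ until the total mass reaches $1$. The $\wedge$ in the update keeps each coordinate in $[\ell_{\sigma_i}, u_{\sigma_i}]$, and the hypothesis $\Pcal_{\hat p} \neq \emptyset$ forces $\sum_j \ell_j \leq 1 \leq \sum_j u_j$, so the deficit $1 - \sum_j \ell_j$ fits inside the available slack $\sum_j (u_j - \ell_j)$ and the procedure terminates with $\|p^*\|_1 = 1$. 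This exhibits a threshold index $t$ with $p^*_{\sigma_i} = u_{\sigma_i}$ for $i < t$, $p^*_{\sigma_i} = \ell_{\sigma_i}$ for $i > t$, and $\ell_{\sigma_t} \leq p^*_{\sigma_t} \leq u_{\sigma_t}$.

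For optimality, given an arbitrary $p \in \Pcal_{\hat p}$ I would use that both $p$ and $p^*$ have unit mass, so $\sum_i (p_{\sigma_i} - p^*_{\sigma_i}) = 0$, and write
\[
p^\top v - (p^*)^\top v = \sum_i (p_{\sigma_i} - p^*_{\sigma_i})(v_{\sigma_i} - v_{\sigma_t}).
\]
For $i < t$ we have $p^*_{\sigma_i} = u_{\sigma_i} \geq p_{\sigma_i}$ and $v_{\sigma_i} \geq v_{\sigma_t}$, so the summand is nonpositive; for $i > t$ we have $p^*_{\sigma_i} = \ell_{\sigma_i} \leq p_{\sigma_i}$ and $v_{\sigma_i} \leq v_{\sigma_t}$, again nonpositive; the $i = t$ term vanishes. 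Hence $p^\top v \leq (p^*)^\top v$, giving the first identity. The second identity then follows immediately by applying the first to $-v$, since $\Pcal_{\hat p}$ is independent of $v$ and $\max_p p^\top(-v) = -\min_p p^\top v$.

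The gap bound is routine: for $p, \tilde p \in \Pcal_{\hat p}$, H\"older's inequality gives $|p^\top v - \tilde p^\top v| = |(p - \tilde p)^\top v| \leq \|p - \tilde p\|_\infty \|v\|_1$, and since both points lie in the box $[\hat p - \psi, \hat p + \psi]$ each coordinate difference is at most $2\psi$, so $\|p - \tilde p\|_\infty \leq 2\psi$ and the stated $2\psi\|v\|_1$ follows. The main obstacle is the optimality step: getting the rearrangement and the sign bookkeeping around the threshold index $t$ airtight, together with the feasibility claim in the identification step, where $\Pcal_{\hat p} \neq \emptyset$ must be used to guarantee the water-filling reaches unit mass rather than exhausting the available slack prematurely. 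Once Part~1 is in place, Parts~2 and~3 are essentially one-liners.
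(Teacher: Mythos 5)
Your proof is correct, but there is no paper proof to compare it against: Lemma~\ref{lem:probestnorm} is stated in the appendix without any accompanying proof, so your exchange argument supplies a justification the paper omits. The three steps are sound: feasibility via $\Pcal_{\hat p} \neq \emptyset \Leftrightarrow \sum_j \ell_j \leq 1 \leq \sum_j u_j$ (with $\ell_j = (\hat p_j - \psi) \vee 0$, $u_j = (\hat p_j + \psi) \wedge 1$), the threshold structure of the water-filling output $p^*$, and the rearrangement identity $p^\top v - (p^*)^\top v = \sum_i (p_{\sigma_i} - p^*_{\sigma_i})(v_{\sigma_i} - v_{\sigma_t}) \leq 0$, whose sign bookkeeping checks out on both sides of $t$ and degrades gracefully in the edge cases where the deficit is zero or every coordinate saturates at its upper bound. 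The reductions for the minimum (apply the maximum identity to $-v$) and the H\"older bound with $\|p - \tilde p\|_\infty \leq 2\psi$ are as routine as you claim.

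One point you should make explicit rather than pass over: your argument proves the lemma for the algorithm you \emph{describe}, in which the loop variable $m$ is the remaining deficit $1 - \sum_j \ell_j$. The pseudocode of Algorithm~\ref{alg:l1shift} as printed initializes $m \gets p^\top \one$, i.e., $m = \sum_j \ell_j$, and under that literal reading the output vector need not have unit mass: for $\hat p = (1/2, 1/2)$ and $\psi = 0.1$ the procedure raises both coordinates to $0.6$, giving total mass $1.2$ and a returned value strictly larger than $\max_{p \in \Pcal_{\hat p}} p^\top v$ whenever $v > 0$. So the lemma is false for the pseudocode as written, and the initialization is evidently a typo for $m \gets 1 - p^\top \one$. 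Your reading is the only one under which the statement holds, but a careful write-up should flag this correction rather than silently assume it, since the identification of the algorithm's output with the greedy point $p^*$ is precisely where the proof touches the pseudocode.
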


\subsection{Empirical Comparison of Sample-Efficiency in Tabular Environments}
\label{sec:tabular_experiments}
The simulation study above and in Section~\ref{sec:experiments} demonstrates that policy certificates can be a useful predictor for the (expected) performance of the algorithm in the next episode and the comparison of theoretical guarantees in Table~\ref{tab:tabbounds} indicates the improved sample-efficiency of the tabular algorithm \ulcr compared to existing approaches. However, do these tighter regret and PAC bounds indeed translate to an improved sample efficiency empirically? To answer this question, we compare \ulcr against \texttt{UCBVI-BF} and \texttt{UBEV}, the methods with tightest regret and PAC bounds respectively and which we expect to perform the best among existing approaches.

We evaluate the methods on tabular MDPs which are randomly generated as follows: With probability $0.85$ the average immediate reward $r(s,a)$ for any $(s,a)$ is $0$ and with probability $0.15$ it is drawn from a uniform distribution $r(s,a) \sim \textrm{Unif}[0,1]$. The transition kernel for each $P(s,a) \sim \textrm{Dirichlet}(0.1)$ are drawn from a Dirichlet distribution with parameter $\alpha=0.1$. 

Figure~\ref{fig:tabular_experiments} shows the performance of each method on MDPs with $\numS=20$ states and $\numA=4$ actions. The left plot shows the sum of rewards of each algorithm (averaged over a window of $1000$ episodes) on a problem with small horizon $H=10$. \ulcr\footnote{We use Algorithm~\ref{alg:ulcr_morecomplicated} with more refined bonuses compared to the simplified version in the main text.} converges to the optimal policy much faster than both \texttt{UCBVI-BF} and \texttt{UBEV}. Note that we adjusted \texttt{UBEV} to time-independent MDPs (the rewards and transition kernel do not depend on the time index within an episode) to make the comparison fair. On problems with larger horizon $H=50$ the performance gap between \ulcr and the competitors increases. Hence, even for problems of moderate horizon length compared $H \leq \numS \numA$, \ulcr outperforms \texttt{UCBVI-BF} despite both methods having minimax-optimal regret bounds (in the dominant term) in this case. This difference can likely be attributed to the tighter optimism bonuses of \ulcr as opposed to those of \texttt{UCBVI-BF} which are derived from an explicit regret-like bound with several additional approximations.

\begin{figure}[t]
    \centering
    \includegraphics[width=0.45\textwidth]{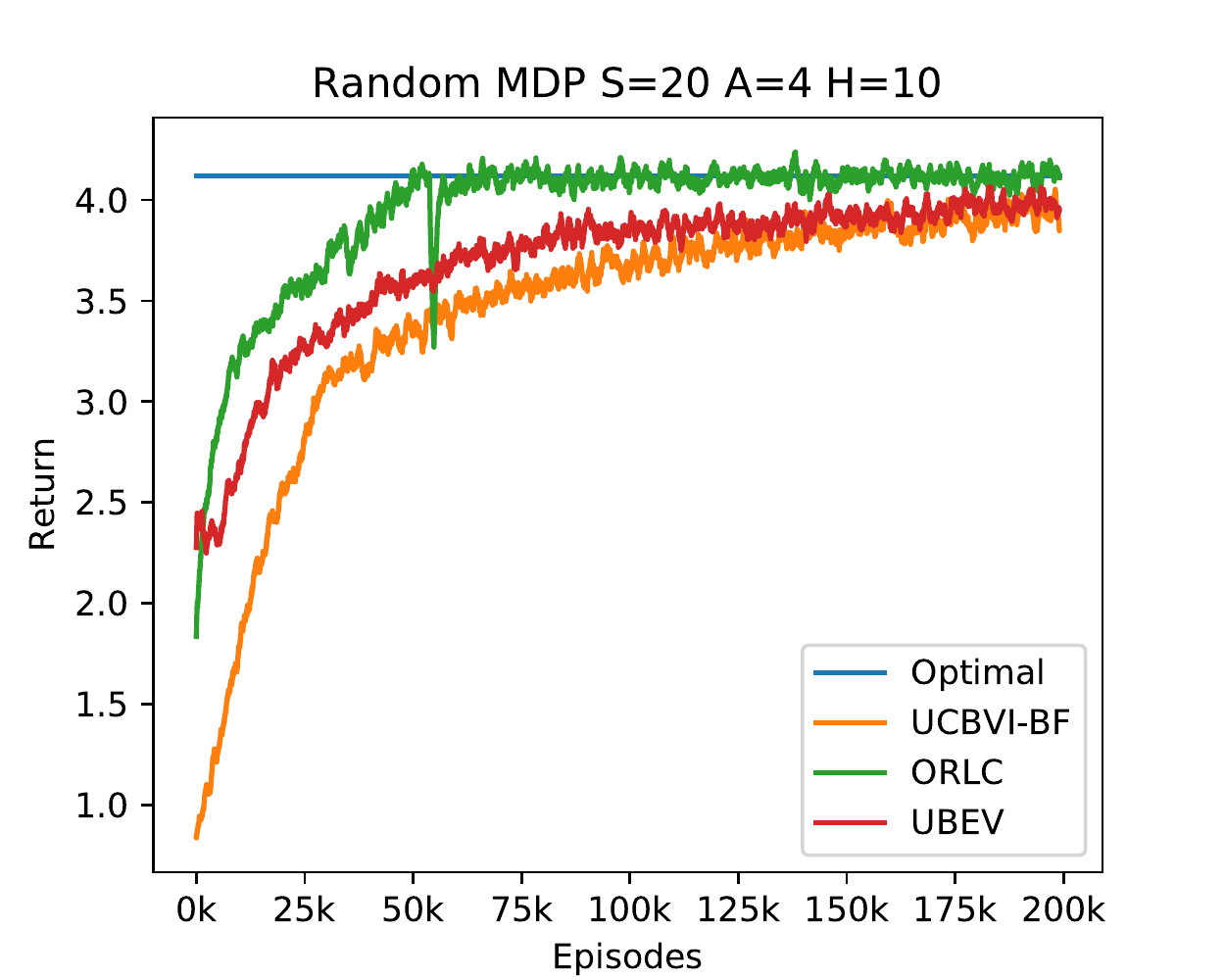}
        \includegraphics[width=0.45\textwidth]{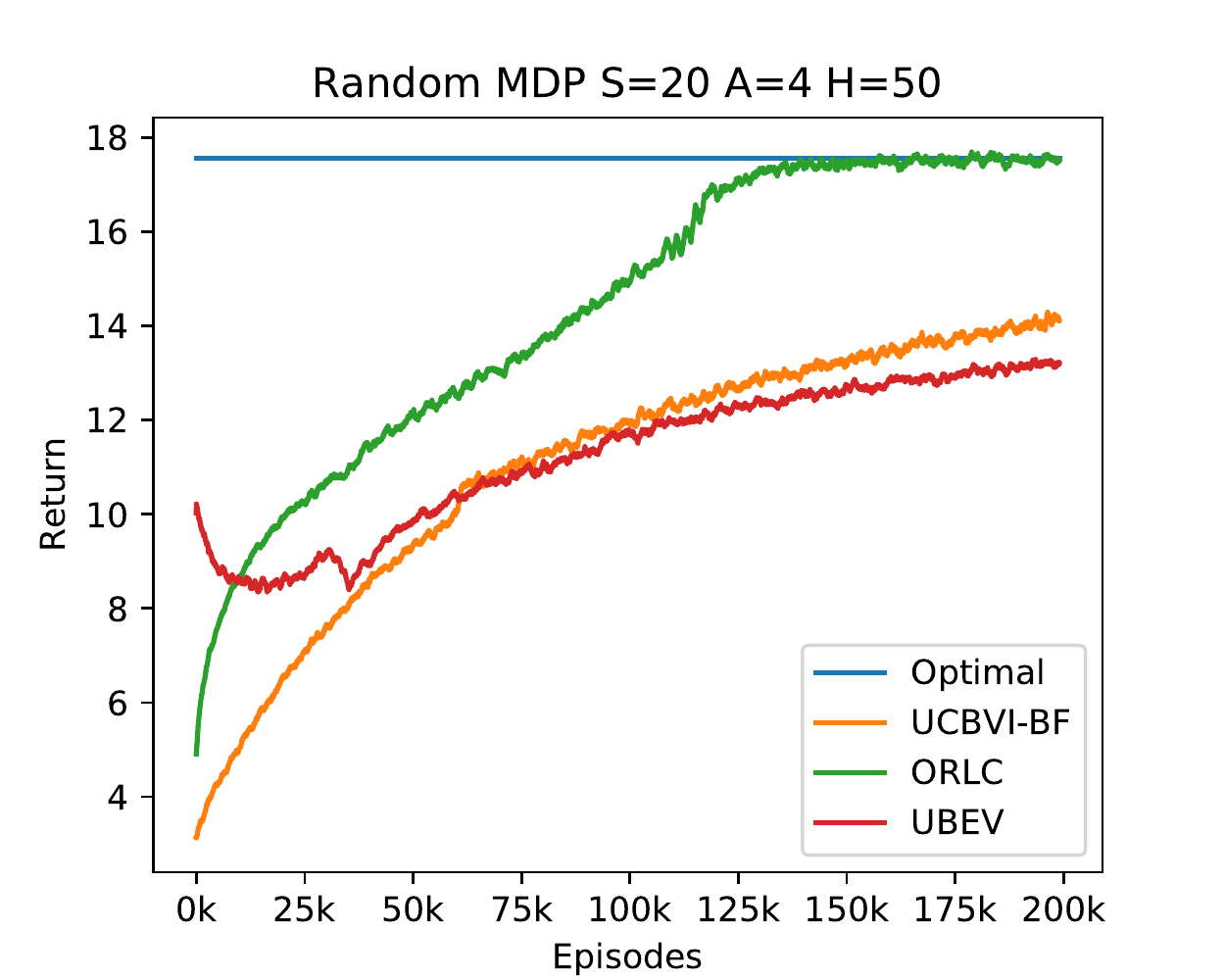}
    \caption{Experimental comparison of \ulcr and existing approaches. The graph show the achieved sum of rewards per episode averaged over $1000$ episodes each to generate smoothed curves. These results show representative single runs of each method on the same MDPs. Results are consistent across different random MDPs and different runs of the methods.}
    \label{fig:tabular_experiments}
\end{figure}

\subsection{Policy Certificates in Problems with no Context}
The simulation study above and in Section~\ref{sec:experiments} demonstrate that policy certificates can be a useful predictor for the optimality gap of the algorithm in the next episode. However, the experimental results only consider problems with context and might therefore wonder whether simple baselines that only consider a certainty measure over contexts can produce similar results. We would like to emphasize that this is not the case as such baselines can only detect performance drops due to unfamiliar contexts but are blind to performance drops due to exploration on familiar contexts. To illustrate this point, consider a multi-armed bandit problem without context. The algorithm periodically (and less and less frequently) plays suboptimal arms until it can be sufficiently certain that these arms cannot be optimal. Consider Figure~\ref{fig:mab_experiment} where we plot the optimality gaps and optimality certificates of \ulcr on a multi-armed bandit problem with $\numA = 100$ arms. We see occasional performance drop and as in the contextual case, our algorithm's certificates is able to predict them. A baseline that is only measuring the familiarity of context would completely fail in this case.

\begin{figure}[t]
    \centering
    \includegraphics[width=0.45\textwidth]{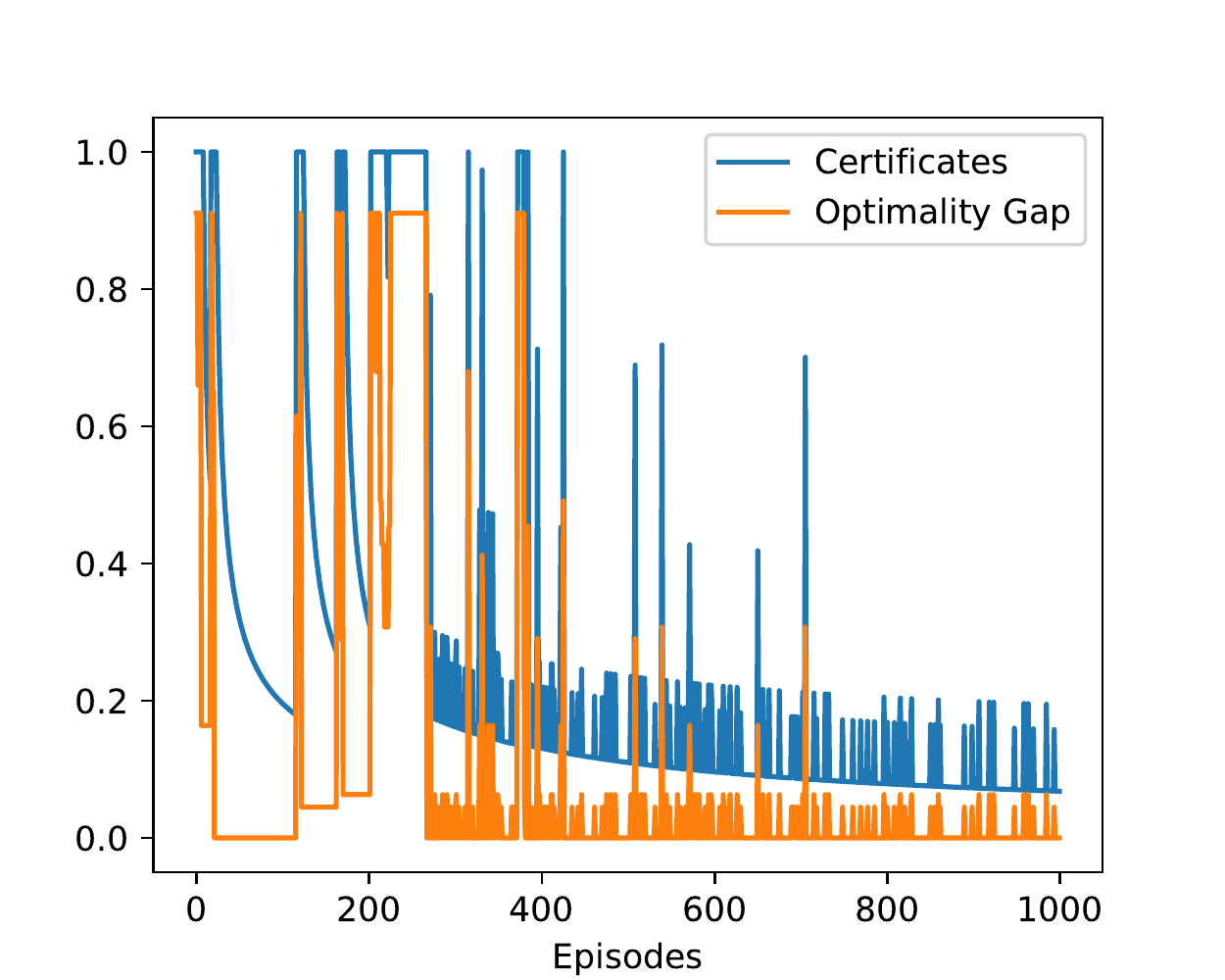}
    \caption{Performance and certificates of \ulcr on a multi-armed bandit problem with $100$ arms, generated randomly in the same way as tabular MDP instances above. Only every $10$th episode is plotted to improve visibility of individual spikes.}
    \label{fig:mab_experiment}
\end{figure}
\end{document}